\documentclass{article} 
\usepackage[accepted]{tmlr}


\usepackage{hyperref}
\usepackage{url}
\usepackage{amsfonts}
\usepackage{amssymb}
\usepackage{pifont}
\usepackage{smile}
\usepackage{mathtools}

\usepackage[ruled,vlined]{algorithm2e}
\usepackage{algorithmic}
\usepackage{adjustbox}

\newcommand{\diff}{\mathrm{d}}

\renewcommand{\phi}{\psi}

\usepackage{bbm}
\usepackage{enumitem,kantlipsum}
\usepackage{float}
\usepackage{graphicx}
\usepackage{subfigure}

\usepackage{pbox}
\usepackage{caption}
\usepackage{stmaryrd}
\usepackage{tikz} 
\usepackage{listings} 
\usepackage{xcolor} 
\lstset{ 
    backgroundcolor=\color{white}, 
    commentstyle=\color{green}, 
    keywordstyle=, 
    numberstyle=\tiny\color{gray}, 
    stringstyle=\color{purple}, 
    basicstyle=\ttfamily\footnotesize, 
    breakatwhitespace=false, 
    breaklines=true, 
    captionpos=b, 
    keepspaces=true, 
    numbers=left, 
    numbersep=5pt, 
    showspaces=false, 
    showstringspaces=false, 
    showtabs=false, 
    tabsize=2, 
    language=Python 
}

\usepackage{array}
\newcolumntype{L}[1]{>{\raggedright\let\newline\\\arraybackslash\hspace{0pt}}m{#1}}
\newcolumntype{C}[1]{>{\centering\let\newline\\\arraybackslash\hspace{0pt}}m{#1}}
\newcolumntype{R}[1]{>{\raggedleft\let\newline\\\arraybackslash\hspace{0pt}}m{#1}}

\definecolor{mygray}{gray}{0.95}

\title{Diff-Instruct++: Training One-step Text-to-image Generator Model to Align with Human Preferences}

\author{\name Weijian Luo\thanks{Alternative email: pkulwj1994@icloud.com.} \email luoweijian@stu.pku.edu.cn \\
      \addr Peking University
}

\begin{document}

\maketitle

\begin{abstract}
One-step text-to-image generator models offer advantages such as swift inference efficiency, flexible architectures, and state-of-the-art generation performance. In this paper, we study the problem of aligning one-step generator models with human preferences for the first time. Inspired by the success of reinforcement learning using human feedback (RLHF), we formulate the alignment problem as maximizing expected human reward functions while adding an Integral Kullback-Leibler divergence term to prevent the generator from diverging. By overcoming technical challenges, we introduce Diff-Instruct++ (DI++), the first, fast-converging and image data-free human preference alignment method for one-step text-to-image generators. We also introduce novel theoretical insights, showing that using CFG for diffusion distillation is secretly doing RLHF with DI++. Such an interesting finding brings understanding and potential contributions to future research involving CFG. In the experiment sections, we align both UNet-based and DiT-based one-step generators using DI++, which use the Stable Diffusion 1.5 and the PixelArt-$\alpha$ as the reference diffusion processes. The resulting DiT-based one-step text-to-image model achieves a strong Aesthetic Score of 6.19 and an Image Reward of 1.24 on the COCO validation prompt dataset. It also achieves a leading Human preference Score (HPSv2.0) of 28.48, outperforming other open-sourced models such as Stable Diffusion XL, DMD2, SD-Turbo, as well as PixelArt-$\alpha$. Both theoretical contributions and empirical evidence indicate that DI++ is a strong human-preference alignment approach for one-step text-to-image models. The homepage of the paper is: \url{https://github.com/pkulwj1994/diff_instruct_pp}.
\end{abstract}

\section{Introductions}
\vspace{-3mm}
\begin{figure}
\centering
\includegraphics[width=1.0\textwidth]{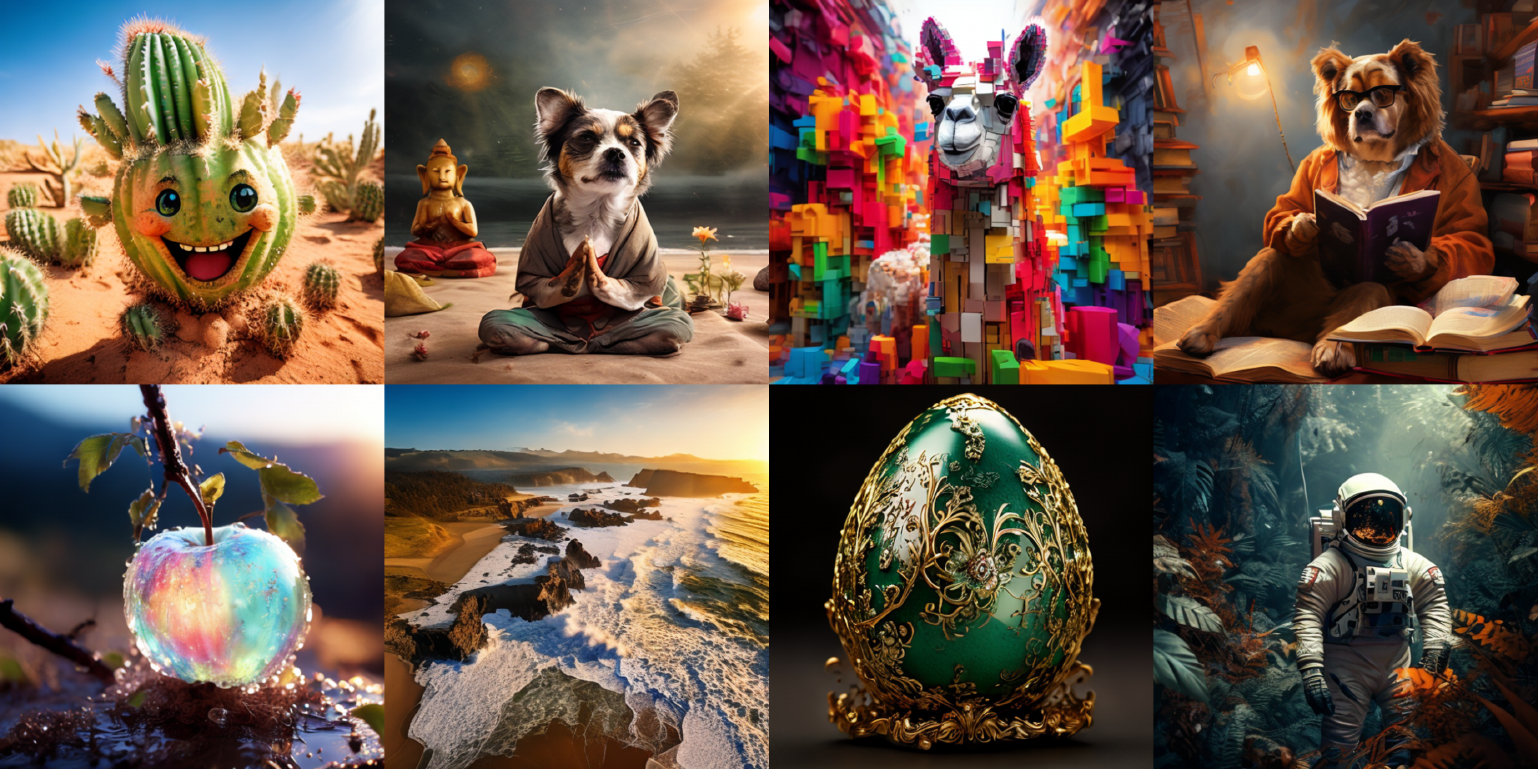}
\caption{Images generated by a one-step text-to-image generator that has been aligned with human preferences using Diff-Instruct++. We put the prompt in Appendix \ref{app:prompts_demo}.}
\vspace{-7mm}
\label{fig:t2i}
\end{figure}
In recent years, deep generative models have achieved remarkable successes across various data generation and manipulation applications \citep{karras2020analyzing,karras2022edm,nichol2021improved,oord2016wavenet,ho2022video,poole2022dreamfusion,hoogeboom2022equivariant,kim2021guidedtts,tashiro2021csdi,kingma2018glow,chen2019residual,meng2021sdedit, Couairon2022DiffEditDS,zhang2023enhancing,luodata,xue2023sasolver,luo2023training,zhang2023purify++,feng2023lipschitz,dengvariational,luo2024entropy,geng2024consistency,wang2024integrating,pokle2022deep}. These models have notably excelled in producing high-resolution, text-conditional models such as images \citep{rombach2022high,saharia2022photorealistic,ramesh2022hierarchical,ramesh2021zero,luo2024onestep} and other modalities with different applications\citep{videoworldsimulators2024,kondratyuk2023videopoet,evans2024stable,luo2024entropy}, pushing the boundaries of Artificial Intelligence Generated Content. 

Among the spectrum of deep generative models, one-step generators have emerged as a particularly efficient and possibly best performing \citep{zheng2024diffusion,kim2024pagoda,kang2023gigagan,sauer2023stylegan} generative model. Briefly speaking, a one-step generator uses a neural network to directly transport some latent variable to generate an output sample. Recently, there have been many fruitful successes in training one-step generator models by distilling from pre-trained diffusion models (aka, Diffusion Distillation \citep{luo2023comprehensive}) in domains of image generations \citep{salimans2022progressive,Luo2023DiffInstructAU,geng2023onestep,song2023consistency,kim2023consistency,song2023improved,zhou2024score}, text-to-image synthesis \citep{meng2022distillation,gu2023boot,nguyen2023swiftbrush,luo2024onestep,luo2023latent,song2024sdxs,liu2024scott,yin2023one}, data manipulation \citep{parmar2024one}, etc. 

However, current one-step text-to-image generators face several limitations, including insufficient adherence to user prompts, suboptimal aesthetic quality, and even the generation of toxic content. These issues arise because the generator models are not aligned with human preferences. {In this paper, we study the problem of aligning one-step generator models with human preferences for the first time}. We achieve substantial progress by training generator models to maximize human preference reward. Inspired by the success of reinforcement learning using human feedback (RLHF) \citep{ouyang2022training} in aligning large language models, we formulate the alignment problem as a maximization of the expected human reward function with an additional regularization term to some reference diffusion model. By addressing technical challenges, we obtain effective loss functions and formally introduce Diff-Instruct++, an image data-free method to train one-step text-to-image generators to follow human preferences. 

In the experiment part, we demonstrate the strong compatibility of Diff-Instruct++ on both the UNet-based diffusion model and one-step generator such as Stable Diffusion 1.5, and the DiT-based \citep{peebles2022scalable} diffusion and generator such as PixelArt-$\alpha$ \citep{Chen2023PixArtFT}. We first pre-train a one-step generator model using Diff-Instruct \citep{Luo2023DiffInstructAU}, initialized with Stable Diffusion 1.5 and PixelArt-$\alpha$. We name this model an unaligned base generator model (base model for short). Next, we align the base model using DI++ with an off-the-shelf Image Reward \citep{xu2023imagereward} model using different configurations, resulting in various human-preferred one-step text-to-image models. 

The alignment process with DI++ significantly improves the generation quality of the one-step generator model with minimum computational cost. To evaluate our models from different perspectives, we conduct both qualitative and quantitative evaluations of aligned models with different alignment settings. In the quantitative evaluation, we evaluate the model with several commonly used quality metrics such as Huma-preference Score (HPSv2.0)\citep{hpsv2}, the Aesthetic score \citep{laionaes}, the Image Reward \citep{xu2023imagereward}, and the PickScore \citep{pickscore} and the CLIP score on the same 1k prompts from MSCOCO-2017 validation datasets. Our main findings are: 1) the best DiT-based one-step text-to-image model achieves a leading HPSv2.0 score of \textbf{28.48}; 
2) The aligned models outperform the unaligned ones with significant margins in terms of human preference metrics; 3) The alignment cost is cheap, and the convergence is fast; 4) The aligned model still makes simple mistakes.

We will discuss these findings in Section \ref{sec:analyze} in detail. We also establish the theoretical connections of DI++ with diffusion distillation methods, as well as the classifier-free guidance \citep{ho2022classifier}. In Theorem \ref{thm:cfg_rlhf} in Section \ref{sec:objective}, we show an interesting finding that \emph{the well-known classifier-free guidance is secretly doing RLHF according to an implicitly-defined reward function.} Therefore, diffusion distillation with CFG can be unified within DI++. These theoretical findings not only help understand the behavior of classifier-free guidance but also bring new tools for human preference alignment for text-to-image models.

In Section \ref{sec:limitations}, we discuss the potential shortcomings of DI++, showing that even aligned with human reward, the one-step model still makes simple mistakes sometimes. Imperfect human reward and insufficient hyperparameter tunings possibly cause this issue. 

\begin{figure}
\centering
\includegraphics[width=1.0\textwidth]{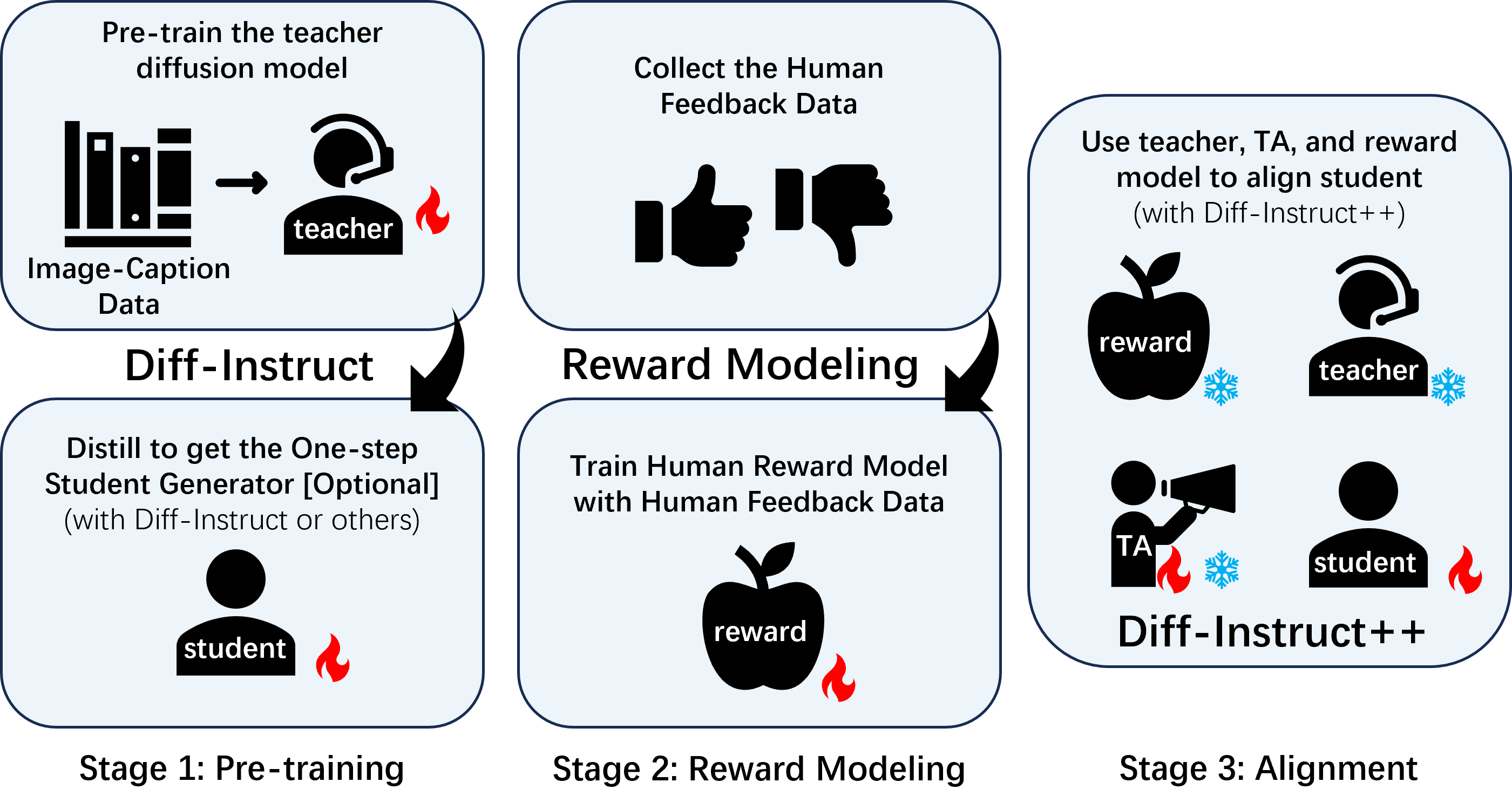}
\caption{A demonstration of three stages for training a one-step text-to-image generator model that is aligned with human preference. The pre-training stage (\textbf{the leftmost column}) pre-trains the reference diffusion model as well as the one-step generator. The reward modeling stage (\textbf{the middle column}) trains the reward model using human preference data. The alignment stage (\textbf{the rightmost column}) uses a pre-trained reference diffusion model, the reward model, and a TA diffusion model to align the one-step generator with human preference.}
\label{fig:demo}
\end{figure}

\section{Preliminary}\label{sec:pre}
\paragraph{Diffusion Models.} 
In this section, we introduce preliminary knowledge and notations about diffusion models. Assume we observe data from the underlying distribution $q_d(\bx)$. 
The goal of generative modeling is to train models to generate new samples $\bx\sim q_d(\bx)$. Under mild conditions, the forward diffusion process of a diffusion model can transform any initial distribution $q_{0}=q_d$ towards some simple noise distribution, 
\begin{align}\label{equ:forwardSDE}
    \diff \bx_t = \bm{F}(\bx_t,t)\mathrm{d}t + G(t)\diff \bm{w}_t,
\end{align}
where $\bm{F}$ is a pre-defined vector-valued drift function, $G(t)$ is a pre-defined scalar-value diffusion coefficient, and $\bm{w}_t$ denotes an independent Wiener process. 
A continuous-indexed score network $\bm{s}_\varphi (\bx,t)$ is employed to approximate marginal score functions of the forward diffusion process \eqref{equ:forwardSDE}. The learning of score networks is achieved by minimizing a weighted denoising score matching objective \citep{vincent2011connection, song2021scorebased},
\begin{align}\label{def:wdsm}
    \mathcal{L}_{DSM}(\varphi) = \int_{t=0}^T \lambda(t) \mathbb{E}_{\bx_0\sim q_{0},\atop \bx_t|\bx_0 \sim q_t(\bx_t|\bx_0)} \|\bm{s}_\varphi(\bx_t,t) - \nabla_{\bx_t}\log q_t(\bx_t|\bx_0)\|_2^2\mathrm{d}t.
\end{align}
Here the weighting function $\lambda(t)$ controls the importance of the learning at different time levels and $q_t(\bx_t|\bx_0)$ denotes the conditional transition of the forward diffusion \eqref{equ:forwardSDE}. 
After training, the score network $\bm{s}_{\varphi}(\bx_t, t) \approx \nabla_{\bx_t} \log q_t(\bx_t)$ is a good approximation of the marginal score function of the diffused data distribution. 

\paragraph{Reinforcement Learning using Human Feedback.} Reinforcement learning using human feedback \citep{christiano2017deep,ouyang2022training} (RLHF) is originally proposed to incorporate human feedback knowledge to improve large language models (LLMs). Let $p_\theta(\bx|\bm{c})$ be a large language model's output distribution, where $\bm{c}$ is the input prompt that is randomly sampled from a prompt dataset $\mathcal{C}$, and the $\bx$ is the generated responses. Let $r(\bx, \bm{c})$ be a scalar reward model that probably has been trained with human feedback data and thus can measure the human preference on an image-prompt pair $(\bx,\bm{c})$. Let $p_{ref}(\bx|\bm{c})$ be some reference LLM model. The RLHF method trains the LLM to maximize the human reward with a Kullback-Leibler divergence regularization, which is equivalent to minimizing:
\begin{align}\label{eqn:rlhf1}
    \mathcal{L}(\theta) = \mathbb{E}_{\bm{c}\sim \mathcal{C}, \atop \bx\sim p_\theta(\bx|\bm{c})} \big[-r(\bx, \bm{c}) \big] + \beta \mathcal{D}_{KL}(p_\theta(\bx|\bm{c}), p_{ref}(\bx|\bm{c}))
\end{align}
The KL divergence regularization term lets the model be close to the reference model thus preventing it from diverging, while the reward term encourages the model to generate outputs with high rewards. After the RLHF, the model will be aligned with human preference. 

\paragraph{One-step Text-to-image Generator Model.}
A one-step \citep{goodfellow2014generative,Luo2023DiffInstructAU,yin2023one,zhou2024score,luo2024onestep} text-to-image generator model is a neural network $g_\theta(\cdot|\cdot)$, that can turn an input latent variable $\bz\sim p_z(\bz)$ and an input prompt $\bm{c}$ to a generate image $\bx$ (or some latent vector before decoding as the case of latent diffusion models \citep{rombach2022high}) with a single neural network forward inference: $\bx = g_\theta(\bz|\bm{c})$. Compared with diffusion models, the one-step generator model has advantages such as fast inference speed and flexible neural architectures.  

\section{Human-preference Alignment of One-step Text-to-image Models}
In this section, we introduce how to align one-step text-to-image generator models and human preferences using Diff-Instruct++. In Section \ref{sec:objective}, we introduce the formulation of the alignment problem and then identify the alignment objective. After that, we address technical challenges and propose the Theorem \ref{thm:rlhf} and Theorem \ref{thm:rlhf_ikl}, which set the theoretical foundation of DI++ through the lens of reward maximization. Interestingly, we also find that the diffusion distillation using classifier-free guidance is secretly doing RLHF with DI++. Based on the theoretical arguments in Section \ref{sec:objective}, we formally introduce the practical algorithm of DI++ in Section \ref{sec:practial_algo}, and give an intuitive understanding of the algorithm as an education process that includes a teacher diffusion model, a teaching assistant diffusion model, and a student one-step generation. 

\subsection{The Alignment Objective}\label{sec:objective}
\paragraph{The Problem Formulation.}
We consider the text-to-image generation task. Other conditional generation applications share a similar spirit. Assume $\bx$ is an image and $\bm{c}$ is a text prompt that is sampled from some prompt dataset $\mathcal{C}$. {The basic setting is that we have a one-step generator $g_\theta(\cdot|\cdot)$, which can transport a prior latent vector $\bz \sim p_z$ to generate an image based on input text prompt $\bm{c}$: $\bx = g_\theta(\bz|\bm{c})$.} We use the notation $p_\theta(\bx|\bm{c})$ to denote the distribution induced by the generator. We also have a reward model $r(\bx, \bm{c})$ which represents the human preference on a given image-prompt pair $(\bx, \bm{c})$. 

Inspired by the success of reinforcement learning using human feedback \citep{ouyang2022training} in fine-tuning large language models such as ChatGPT \citep{achiam2023gpt}, we first set our alignment objective to maximize the expected human reward function with an additional Kullback-Leibler divergence regularization w.r.t some reference distribution $p_{ref}(\cdot|\bm{c})$, which is equivalent to minimize the following objective:
\begin{align}\label{eqn:gen_rlhf}
    \nonumber
    \mathcal{L}(\theta) & = \mathbb{E}_{\bm{c}\sim \mathcal{C}, \atop \bx\sim p_\theta(\bx|\bm{c})} \big[-r(\bx, \bm{c}) \big] + \beta \mathcal{D}_{KL}(p_\theta(\bx|\bm{c}), p_{ref}(\bx|\bm{c}))\\
    & = \mathbb{E}_{\bm{c}\sim \mathcal{C},\bz\sim p_z,\atop \bx=g_\theta(\bz|\bm{c})} \big[-r(\bx, \bm{c}) \big] + \beta \mathcal{D}_{KL}(p_\theta(\bx|\bm{c}), p_{ref}(\bx|\bm{c}))
\end{align}
The KL divergence regularization to the reference distribution $p_{ref}(\cdot)$ guarantees the generator distribution $p_\theta(\cdot)$ to stay similar to $p_{ref}(\cdot)$ in order to prevent it from diverging. 

Though objective \eqref{eqn:gen_rlhf} is appealing, for the one-step model, we do not know the explicit form of distribution $p_\theta(\bx|\bm{c})$. Inspired by Diff-Instruct\citep{Luo2023DiffInstructAU} and other distribution matching-based diffusion distillation approaches, we can effectively estimate the $\theta$-gradient of objective \eqref{eqn:gen_rlhf} with equation \eqref{equ:rlhf_theta_grad} in Theorem \ref{thm:rlhf}. 
\begin{theorem}\label{thm:rlhf}
    The $\theta$ gradient of the objective \eqref{eqn:gen_rlhf} is
    \begin{align}\label{equ:rlhf_theta_grad}
    \operatorname{Grad}(\theta)= \mathbb{E}_{\bm{c}\sim \mathcal{C}, \bz\sim p_z, \atop \bx = g_\theta(\bz|\bm{c})}\bigg\{-\nabla_{\bx} r(\bx,\bm{c}) + \beta \big[ \nabla_{\bx} \log p_\theta(\bx|\bm{c}) - \nabla_{\bx} \log p_{ref}(\bx|\bm{c})\big] \bigg\}\frac{\partial \bx}{\partial\theta}
    \end{align}
\end{theorem}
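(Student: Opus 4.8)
The plan is to differentiate the objective in \eqref{eqn:gen_rlhf} directly with respect to $\theta$, treating the two terms separately and being careful about the fact that the sampling distribution $p_\theta$ itself depends on $\theta$. Writing the objective as $\mathcal{L}(\theta) = \mathcal{L}_1(\theta) + \beta \mathcal{L}_2(\theta)$, where $\mathcal{L}_1(\theta) = \mathbb{E}_{\bm{c}, \bz}[-r(g_\theta(\bz|\bm{c}),\bm{c})]$ and $\mathcal{L}_2(\theta) = \mathcal{D}_{KL}(p_\theta(\cdot|\bm{c}), p_{ref}(\cdot|\bm{c}))$ averaged over $\bm{c}$, the first term is the easy one: since the prior $p_z$ and prompt distribution $\mathcal{C}$ do not depend on $\theta$, I can push the gradient inside the expectation and apply the chain rule to get $\nabla_\theta \mathcal{L}_1 = \mathbb{E}_{\bm{c},\bz}[-\nabla_{\bx} r(\bx,\bm{c})\, \partial \bx/\partial\theta]$ with $\bx = g_\theta(\bz|\bm{c})$.

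The substantive part is the $\theta$-gradient of the KL term. Here I would expand $\mathcal{D}_{KL}(p_\theta, p_{ref}) = \mathbb{E}_{\bx\sim p_\theta}[\log p_\theta(\bx|\bm{c}) - \log p_{ref}(\bx|\bm{c})]$ and note that $\theta$ enters in two places: through the density $p_\theta$ appearing inside the logarithm, and through the sampling distribution $p_\theta$ in the expectation. The standard trick is that the first contribution vanishes: $\mathbb{E}_{\bx\sim p_\theta}[\nabla_\theta \log p_\theta(\bx|\bm{c})] = \int p_\theta \nabla_\theta \log p_\theta \, \diff\bx = \nabla_\theta \int p_\theta \, \diff\bx = \nabla_\theta 1 = 0$. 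For the second contribution, I would rewrite the expectation over $p_\theta$ as an expectation over the latent $\bz$ via the reparameterization $\bx = g_\theta(\bz|\bm{c})$, so that $\mathcal{L}_2 = \mathbb{E}_{\bm{c},\bz}[\log p_\theta(g_\theta(\bz|\bm{c})|\bm{c}) - \log p_{ref}(g_\theta(\bz|\bm{c})|\bm{c})]$; now differentiating, the sampling distribution is $\theta$-free and the chain rule gives $\mathbb{E}_{\bm{c},\bz}\big[(\nabla_{\bx}\log p_\theta(\bx|\bm{c}) - \nabla_{\bx}\log p_{ref}(\bx|\bm{c}))\,\partial\bx/\partial\theta\big]$ plus exactly the vanishing term from the explicit $\theta$-dependence inside $\log p_\theta$. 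Combining the two pieces and multiplying the KL contribution by $\beta$ yields \eqref{equ:rlhf_theta_grad}.

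The main obstacle — and the point that deserves the most care — is justifying that the "score of the model does not contribute" identity $\mathbb{E}_{\bx\sim p_\theta}[\nabla_\theta \log p_\theta(\bx|\bm{c})] = 0$ holds in this setting, since $p_\theta$ is only implicitly defined as the pushforward of $p_z$ under $g_\theta$ and need not have a tractable density; this requires the usual regularity assumptions (the density exists, is differentiable in $\theta$, and dominated convergence applies so one can interchange $\nabla_\theta$ and $\int$). A secondary subtlety is that $\nabla_{\bx}\log p_\theta(\bx|\bm{c})$ — the score of the implicit generator distribution — is itself not directly available, but the theorem statement only claims the gradient has this form; its estimation (e.g.\ by a learned score network, as in Diff-Instruct) is deferred to the algorithmic section. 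I would state the needed smoothness/integrability conditions explicitly and otherwise present the derivation as the two-line chain-rule computation above.
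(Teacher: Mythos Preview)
Your proposal is correct and follows essentially the same route as the paper's proof: reparameterize the expectation via $\bx = g_\theta(\bz|\bm{c})$, apply the chain rule, and then show that the extra term $\mathbb{E}_{\bx\sim p_\theta}[\nabla_\theta \log p_\theta(\bx|\bm{c})]$ vanishes by the identity $\int p_\theta \nabla_\theta \log p_\theta\,\diff\bx = \nabla_\theta \int p_\theta\,\diff\bx = 0$ under the stated integrability/differentiability conditions. The only cosmetic difference is that the paper handles the reward and KL pieces together in a single chain-rule step rather than splitting them as you do, but the substance is identical.
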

We will give the proof in Appendix \ref{app:rlhf}.
For gradient formula \eqref{equ:rlhf_theta_grad}, we can see that the $\bx$ gradient of $r(\bx, \bm{c})$ is easy to obtain. If we can approximate the score function of both generator and reference distribution, i.e. $\nabla_{\bx} \log p_\theta(\bx|\bm{c})$ and $\nabla_{\bx} \log p_{ref}(\bx|\bm{c})$, we can directly compute the $\theta$ gradient and use gradient descent algorithms to update the parameters $\theta$. However, since the generator distribution is defined directly in image space, where the distributions are assumed to lie on some low dimensional manifold \citep{song2019generative}. Therefore, \emph{approximating the score function and minimizing KL divergence is difficult in practice}.

\paragraph{Diffusion Models are Reference Processes.}
Instead of minimizing the negative reward with KL regularization in \eqref{eqn:gen_rlhf}, we turn to \eqref{eqn:gen_rlhf_integral} by generalizing the KL divergence regularization to the Integral Kullback-Leibler divergence proposed in \citet{Luo2023DiffInstructAU} w.r.t to some reference diffusion process $p_{ref}(\bx_t|t,\bm{c})$. This novel change of regularization divergence distinguishes our approach from RLHF methods for large language model alignments. Besides, such a change from KL divergence to IKL divergence makes it possible to \emph{use pre-trained diffusion models as reference processes}, as we will show in the following paragraphs. 

Let $\bx_t$ be noisy data that is diffused by the forward diffusion \eqref{equ:forwardSDE} starting from $\bx_0$. We use $p_{ref}(\bx_t|t,\bm{c})$ and $\bm{s}_{ref}(\bx_t|t,\bm{c})$ to denote the densities and score functions of the reference diffusion process (the score functions can be replaced with pre-trained off-the-shelf diffusion models). Let $p_\theta(\bx|t,\bm{c})$ and $\bm{s}_\theta(\bx_t|t,\bm{c})$ be the marginal distribution and score functions of the generator output after forward diffusion process \eqref{equ:forwardSDE}. We propose to minimize the negative reward function with an Integral KL divergence regularization:
\begin{align}\label{eqn:gen_rlhf_integral}
    \mathcal{L}(\theta) = \mathbb{E}_{\bm{c},\bz\sim p_z, \bx_0=g_\theta(\bz|\bm{c}) \atop \bx_t|\bx_0\sim p(\bx_t|\bx_0)} \big[-r(\bx_0, \bm{c}) \big] + \beta \int_{t=0}^T w(t) \mathcal{D}_{KL}(p_\theta(\bx_t|t,\bm{c}), p_{ref}(\bx_t|t,\bm{c})) \diff t
\end{align}
Different from vanilla RLHF objective \eqref{eqn:gen_rlhf}, the objective with IKL regularization \eqref{eqn:gen_rlhf_integral} assigned a regularization between generator's noisy distributions and a reference diffusion process $p_{ref}(\cdot|t,\bm{c})$. 
Following similar spirits of Theorem \ref{thm:rlhf}, we have a gradient formula in Theorem \ref{thm:rlhf_ikl} which takes the IKL divergence into account. 
\begin{theorem}\label{thm:rlhf_ikl}
    The $\theta$ gradient of the objective \eqref{eqn:gen_rlhf_integral} is
    \begin{align}\label{equ:rlhf_theta_grad_ikl}
    \operatorname{Grad}(\theta)= \mathbb{E}_{\bm{c},t, \bz\sim p_z, \bx_0 = g_\theta(\bz|\bm{c})\atop \bx_t|\bx_0\sim p(\bx_t|\bx_0)}\bigg\{-\nabla_{\bx_0} r(\bx_0,\bm{c}) + \beta w(t) \big[ \bm{s}_\theta(\bx_t|t,\bm{c}) - \bm{s}_{ref}(\bx_t|t,\bm{c})\big]\frac{\partial \bx_t}{\partial\theta} \bigg\}
    \end{align}
\end{theorem}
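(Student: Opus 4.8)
The plan is to differentiate the two summands of objective~\eqref{eqn:gen_rlhf_integral} separately with respect to $\theta$, reusing the reasoning behind Theorem~\ref{thm:rlhf} for the integral-divergence term. For the reward term, since $\bx_0 = g_\theta(\bz|\bm{c})$ depends on $\theta$ only through the generator and the forward diffusion $p(\bx_t|\bx_0)$ is independent of $\theta$, a direct reparameterization gradient gives $\nabla_\theta \mathbb{E}[-r(\bx_0,\bm{c})] = \mathbb{E}[-\nabla_{\bx_0} r(\bx_0,\bm{c})\, \partial \bx_0/\partial\theta]$, which matches the first term of~\eqref{equ:rlhf_theta_grad_ikl} once we note $\partial \bx_0/\partial\theta$ can be folded into the stated expectation. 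The main work is the IKL term.

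For the IKL term, I would first fix $t$ and expand $\mathcal{D}_{KL}(p_\theta(\bx_t|t,\bm{c}), p_{ref}(\bx_t|t,\bm{c})) = \mathbb{E}_{\bx_t\sim p_\theta(\cdot|t,\bm{c})}[\log p_\theta(\bx_t|t,\bm{c}) - \log p_{ref}(\bx_t|t,\bm{c})]$. Differentiating in $\theta$ produces two contributions: one from the explicit $\log p_\theta$ inside the expectation, and one from the $\theta$-dependence of the sampling distribution $p_\theta(\cdot|t,\bm{c})$ acting on the whole integrand. The first contribution, $\mathbb{E}_{\bx_t\sim p_\theta}[\nabla_\theta \log p_\theta(\bx_t|t,\bm{c})]$, vanishes by the standard score-function identity $\mathbb{E}_{p_\theta}[\nabla_\theta \log p_\theta] = \nabla_\theta \int p_\theta = 0$ — this is exactly the cancellation that underlies Theorem~\ref{thm:rlhf}. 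The second contribution, obtained via the reparameterization $\bx_t = \bx_t(\theta)$ (push $\bz$ through $g_\theta$ and then through the forward kernel), yields $\mathbb{E}\big[(\nabla_{\bx_t}\log p_\theta(\bx_t|t,\bm{c}) - \nabla_{\bx_t}\log p_{ref}(\bx_t|t,\bm{c}))\, \partial\bx_t/\partial\theta\big]$. Recognizing $\nabla_{\bx_t}\log p_\theta = \bm{s}_\theta$ and $\nabla_{\bx_t}\log p_{ref} = \bm{s}_{ref}$, and reinstating the weight $\beta w(t)$ and the integral $\int_0^T \cdot\, \diff t$ (which becomes an expectation over $t$ under the sampling convention of the theorem), gives precisely the second term of~\eqref{equ:rlhf_theta_grad_ikl}. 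Summing the two differentiated terms completes the proof.

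The step I expect to be the main obstacle is making the exchange of differentiation and integration/expectation rigorous, together with the vanishing of the explicit-score term: one must justify that $p_\theta(\bx_t|t,\bm{c})$ is smooth and strictly positive (which it is, because the Gaussian forward kernel convolves away the manifold support of $p_\theta(\bx_0|\bm{c})$, so $\bm{s}_\theta(\bx_t|t,\bm{c})$ is well defined for $t>0$), and that the score-function identity applies under an interchange of $\nabla_\theta$ and the $\bx_t$-integral. The paper's hypotheses presumably cover the needed regularity; I would state these as mild assumptions and refer to the analogous argument already used for Theorem~\ref{thm:rlhf}, emphasizing that the only new ingredient here is that the regularization has been lifted to the diffused marginals, which is precisely what replaces the ill-defined image-space score $\nabla_{\bx}\log p_\theta(\bx|\bm{c})$ with the well-behaved $\bm{s}_\theta(\bx_t|t,\bm{c})$.
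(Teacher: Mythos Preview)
Your proposal is correct and follows essentially the same route as the paper's proof: differentiate via reparameterization through $\bx_0=g_\theta(\bz|\bm{c})$ and $\bx_t$, then show the residual term $\mathbb{E}_{\bx_t\sim p_\theta}[\nabla_\theta \log p_\theta(\bx_t|t,\bm{c})]$ vanishes by the score-function identity under an exchange of differentiation and integration, with the same mild regularity assumptions. The only cosmetic difference is that you treat the reward and IKL summands separately while the paper differentiates the combined objective in one step, but the underlying argument and the identified obstacle (justifying the interchange) are the same.
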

We will give the proof in Appendix \ref{app:rlhf_ikl}. { We can clearly see that the reference score functions can be replaced with off-the-shelf text-to-image diffusion models. 
\begin{remark}
If we view the generator as a \emph{student}, the reference diffusion model acts like a \emph{teacher}. We can use another diffusion model $\bm{s}_{\phi}(\cdot)$ to act like a \emph{teaching assistant (TA)}, which is initialized from the teacher and fine-tuned using student-generated data to approximate the student generator score functions, i.e. $\bm{s}_{\phi}(\bx_t|t,\bm{c}) \approx \bm{s}_\theta(\bx_t|t, \bm{c})$. The reward function $r(\cdot,\cdot)$ can be viewed as the student's \emph{personal preference}. With this, we can readily estimate the gradient \eqref{equ:rlhf_theta_grad_ikl} and update the generator model to maximize students' interests (aka, to maximize the reward) while still referring to the teacher's and the TA's advice. From this perspective, the Diff-Instruct++ algorithm is similar to an education procedure that encourages the student to maximize personal interests, while still taking advice from teachers and teaching assistants. Since the use of IKL divergence for regularization in RLHF is inspired by Diff-Instruct \citep{Luo2023DiffInstructAU}, we name our alignment approach the Diff-Instruct++. 
\end{remark}  

Though minimizing objective \eqref{eqn:gen_rlhf_integral} is solid in theory, it would be beneficial to properly understand and use a classifier-free guidance mechanism in the alignment process. In the following paragraphs, we give an interesting theoretical finding, showing that classifier-free guidance is secretly doing RLHF with an implicitly-defined reward function. With this, we can properly incorporate both the human reward and the CFG reward for alignment with Diff-Instruct++.

\subsection{Classifier-free Guidance is Secretly Doing RLHF.}
In previous sections, we have shown in theory that with available reward models, we can readily do RLHF for one-step generators. However, in this part, we additionally find that the classifier-free guidance is secretly doing RLHF, and therefore we will show that using CFG for reference diffusion models when distilling them using Diff-Instruct is secretly doing Diff-Instruct++ with an implicitly defined reward. 

The classifier-free guidance \citep{ho2022classifier} (CFG) uses a score function with a form 
\begin{align*}
\Tilde{\bm{s}}_{ref}(\bx_t, t|\bm{c}) \coloneqq \bm{s}_{ref}(\bx_t,t|\bm{\varnothing}) + \omega \big\{ \bm{s}_{ref}(\bx_t,t|\bm{c}) - \bm{s}_{ref}(\bx_t,t|\bm{\varnothing})\big\}
\end{align*}
to replace the original conditions score function $\bm{s}_{ref}(\bx_t,t|\bm{c})$. This empirically leads to better sampling quality for diffusion models. In this part, we show {how diffusion distillation using Diff-Instruct with CFG is secretly doing RLHF with DI++.} If we consider a reward function as:
\begin{align}\label{eqn:cfg_reward}
    r(\bx_0, \bm{c}) = \int_{t=0}^T w(t)\log \frac{p_{ref}(\bx_t|t,\bm{c})}{p_{ref}(\bx_t|t)} \diff t.
\end{align}
This reward function will put a higher reward to those samples that have higher conditional probability than unconditional probability. Therefore, it can encourage samples with high conditional probability. We show that the gradient formula \eqref{equ:rlhf_theta_grad_ikl} in Theorem \ref{thm:rlhf_ikl} will have an explicit solution:
\begin{theorem}\label{thm:cfg_rlhf}
    Under mild conditions, if we set an implicit reward function as \eqref{eqn:cfg_reward}, the gradient formula \eqref{equ:rlhf_theta_grad_ikl} in Theorem \ref{thm:rlhf_ikl} with have an explicit expression:
    \begin{align}\label{equ:cfg_grad}
    \operatorname{Grad}(\theta)= & \mathbb{E}_{\bm{c},t, \bz\sim p_z, \bx_0 = g_\theta(\bz|\bm{c})\atop \bx_t|\bx_0\sim p(\bx_t|\bx_0)}\beta w(t)\bigg\{\bm{s}_\theta(\bx_t|t,\bm{c}) - \Tilde{\bm{s}}_{ref}^\beta(\bx_t|t,\bm{c}) \bigg\}\frac{\partial \bx_t}{\partial\theta} \\
    \nonumber
    & \Tilde{\bm{s}}_{ref}^\beta(\bx_t|t,\bm{c}) = \bm{s}_{ref}(\bx_t|t) +  (1+\frac{1}{\beta}) \big[ \bm{s}_{ref}(\bx_t|t,\bm{c}) - \bm{s}_{ref}(\bx_t|t) \big]
\end{align}
\end{theorem}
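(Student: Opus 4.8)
The plan is to start from the general IKL-regularized gradient formula \eqref{equ:rlhf_theta_grad_ikl} in Theorem \ref{thm:rlhf_ikl} and substitute the specific reward \eqref{eqn:cfg_reward}. The key observation is that this reward is itself an integral over diffusion time of a log-density ratio, so its $\bx_0$-gradient should, after a change of variables through the forward diffusion, be re-expressible in terms of score functions at the noised levels. First I would write $\nabla_{\bx_0} r(\bx_0,\bm{c}) = \int_{t=0}^T w(t)\, \nabla_{\bx_0}\log \frac{p_{ref}(\bx_t|t,\bm{c})}{p_{ref}(\bx_t|t)}\,\diff t$, and then use the chain rule relating $\nabla_{\bx_0}$ to $\nabla_{\bx_t}$ along the forward process $\bx_t|\bx_0 \sim p(\bx_t|\bx_0)$. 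This is where the ``mild conditions'' enter: one needs to justify pushing $\nabla_{\bx_0}$ inside the integral and through the conditional expectation over $\bx_t$, and to use the identity that the reward-gradient term, once incorporated into the same expectation structure that already appears in \eqref{equ:rlhf_theta_grad_ikl} (same sampling of $\bm{c},t,\bz,\bx_0,\bx_t$), turns into $w(t)\big[\bm{s}_{ref}(\bx_t|t,\bm{c}) - \bm{s}_{ref}(\bx_t|t)\big]\frac{\partial \bx_t}{\partial \theta}$ — i.e., the reward's effect is exactly to inject the CFG difference-of-scores term.

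The cleaner route, which I would actually pursue, is to observe that the reward \eqref{eqn:cfg_reward} is constructed precisely so that the whole objective \eqref{eqn:gen_rlhf_integral} collapses: the negative-reward term contributes $-\beta \int w(t)\,\mathbb{E}\log\frac{p_{ref}(\bx_t|t,\bm{c})}{p_{ref}(\bx_t|t)}\,\diff t$ (after inserting the $\beta$ scaling appropriately — note the reward enters with coefficient $1$ but the IKL with coefficient $\beta$, so one tracks the relative weighting) and the IKL term contributes $\beta\int w(t)\,\mathbb{E}\log\frac{p_\theta(\bx_t|t,\bm{c})}{p_{ref}(\bx_t|t,\bm{c})}\,\diff t$. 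Adding these, the $\log p_{ref}(\bx_t|t,\bm{c})$ terms partially telescope and one is left with a combination of $\mathbb{E}\log\frac{p_\theta(\bx_t|t,\bm{c})}{p_{ref}(\bx_t|t)}$ plus a residual $p_{ref}(\bx_t|t,\bm{c})$ contribution weighted by $(1+\tfrac1\beta)$-type factors. Taking the $\theta$-gradient of this recombined functional via the same score-matching argument used to prove Theorem \ref{thm:rlhf_ikl} (the $\theta$-derivative hits only $\bx_t$ through $\frac{\partial\bx_t}{\partial\theta}$, and $\nabla_{\bx_t}\log p_\theta$ against the $p_\theta$-expectation integrates to zero up to the cross term) yields the stated $\bm{s}_\theta(\bx_t|t,\bm{c}) - \tilde{\bm{s}}_{ref}^\beta(\bx_t|t,\bm{c})$ with the guidance weight $\omega$ identified as $1+\tfrac1\beta$.

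Concretely the steps in order: (1) substitute \eqref{eqn:cfg_reward} into \eqref{equ:rlhf_theta_grad_ikl}; (2) convert $\nabla_{\bx_0} r$ into an integral of $\nabla_{\bx_t}$-scores by the chain rule through the forward kernel, absorbing it into the existing expectation over $(\bm{c},t,\bz,\bx_0,\bx_t)$ so that the two time-integrals merge; (3) collect the coefficients of $\bm{s}_\theta(\bx_t|t,\bm{c})$, $\bm{s}_{ref}(\bx_t|t,\bm{c})$, and $\bm{s}_{ref}(\bx_t|t)$, which gives $\beta w(t)\big\{\bm{s}_\theta - \bm{s}_{ref}(\cdot|t) - (1+\tfrac1\beta)[\bm{s}_{ref}(\cdot|t,\bm{c}) - \bm{s}_{ref}(\cdot|t)]\big\}$; (4) recognize the bracketed expression as $\bm{s}_\theta(\bx_t|t,\bm{c}) - \tilde{\bm{s}}_{ref}^\beta(\bx_t|t,\bm{c})$ with $\tilde{\bm{s}}_{ref}^\beta$ exactly as in the theorem statement, matching the CFG form with $\omega = 1 + \tfrac1\beta$.

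The main obstacle will be step (2): rigorously justifying that $\nabla_{\bx_0}$ of the time-integrated log-ratio can be pulled through the forward diffusion to become the difference of noised score functions under the same joint sampling measure. This requires the ``mild conditions'' — differentiability and integrability of $p_{ref}(\bx_t|t,\bm{c})$ and $p_{ref}(\bx_t|t)$ in $\bx_t$, the ability to interchange $\nabla_{\bx_0}$ with both $\int_0^T \diff t$ and $\mathbb{E}_{\bx_t|\bx_0}$ (dominated convergence / Leibniz), and the standard Diff-Instruct-type identity that $\mathbb{E}_{\bx_t|\bx_0}\nabla_{\bx_0}\log p(\bx_t|\bx_0)$-type manipulations reproduce marginal scores. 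Once this interchange is granted, the remainder is the same bookkeeping of score terms already carried out in the proof of Theorem \ref{thm:rlhf_ikl}, plus the bracket-matching in step (4), both of which are routine.
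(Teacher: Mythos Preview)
Your approach is correct but takes a genuinely different route from the paper. You work directly at the gradient level: substitute the reward \eqref{eqn:cfg_reward} into \eqref{equ:rlhf_theta_grad_ikl}, push $\nabla_{\bx_0}$ through the forward kernel so that the reward contribution becomes the score difference $\bm{s}_{ref}(\bx_t|t,\bm{c})-\bm{s}_{ref}(\bx_t|t)$, and then collect coefficients to read off the CFG form with guidance weight $1+\tfrac1\beta$. The paper instead argues at the level of the \emph{optimal distribution}: it fixes a single time $t$, considers the single-level reward $r(\bx_t,t,\bm{c})=w(t)\log\tfrac{p_{ref}(\bx_t|t,\bm{c})}{p_{ref}(\bx_t|t)}$, invokes the standard closed-form minimizer of a KL-regularized reward objective (i.e.\ $p_{\theta^*}\propto p_{ref}(\cdot|t,\bm{c})\exp\{r/(\beta w(t))\}$), and reads off that $\nabla_{\bx_t}\log p_{\theta^*}$ is exactly the CFG score $\tilde{\bm{s}}_{ref}^\beta$. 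The paper's argument is shorter and more conceptual --- it explains \emph{why} CFG arises, as the score of the RLHF-optimal policy --- but it is somewhat indirect as a proof of the literal gradient identity \eqref{equ:cfg_grad}, leaving the final step (that Diff-Instruct toward a distribution with the CFG score coincides with the RLHF gradient) implicit. Your route is the more literal verification of the stated claim, at the cost of the interchange technicalities you correctly flag in step~(2); the paper sidesteps those entirely by never differentiating the time-integrated reward in $\bx_0$ at all.
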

We will give the proof in Appendix \ref{app:cfg_rlhf}. This gradient formula \eqref{equ:cfg_grad} recovers the case that uses the CFG for diffusion distillation using the Diff-Instruct algorithm to train a one-step generator. The parameter $(1 + \frac{1}{\beta})$ is the so-called classifier-free guidance scale. In our Algorithm \ref{alg:dipp} and \ref{alg:dipp_code}, we use the coefficient $\alpha_{cfg}$ to represent the CFG scale. In the following section, we formally introduce the practical algorithm of Diff-Instruct++. 
\begin{remark}
    Theorem \ref{thm:cfg_rlhf} has revealed a new perspective that understands classifier-free guidance as a training-free and inference-time RLHF. This helps to understand why humans prefer samples generated by using CFG. Besides, Theorem \ref{thm:cfg_rlhf} also shows that using Diff-Instruct with CFG to distill text-to-image diffusion models is secretly doing DI++. Therefore we can use both CFG and the human reward to strengthen the one-step generator models. 
\end{remark}

\subsection{A Practical Algorithm}\label{sec:practial_algo}
Though the gradient formula \eqref{equ:rlhf_theta_grad} gives an intuitive way to compute the parameter gradient to update the generator, it would be better to have an easy-to-implement pseudo loss function instead of the gradient estimations for executing the algorithm. To address such an issue, {we present a pseudo loss function defined as formula \eqref{equ:rlhf_ploss}}, which we show has the same gradient as \eqref{equ:rlhf_theta_grad_ikl} in Appendix. 

\begin{algorithm}[t]
\SetAlgoLined
\KwIn{prompt dataset $\mathcal{C}$, generator $g_{\theta}(\bx_0|\bz,\bm{c})$, prior distribution $p_z$, reward model $r(\bx, \bm{c})$, reward scale $\alpha_{rew}$, CFG scale $\alpha_{cfg}$, reference diffusion model $\bm{s}_{ref}(\bx_t|c,\bm{c})$, TA diffusion $\bm{s}_\phi(\bx_t|t,\bm{c})$, forward diffusion $p(\bx_t|\bx_0)$ \eqref{equ:forwardSDE}, TA diffusion updates rounds $K_{TA}$, time distribution $\pi(t)$, diffusion model weighting $\lambda(t)$, generator IKL loss weighting $w(t)$.}
\While{not converge}{
fix $\theta$, update ${\phi}$ for $K_{TA}$ rounds by minimizing 
\begin{align*}
    \mathcal{L}(\phi) = \mathbb{E}_{\bm{c}\sim \mathcal{C}, \bz\sim p_z, t\sim \pi(t) \atop \bx_0 = g_\theta(\bz|\bm{c}), \bx_t|\bx_0 \sim p_t(\bx_t|\bx_0)} \lambda(t) \|\bm{s}_\phi(\bx_t|t,\bm{c}) - \nabla_{\bx_t}\log p_t(\bx_t|\bx_0)\|_2^2\mathrm{d}t.
\end{align*}
update $\theta$ using StaD with the gradient 
\begin{align}
    \operatorname{Grad}(\theta)= & \mathbb{E}_{\bm{c}\sim \mathcal{C}, \bz\sim p_z,\atop\bx_0 = g_\theta(\bz,\bm{c})}[-\alpha_{rew}\nabla_{\bx_0} r(\bx_0,\bm{c})] \\
& + \int_{t=0}^T w(t)\mathbb{E}_{\bm{c}\sim \mathcal{C},\bz\sim p_z, t\sim \pi(t) \atop \bx_0 = g_\theta(\bz,\bm{c}), \bx_t|\bx_0 \sim p_t(\bx_t|\bx_0)}\big\{ \bm{s}_{\phi}(\bx_t|t,\bm{c}) - \Tilde{\bm{s}}_{ref}(\bx_t|t,\bm{c})\big\}\frac{\partial \bx_t}{\partial\theta}\mathrm{d}t. \\
& \Tilde{\bm{s}}_{ref}(\bx_t|t,\bm{c}) = \bm{s}_{ref}(\bx_t|t,\bm{\varnothing}) + \alpha_{cfg}\big[\bm{s}_{ref}(\bx_t|t,\bm{c}) - \bm{s}_{ref}(\bx_t|t,\bm{\varnothing}) \big]
\end{align}
}
\Return{$\theta,\phi$.}
\caption{Diff-Instruct++ for aligning generator model with human feedback reward.}
\label{alg:dipp}
\end{algorithm}

{With the pseudo loss function \eqref{equ:rlhf_ploss}, we formally introduce the DI++ algorithm which is presented in Algorithm \ref{alg:dipp} (and a more executable version in Algorithm \ref{alg:dipp_code}).} As in Algorithm \ref{alg:dipp}, the overall algorithms consist of two alternative updating steps. {The first step update $\phi$ of the TA diffusion model by fine-tuning it with student-generated data. Therefore the TA diffusion $\bm{s}_{\phi}(\bx_t|t,\bm{c})$ can approximate the score function of student generator distribution.} This step means that the { TA needs to communicate with the student to know the student's status}. The second step estimates the parameter gradient of equation \eqref{equ:rlhf_theta_grad} and uses this parameter gradient for gradient descent optimization algorithms such as Adam \citep{kingma2014adam} to update the generator parameter $\theta$. This step means that {the teacher and the TA discuss and incorporate the student's interests to instruct the student generator.} Due to page limitations, we put a discussion about the meanings of the hyper-parameters in Appendix \ref{app:meanings_of_hyper}. 

\section{Related Works}
\paragraph{RLHF for Large Language Models.}
Reinforcement learning using human feedback (RLHF) has won great success in aligning large language models (LLMs). \citet{ouyang2022training} formulates the LLM alignment problem as maximization of human reward with a KL regularization to some reference LLM, resulting in the InstructGPT model. The Diff-Instruct++ draws inspiration from \citet{ouyang2022training}. However, DI++ differs from RLHF for LLMs in several aspects: the introduction of IKL regularization, the novel gradient formula, and the overall algorithms. Many variants of RLHF for LLMs have also been intensively studied in \citet{tian2023fine,christiano2017deep,rafailov2024direct,ethayarajh2024kto}, etc.

\paragraph{Diffusion Distillation Through Divergence Minimization.} 
Diffusion distillation \citep{luo2023comprehensive} is a research area that aims to reduce generation costs using teacher diffusion models. Among all existing methods, one important line is to distill a one-step generator model by minimizing certain divergences between one-step generator models and some pre-trained diffusion models. \citep{Luo2023DiffInstructAU} first study the diffusion distillation by minimizing the Integral KL divergence. \citet{yin2023one} generalize such a concept and add a data regression loss to distill pre-trained Stable Diffusion Models. Many other works have introduced additional techniques and improved the distillation performance \citep{geng2023onestep,kim2023consistency,song2023consistency,songimproved,nguyen2023swiftbrush,song2024sdxs,yin2024improved,zhou2024long,heek2024multistep,xie2024distillation,salimans2024multistep}. Different from IKL divergence, \citet{zhou2024score} study the distillation through a variant of Fisher divergence. Other methods, such as \citet{xiao2021tackling, xu2024ufogen}, have used the generative adversarial training \citep{goodfellow2014generative} techniques in order to minimize certain divergences. The Diff-Instruct++ is motivated by Diff-Instruct. However, to the best of our knowledge, we are the first to study the problem of aligning one-step generator models with human preferences. 

\paragraph{Preference Alignment for Diffusion Models.} In recent years, many works have emerged trying to align diffusion models with human preferences. There are three main lines of alignment methods for diffusion models. 1) The first kind of method fine-tunes the diffusion model over a specifically curated image-prompt dataset \citep{dai2023emu,podell2023sdxl}. 2) the second line of methods tries to maximize some reward functions either through the multi-step diffusion generation output \citep{prabhudesai2023aligning,clark2023directly,lee2023aligning} or through policy gradient-based RL approaches \citep{fan2024reinforcement,black2023training}. {Though this approach shares goals similar to those of DI++ and RLHF for LLMs, the problems and challenges are essentially different.} Our Diff-Instruct++ is the first work to study the alignment of one-step generator models, instead of diffusion models. Besides, backpropagating through the multi-step diffusion generation output is expensive and hard to scale. 3) the third line, such as Diffusion-DPO \citep{wallace2024diffusion}, Diffusion-KTO \citep{yang2024using}, tries to directly improve the diffusion model's human preference property with raw collected data instead of reward functions. 

\section{Experiments}
In previous sections, we have established the theoretical foundations for DI++. In this section, we consider one-step text-to-image models with two kinds of neural network architectures: the UNet-based one-step text-to-image model with the well-known Stable Diffusion 1.5 (SD1.5)\citep{rombach2022high} as the reference diffusion model, and the diffusion-transformer-based one-step model with the PixelArt-$\alpha$ as the reference diffusion. For both models, we use DI++ to align the one-step models with human preferences using ImageReward\citep{xu2023imagereward}.

\subsection{The Experiment Setup}
As we have shown in Figure \ref{fig:demo}, our experiment workflow consists of three modeling stages: the pre-training stage, the reward modeling stage (not necessary), and the alignment stage. 

\paragraph{The Pre-training Stage.} 
As the leftmost column of Figure \ref{fig:demo} shows, the pre-training stage pre-trains the reference diffusion model, and a base one-step generator that is not necessarily aligned with human preference. The researcher can either train the diffusion model in-house or just use publicly available off-the-shelf diffusion models. With the pre-trained teacher diffusion model, we can pre-train our one-step generator model using diffusion distillation methods \citep{luo2023comprehensive}, generative adversarial training \citep{sauer2023stylegan,zheng2024diffusion}, or a combination of them \citep{kim2023consistency,yin2024improved,xu2024ufogen}. Notice that in the pre-training stage, we do not necessarily use human reward to instruct the one-step model, therefore the generator can only learn to match the reference distribution, leading to decent generation results without strictly following human preference.

\paragraph{The Reward Modeling Stage.} 
As the middle column of Figure \ref{fig:demo} shows, in the second stage, the researcher is supposed to collect human feedback data and train a reward model that reflects human preferences for images and corresponding captions. Notice that the image and caption data for this stage can either be real image data or those images and prompts generated by users using the one-step generators in the first stage. For instance, if researchers want to enhance the generation quality of their users' commonly used prompts, they can collect the most used prompts from their users' activity and generate images using the one-step generator pre-trained in the first stage. They can send the image-prompt pair to users for feedback on their preferences. The reward modeling method is quite flexible. Researchers can either train the reward model in-house using different neural networks and data \citep{ouyang2022training}, or using off-the-shelf reward models such as Image Reward\footnote{\url{https://github.com/THUDM/Image Reward}} \citep{xu2023imagereward}, or human preference scores \citep{hpsv2}. 

\paragraph{The Alignment Stage.} 
As the rightmost column of Figure \ref{fig:demo} shows, the final stage is the alignment stage, which is the major stage that DI++ considers. In this stage, researchers can use the reference diffusion model from the first stage as a teacher, and the reward model from the second stage to align the one-step generator with the DI++ algorithm (Algorithm \ref{alg:dipp} or \ref{alg:dipp_code}). After the alignment stage, the one-step generator can generate images that are not only realistic but also match human preferences.

\begin{figure}
\centering
\includegraphics[width=1.0\textwidth]{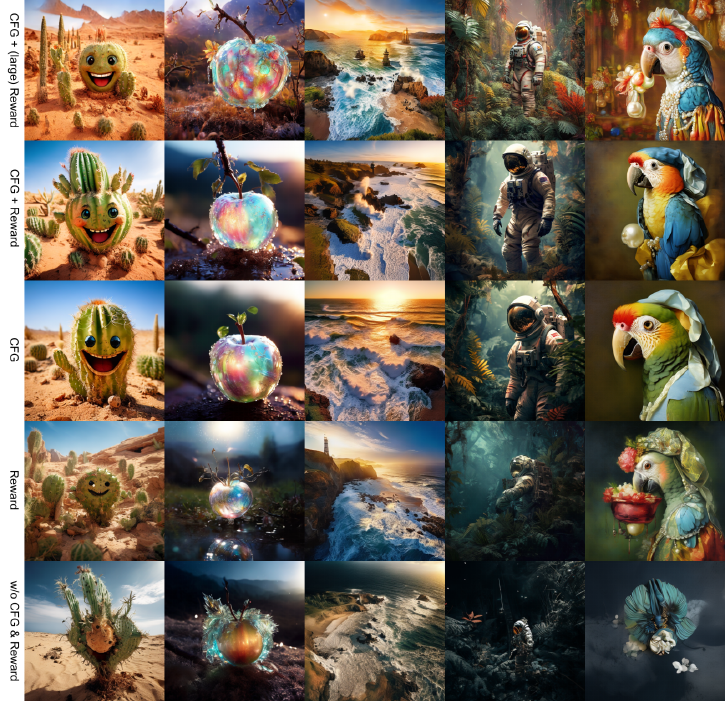}
\caption{A qualitative comparison of one-step generator models aligned using Diff-Instruct++ with different configurations. The bottom row is the weakest setting with no human preference alignment. Upper rows are models that are aligned stronger in a progressive way. We put the prompts to generate images in Appendix \ref{app:qualitative_prompt}. The generated images are more and more aesthetic with stronger human preference alignments.}
\label{fig:qualitative}
\end{figure}

\paragraph{The Models and the Dataset.}
For the SD1.5 experiment, we refer to the high-quality codebase of SiD-LSG \citep{zhou2024long} when constructing a 0.86B UNet-based one-step text-to-image model. For the PixelArt-$\alpha$ experiment, we use the 0.6B PixelArt-$\alpha$ model \citep{Chen2023PixArtFT} of 512$\times$512 resolution to initialize and construct an 0.6B one-step model. We put more experiment details in Appendix \ref{app:exp_detail_1}

For the SD1.5 experiment, we use prompts from the Laion-Aesthetic dataset with an Aesthetic score higher than 6.25 (i.e., Laion-Aesthetic 625+ dataset) following the SiD-LSG. For the PixelArt-$\alpha$ experiment, we use the prompts from the SAM-LLaVA-Caption-10M dataset as our prompt dataset. The SAM-LLaVA-Caption-10M dataset contains the images collected by \citet{kirillov2023segment}, together with text descriptions that are captioned by LLaVA model \citep{liu2024visual}. The SAM-LLaVA-Caption-10M dataset is used for training the PixelArt-$\alpha$ model. Since the PixelArt-$\alpha$ diffusion model uses a T5-XXL, which is memory and computationally expensive. To speed up the alignment training, we pre-encode the text prompts using the T5-XXL text encoder and save the encoded embedding vectors.

\renewcommand{\arraystretch}{1.2}
\begin{table}
  \caption{HPS v2 benchmark. We compare open-sourced models regardless of their base model and architecture.$\dagger$ indicates our implementation. Models with bold formats are our models. Numbers with bold formats are the highest scores.}
  \label{tab:hpsv2}
  \centering
    \small
    \begin{scriptsize}
    \begin{sc}
  \resizebox{1.0\textwidth}{!}{
  \begin{tabular}{lccccc}
    \toprule
    \cmidrule(r){2-5}
    Model & Animation & Concept-art & Painting & Photo & Average\\
    \midrule
    GLIDE~\cite{nichol2021glide} & $23.34$ & $23.08$ & $23.27$ & $24.50$ & $23.55$ \\
    LAFITE~\cite{zhou2022towards} & $24.63$ & $24.38$ & $24.43$ & $25.81$ & $24.81$ \\
    VQ-Diffusion~\cite{gu2022vector} & $24.97$ & $24.70$ & $25.01$ & $25.71$ & $25.10$ \\
    FuseDream~\cite{liu2021fusedream} & $25.26$ & $25.15$ & $25.13$ & $25.57$ & $25.28$ \\
    Latent Diffusion~\cite{rombach2022high} & $25.73$ & $25.15$ & $25.25$ & $26.97$ & $25.78$ \\
    CogView2 ~\cite{ding2022cogview2} & $26.50$ & $26.59$ & $26.33$ & $26.44$ & $26.47$ \\
    DALL·E mini & $26.10$ & $25.56$ & $25.56$ & $26.12$ & $25.83$ \\
    Versatile Diffusion~\cite{xu2023versatile} & $26.59$ & $26.28$ & $26.43$ & $27.05$ & $26.59$ \\
    VQGAN + CLIP~\cite{esser2021taming} & $26.44$ & $26.53$ & $26.47$ & $26.12$ & $26.39$ \\
    DALL·E 2~\cite{ramesh2022hierarchical} & $27.34$ & $26.54$ & $26.68$ & $27.24$ & $26.95$ \\
    Stable Diffusion v1.4~\cite{rombach2022high} & $27.26$ & $26.61$ & $26.66$ & $27.27$ & $26.95$ \\
    Stable Diffusion v2.0~\cite{rombach2022high} & $27.48$ & $26.89$ & $26.86$ & $27.46$ & $27.17$ \\
    Epic Diffusion & $27.57$ & $26.96$ & $27.03$ & $27.49$ & $27.26$ \\
    DeepFloyd-XL & $27.64$ & $26.83$ & $26.86$ & $27.75$ & $27.27$ \\
    Openjourney & $27.85$ & $27.18$ & $27.25$ & $27.53$ & $27.45$ \\
    MajicMix Realistic & $27.88$ & $27.19$ & $27.22$ & $27.64$ & $27.48$ \\ 
    ChilloutMix & $27.92$ & $27.29$ & $27.32$ & $27.61$ & $27.54$ \\
    Deliberate & $28.13$ & $27.46$ & $27.45$ & $27.62$ & $27.67$ \\
    Realistic Vision & $28.22$ & $27.53$ & $27.56$ & $27.75$ & $27.77$ \\
    SDXL-base\citep{podell2023sdxl} & $28.42$ & $27.63$ & $27.60$ & $27.29$ & $27.73$ \\
    SDXL-Refiner\citep{podell2023sdxl} & $28.45$ & $27.66$ & $27.67$ & $27.46$ & $27.80$ \\
    Dreamlike Photoreal 2.0 & $28.24$ & $27.60$ & $27.59$ & $27.99$ & $27.86$ \\
    \midrule
    SD15-15Step\citep{rombach2022high} & $26.76$ & $26.37$ & $26.41$ & $27.12$ & $26.66$ \\
    SD15-25Step\citep{rombach2022high} & $27.04$ & $26.57$ & $26.61$ & $27.30$ & $26.88$ \\
    SD15-DPO-15Step\citep{wallace2024diffusion} & $27.11$ & $26.75$ & $26.70$ & $27.30$ & $26.97$ \\
    SD15-DPO-25Step\citep{wallace2024diffusion} & $27.54$ & $26.97$ & $26.99$ & $27.49$ & $27.25$ \\
    SD15-LCM-1Step\citep{luo2023latent} & $23.35$ & $23.41$ & $23.53$ & $23.81$ & $23.52$ \\
    SD15-LCM-4Step\citep{luo2023latent} & $26.42$ & $25.79$ & $25.95$ & $26.91$ & $26.27$ \\
    SD15-TCD-1Step\citep{zheng2024trajectory} & $23.37$ & $23.16$ & $23.26$ & $23.88$ & $23.42$ \\
    SD15-TCD-4Step\citep{zheng2024trajectory} & $26.67$ & $26.25$ & $26.26$ & $27.19$ & $26.59$ \\
    SD15-Hyper-1Step\citep{ren2024hyper} & $27.76$ & $27.36$ & $27.41$ & $27.63$ & $27.54$ \\
    SD15-Hyper-4Step\citep{ren2024hyper} & $28.04$ & $27.39$ & $27.42$ & $27.89$ & $27.69$ \\
    SD15-Instaflow-1Step\citep{liu2023instaflow} & $26.07$ & $25.80$ & $25.89$ & $26.32$ & $26.02$ \\
    SD15-PeReflow-1Step\citep{yan2024perflow} & $25.70$ & $25.45$ & $25.57$ & $25.96$ & $25.67$ \\
    SD15-BOOT-1Step\citep{gu2023boot} & $ 25.29$ & $24.40$ & $24.61$ & $25.16$ & $24.86$ \\
    SD21-SwiftBrush-1Step\citep{nguyen2023swiftbrush} & $26.91$ & $26.32$ & $26.37$ & $27.21$ & $26.70$ \\
    SD15-SiDLSG-1Step\citep{zhou2024long} & $27.39$ & $26.65$ & $26.58$ & $27.30$ & $26.98$ \\
    SD21-SiDLSG-1Step\citep{zhou2024long} & $27.42$ & $26.81$ & $26.79$ & $27.31$ & $27.08$ \\
    SD21-TURBO-1Step\citep{sauer2023adversarial} & $27.48$ & $26.86$ & $27.46$ & $26.89$ & $27.71$ \\
    SDXL-DMD2-1Step-1024\citep{yin2024improved} & $27.67$ & $27.02$ & $27.01$ & $26.94$ & $27.16$ \\
    SDXL-DMD2-4Step-1024\citep{yin2024improved} & $\textbf{28.97}$ & $27.99$ & $27.90$ & $28.28$ & $28.29$ \\
    SDXL-DMD2-1Step-512\citep{yin2024improved} & $27.70$ & $27.07$ & $27.02$ & $26.94$ & $27.18$ \\
    SDXL-DMD2-4Step-512\citep{yin2024improved} & $27.22$ & $26.65$ & $26.62$ & $26.57$ & $26.76$ \\
    SD15-DMD2-1Step-512\citep{yin2024improved} & $26.31$ & $25.75$ & $25.78$ & $26.59$ & $26.11$ \\
    PixelArt-$\alpha$-25Step-512\citep{Chen2023PixArtFT} & $28.77$ & $27.92$ & $27.96$ & $28.37$ & $28.25$ \\
    PixelArt-$\alpha$-15Step-512\citep{Chen2023PixArtFT} & $28.68$ & $27.85$ & $27.87$ & $28.29$ & $28.17$ \\
    \textbf{SD15-DI++-1Step}($\alpha_{r}=100,\alpha_{c}=1.5$) & $28.42$ & $27.84$ & $28.01$ & $28.19$ & $28.12$ \\
    \textbf{DiT-DI++-1Step}($\alpha_{r}=10,\alpha_{c}=4.5$) & $28.91$ & $\textbf{28.25}$ & $\textbf{28.28}$ & $\textbf{28.50}$ & $\textbf{28.48}$ \\
    \bottomrule
  \end{tabular}
  }
      \end{sc}
    \end{scriptsize}
\end{table}

\vspace{-3mm}
\begin{table}[t]
    \centering
    \small
    \setlength\tabcolsep{4.5pt} 
    \caption{Quantitative comparisons of text-to-image models on 1k MSCOCO-2017 validation prompts. DI++ is short for Diff-Instruct++. $\alpha_{r}$ and $\alpha_{c}$ are short for $\alpha_{rew}$ and $\alpha_{cfg}$ in Algorithm \ref{alg:dipp}. $\dagger$ means our implementation. Data means the model needs image data for training. Sampling means the model needs to draw samples from reference diffusion models. Reward means the model needs a human reward model for training.}
    \label{quantitative}
   \begin{tabular}{m{5.0cm}m{0.8cm}m{0.8cm}m{1.2cm}m{0.8cm}m{0.8cm}m{0.75cm}m{0.8cm}}
      \toprule[1pt]
      \makebox[4.0cm][c]{\multirow{2}[-2]{*}{\textbf{Model}}}   & 
      \makebox[0.4cm][c]{\multirow{2}[-2]{*}{\textbf{Steps}}} & 
        \makebox[0.6cm][c]{\multirow{2}[-2]{*}{\textbf{Type}}}   & 
       \makebox[0.8cm][c]{\multirow{2}[-2]{*}{\textbf{Params}}}  & 
      \makebox[0.6cm][c]{\makecell{\textbf{Image}\\\textbf{Reward}}} & 
      \makebox[0.8cm][c]{\makecell{\textbf{Aes}\\\textbf{Score}}} & 
      \makebox[0.75cm][c]{\makecell{\textbf{Pick}\\\textbf{Score}}} & 
    \makebox[0.7cm][c]{\makecell{\textbf{CLIP}\\\textbf{Score}}} \\

      \midrule[1pt]

      \makebox[4.0cm][c]{{SD15-Base}\citep{rombach2022high}}  & 
      \makebox[0.4cm][c]{{15}} & 
      \makebox[0.6cm][c]{{UNet}} & 
     \makebox[0.8cm][c]{{0.86B}} &
      \makebox[0.6cm][c]{{0.08}}   & 
      \makebox[0.8cm][c]{{5.25}}  & 
      \makebox[0.75cm][c]{{0.212}} &
      \makebox[0.7cm][c]{{30.99}} \\
      
      \makebox[4.0cm][c]{{SD15-Base}\citep{rombach2022high}}  & 
      \makebox[0.4cm][c]{{25}} & 
      \makebox[0.6cm][c]{{UNet}} & 
     \makebox[0.8cm][c]{{0.86B}} &
      \makebox[0.6cm][c]{{0.22}}   & 
      \makebox[0.8cm][c]{{5.32}}  & 
      \makebox[0.75cm][c]{{0.216}} &
      \makebox[0.7cm][c]{{31.13}} \\

     \makebox[4.0cm][c]{SD15-DPO\citep{wallace2024diffusion}} & 
     \makebox[0.4cm][c]{15} & 
    \makebox[0.6cm][c]{UNet}  & 
    \makebox[0.8cm][c]{{0.86B}} & 
     \makebox[0.6cm][c]{{0.20}} & 
     \makebox[0.8cm][c]{{5.29}} & 
     \makebox[0.75cm][c]{{0.214}} & 
     \makebox[0.7cm][c]{{31.07}} \\

     \makebox[4.0cm][c]{SD15-DPO\citep{wallace2024diffusion}} & 
     \makebox[0.4cm][c]{25} & 
    \makebox[0.6cm][c]{UNet}  & 
    \makebox[0.8cm][c]{{0.86B}} & 
     \makebox[0.6cm][c]{{0.28}} & 
     \makebox[0.8cm][c]{{5.37}} & 
     \makebox[0.75cm][c]{{0.218}} & 
     \makebox[0.7cm][c]{31.25} \\

      \makebox[4.0cm][c]{SD15-LCM\citep{luo2023latent}} & 
      \makebox[0.4cm][c]{1} & 
      \makebox[0.6cm][c]{UNet} & 
       \makebox[0.8cm][c]{{0.86B}} & 
      \makebox[0.6cm][c]{{-1.58}} & 
      \makebox[0.8cm][c]{5.04} & 
      \makebox[0.75cm][c]{0.194} & 
      \makebox[0.7cm][c]{27.20} \\
      
      \makebox[4.0cm][c]{SD15-LCM\citep{luo2023latent}} & 
      \makebox[0.4cm][c]{4} & 
      \makebox[0.6cm][c]{UNet} & 
       \makebox[0.8cm][c]{{0.86B}} & 
      \makebox[0.6cm][c]{{-0.23}} & 
      \makebox[0.8cm][c]{5.40} & 
      \makebox[0.75cm][c]{0.214} & 
      \makebox[0.7cm][c]{30.11} \\

      \makebox[4.0cm][c]{SD15-TCD\citep{zheng2024trajectory}} & 
      \makebox[0.4cm][c]{1} & 
      \makebox[0.6cm][c]{UNet} & 
      \makebox[0.8cm][c]{{0.86B}} & 
      \makebox[0.6cm][c]{-1.49} & 
      \makebox[0.8cm][c]{{5.10}} & 
      \makebox[0.75cm][c]{{0.196}} & 
      \makebox[0.7cm][c]{28.30} \\

      \makebox[4.0cm][c]{SD15-TCD\citep{zheng2024trajectory}} & 
      \makebox[0.4cm][c]{4} & 
      \makebox[0.6cm][c]{UNet} & 
      \makebox[0.8cm][c]{{0.86B}} & 
      \makebox[0.6cm][c]{-0.04} & 
      \makebox[0.8cm][c]{{5.28}} & 
      \makebox[0.75cm][c]{{0.212}} & 
      \makebox[0.7cm][c]{30.43} \\
      
      \makebox[4.0cm][c]{PeRFlow\citep{yan2024perflow}} & 
      \makebox[0.4cm][c]{4} & 
      \makebox[0.6cm][c]{UNet} & 
      \makebox[0.8cm][c]{{0.86B}} & 
      \makebox[0.6cm][c]{-0.20} & 
      \makebox[0.8cm][c]{5.51} & 
      \makebox[0.75cm][c]{0.211} & 
      \makebox[0.7cm][c]{{29.54}} \\
      
     \makebox[4.0cm][c]{SD15-Hyper\citep{ren2024hyper}} & 
     \makebox[0.4cm][c]{1} & 
    \makebox[0.6cm][c]{UNet}  & 
    \makebox[0.8cm][c]{{0.86B}} & 
     \makebox[0.6cm][c]{{0.28}} & 
     \makebox[0.8cm][c]{{5.49}} & 
     \makebox[0.75cm][c]{{0.214}} & 
     \makebox[0.7cm][c]{{30.82}} \\
       
     \makebox[4.0cm][c]{SD15-Hyper\citep{ren2024hyper}} & 
     \makebox[0.4cm][c]{4} & 
    \makebox[0.6cm][c]{UNet}  & 
    \makebox[0.8cm][c]{{0.86B}} & 
     \makebox[0.6cm][c]{{0.42}} & 
     \makebox[0.8cm][c]{{5.41}} & 
     \makebox[0.75cm][c]{{0.217}} & 
     \makebox[0.7cm][c]{{31.03}} \\

     \makebox[4.0cm][c]{SD15-InstaFlow\citep{liu2023instaflow}} & 
     \makebox[0.4cm][c]{1} & 
    \makebox[0.6cm][c]{UNet}  & 
    \makebox[0.8cm][c]{{0.86B}} & 
     \makebox[0.6cm][c]{{-0.16}} & 
     \makebox[0.8cm][c]{{5.03}} & 
     \makebox[0.75cm][c]{{0.207}} & 
     \makebox[0.7cm][c]{{30.68}} \\

     \makebox[4.0cm][c]{SD15-SiDLSG\citep{zhou2024long}} & 
     \makebox[0.4cm][c]{1} & 
    \makebox[0.6cm][c]{UNet}  & 
    \makebox[0.8cm][c]{{0.86B}} & 
     \makebox[0.6cm][c]{{-0.18}} & 
     \makebox[0.8cm][c]{{5.16}} & 
     \makebox[0.75cm][c]{{0.210}} & 
     \makebox[0.7cm][c]{{30.04}} \\

      \makebox[4.0cm][c]{{SDXL-Base}\citep{rombach2022high}}  & 
      \makebox[0.4cm][c]{{25}} & 
      \makebox[0.6cm][c]{{UNet}} & 
     \makebox[0.8cm][c]{{2.6B}} &
      \makebox[0.6cm][c]{{0.74}}   & 
      \makebox[0.8cm][c]{{5.57}}  & 
      \makebox[0.75cm][c]{{0.226}} &
      \makebox[0.7cm][c]{{\textbf{31.83}}} \\

      \makebox[4.0cm][c]{{SDXL-DMD2-1024}\citep{yin2024improved}}  & 
      \makebox[0.4cm][c]{{1}} & 
      \makebox[0.6cm][c]{{UNet}} & 
     \makebox[0.8cm][c]{{2.6B}} &
      \makebox[0.6cm][c]{{0.82}}   & 
      \makebox[0.8cm][c]{{5.45}}  & 
      \makebox[0.75cm][c]{{0.224}} &
      \makebox[0.7cm][c]{{31.78}} \\

      \makebox[4.0cm][c]{{SDXL-DMD2-1024}\citep{yin2024improved}}  & 
      \makebox[0.4cm][c]{{4}} & 
      \makebox[0.6cm][c]{{UNet}} & 
     \makebox[0.8cm][c]{{2.6B}} &
      \makebox[0.6cm][c]{{0.87}}   & 
      \makebox[0.8cm][c]{{5.52}}  & 
      \makebox[0.75cm][c]{\textbf{0.231}} &
      \makebox[0.7cm][c]{{31.50}} \\

      \makebox[4.0cm][c]{{SDXL-DMD2-512}\citep{yin2024improved}}  & 
      \makebox[0.4cm][c]{{1}} & 
      \makebox[0.6cm][c]{{UNet}} & 
     \makebox[0.8cm][c]{{2.6B}} &
      \makebox[0.6cm][c]{{0.36}}   & 
      \makebox[0.8cm][c]{{5.03}}  & 
      \makebox[0.75cm][c]{{0.215}} &
      \makebox[0.7cm][c]{{31.54}} \\

      \makebox[4.0cm][c]{{SDXL-DMD2-512}\citep{yin2024improved}}  & 
      \makebox[0.4cm][c]{{4}} & 
      \makebox[0.6cm][c]{{UNet}} & 
     \makebox[0.8cm][c]{{2.6B}} &
      \makebox[0.6cm][c]{{-0.18}}   & 
      \makebox[0.8cm][c]{{5.17}}  & 
      \makebox[0.75cm][c]{{0.206}} &
      \makebox[0.7cm][c]{{29.28}} \\

      \makebox[4.0cm][c]{{SD15-DMD2-512}\citep{yin2024improved}}  & 
      \makebox[0.4cm][c]{{1}} & 
      \makebox[0.6cm][c]{{UNet}} & 
     \makebox[0.8cm][c]{{2.6B}} &
      \makebox[0.6cm][c]{{-0.12}}   & 
      \makebox[0.8cm][c]{{5.24}}  & 
      \makebox[0.75cm][c]{{0.211}} &
      \makebox[0.7cm][c]{{30.00}} \\

      \makebox[4.0cm][c]{{SD21-Turbo}\citep{sauer2023adversarial}}  & 
      \makebox[0.4cm][c]{{1}} & 
      \makebox[0.6cm][c]{{UNet}} & 
     \makebox[0.8cm][c]{{0.86B}} &
      \makebox[0.6cm][c]{{0.56}}   & 
      \makebox[0.8cm][c]{{5.47}}  & 
      \makebox[0.75cm][c]{{0.225}} &
      \makebox[0.7cm][c]{{31.50}} \\

      \makebox[4.0cm][c]{{PixelArt-$\alpha$-512}\citep{Chen2023PixArtFT}}  & 
      \makebox[0.4cm][c]{{25}} & 
      \makebox[0.6cm][c]{{DiT}} & 
     \makebox[0.8cm][c]{{0.6B}} &
      \makebox[0.6cm][c]{{0.82}}   & 
      \makebox[0.8cm][c]{{6.01}}  & 
      \makebox[0.75cm][c]{{0.227}} &
      \makebox[0.7cm][c]{{31.20}} \\

      \makebox[4.0cm][c]{{PixelArt-$\alpha$-512}\citep{Chen2023PixArtFT}}  & 
      \makebox[0.4cm][c]{{15}} & 
      \makebox[0.6cm][c]{{DiT}} & 
     \makebox[0.8cm][c]{{0.6B}} &
      \makebox[0.6cm][c]{0.82}   & 
      \makebox[0.8cm][c]{6.03}  & 
      \makebox[0.75cm][c]{0.226} &
      \makebox[0.7cm][c]{31.16} \\

\makebox[4.0cm][c]{\textbf{SD15-DI++}\ ($\alpha_r$=0, $\alpha_c$=1.5)} & 
\makebox[0.4cm][c]{1} & 
\makebox[0.6cm][c]{UNet} & 
\makebox[0.8cm][c]{{0.86B}} & 
\makebox[0.6cm][c]{{0.29}} & 
\makebox[0.8cm][c]{{5.26}} & 
\makebox[0.75cm][c]{{0.216}} & 
\makebox[0.7cm][c]{{30.64}} \\

\makebox[4.0cm][c]{\textbf{SD15-DI++}\ ($\alpha_r$=100, $\alpha_c$=1.5)} & 
\makebox[0.4cm][c]{1} & 
\makebox[0.6cm][c]{UNet} & 
\makebox[0.8cm][c]{{0.86B}} & 
\makebox[0.6cm][c]{{0.46}} & 
\makebox[0.8cm][c]{{5.44}} & 
\makebox[0.75cm][c]{{0.218}} & 
\makebox[0.7cm][c]{{30.33}} \\

\makebox[4.0cm][c]{\textbf{SD15-DI++}\ ($\alpha_r$=100, $\alpha_c$=4.5)} & 
\makebox[0.4cm][c]{1} & 
\makebox[0.6cm][c]{UNet} & 
\makebox[0.8cm][c]{{0.86B}} & 
\makebox[0.6cm][c]{{0.45}} & 
\makebox[0.8cm][c]{{5.30}} & 
\makebox[0.75cm][c]{0.217} & 
\makebox[0.7cm][c]{31.00} \\

\makebox[4.0cm][c]{\textbf{SD15-DI++}\ ($\alpha_r$=1000, $\alpha_c$=1.5)} & 
\makebox[0.4cm][c]{1} & 
\makebox[0.6cm][c]{UNet} & 
\makebox[0.8cm][c]{{0.86B}} & 
\makebox[0.6cm][c]{{0.82}} & 
\makebox[0.8cm][c]{{5.78}} & 
\makebox[0.75cm][c]{{0.219}} & 
\makebox[0.7cm][c]{{30.30}} \\

\makebox[4.0cm][c]{\textbf{DiT-DI++}\ ($\alpha_r$=0, $\alpha_c$=4.5)} & 
\makebox[0.4cm][c]{1} & 
\makebox[0.6cm][c]{DiT} & 
\makebox[0.8cm][c]{{0.6B}} & 
\makebox[0.6cm][c]{{0.74}} & 
\makebox[0.8cm][c]{{5.91}} & 
\makebox[0.75cm][c]{{0.225}} & 
\makebox[0.7cm][c]{{31.04}} \\

\makebox[4.0cm][c]{\textbf{DiT-DI++}\ ($\alpha_r$=1, $\alpha_c$=4.5)} & 
\makebox[0.4cm][c]{1} & 
\makebox[0.6cm][c]{DiT} & 
\makebox[0.8cm][c]{{0.6B}} & 
\makebox[0.6cm][c]{{0.85}} & 
\makebox[0.8cm][c]{{6.03}} & 
\makebox[0.75cm][c]{{0.224}} & 
\makebox[0.7cm][c]{{30.76}} \\

\makebox[4.0cm][c]{\textbf{DiT-DI++}\ ($\alpha_r$=10, $\alpha_c$=4.5)} & 
\makebox[0.4cm][c]{1} & 
\makebox[0.6cm][c]{DiT} & 
\makebox[0.8cm][c]{{0.6B}} & 
\makebox[0.6cm][c]{{\textbf{1.24}}} & 
\makebox[0.8cm][c]{{\textbf{6.19}}} & 
\makebox[0.75cm][c]{{0.225}} & 
\makebox[0.7cm][c]{{30.80}} \\

      \bottomrule[1pt]
    \end{tabular}
\end{table}

\subsection{Quantitative Evaluations with Standard Scores}
In this section, we quantitatively evaluate one-step models aligned with DI++ together with other text-to-image generative models. 

\paragraph{Evaluation Metrics.}
We evaluate five widely used human preference scores: the human preference score (HPSv2.0) \citep{hpsv2}, the ImageReward\citep{xu2023imagereward}, the Aesthetic Score \citep{laionaes}, the PickScore\citep{pickscore}, and the CLIPScore\citep{radford2015unsupervised}. Among them, the HPSv2.0 has a standard prompt dataset for evaluations, therefore we follow its tradition to evaluate model scores. For the other four scores, we pick 1k text prompts from the MSCOCO-2017 validation dataset and evaluate all models on these prompts. 

\paragraph{DiT-based DI++ Model Shows the Best Performance.}
As Table \ref{tab:hpsv2} shows, the DiT-DI++ model shows a leading HPSv2.0 of \textbf{28.48} with only 0.6B parameters, outperforming other open-sourced models which have sizes varying from 0.6B to 10+B. In Table \ref{quantitative}, the DiT-DI++ model achieves an ImageReward of \textbf{1.24} and an Aesthetic score of \textbf{6.19}. Besides, in both Table \ref{tab:hpsv2} and \ref{quantitative}, the SD1.5-DI++ models show improved scores than SD1.5 diffusion model, Hyper-SD, and SD1.5 diffusion DPO\citep{wallace2024diffusion}.  

\vspace{-3mm}

\paragraph{The Zero-shot Generalization Ability of Aligned Models.}
Another interesting finding of the DI++ algorithm, as well as the human preference alignment of the one-step generator model, is its zero-shot generalization ability. As Table \ref{tab:hpsv2} and \ref{quantitative} show, models aligned with Image Reward not only show a dominating Image Reward metric but also show strong aesthetic scores and HPSv2.0. Such a zero-shot generalization ability indicates that if the models are aligned with a sufficiently good reward function with DI++, they can readily generalize well to other human preference scores. 

\paragraph{Comparing with Other Few-step Generator Models.}
Table \ref{tab:hpsv2} and \ref{quantitative} show the superior advantage of DI++ aligned one-step model over other few-step models. In this paragraph, we give Figure \ref{fig:other_fewstep} for a qualitative comparison of our models with other few-step models in Table \ref{quantitative}. When compared with other few-step generative models, the aligned model shows better aesthetic quality. 

{
\paragraph{Ablation Comparison}
Table \ref{quantitative} shows an ablation comparison of the effects of Diff-Instruct++ with different explicit reward scales $\alpha_r$ and classifier-free guidance reward scales $\alpha_c$. For instance, for the SD1.5 model, we find that a larger explicit reward scale always leads to higher Image Reward and Aesthetic Score, while it may trade off the CLIPScore which represents the image-text semantic alignment. Such an observation indicates a trade-off between the image-text alignment and human-preference alignment if we solely use human-preference reward as the implicit reward. One possible solution is to also add the image-text alignment scores such as the CLIPScore as an explicit reward. Besides, we also find that a CFG reward scale that is too large might hurt the human-preference alignment. For instance, the SD1.5-DI++ one-step model with $(\alpha_r=100, \alpha_c=1.5)$ shows better score than models with $(\alpha_r=100, \alpha_c=4.5)$. This indicates that though CFG is practically useful in diffusion sampling, it is not perfectly aligned with human preferences. Therefore, finding a proper CFG reward scale is important for optimal human-preference alignment with Diff-Instruct++. }

\vspace{-3mm}
\subsection{Qualitative Evaluations}\label{sec:analyze}
In this section, we evaluate all one-step text-to-image generator models, showing that the DI++-aligned model shows improved human preference performances. 
Before the quantitative evaluations, we first give a qualitative comparison of the models with and without alignment. 

\vspace{-3mm}
{
\paragraph{The Effects of Human Reward and CFG Reward.}
Figure \ref{fig:qualitative} shows a visualization comparison of DiT-based one-step models trained with DI++ with different image reward scales and CFG reward scales. We will give some intuition to better understand the effects of two rewards.
\begin{enumerate}
    \item From the bottom to the top, the first row of Figure 3* is the generated results from models trained with no image reward and no CFG reward (i.e., scales are set to 0). These images are pretty realistic (such as the coast image), but are of poor details and aesthetic appearance;
    \item The second row is the model trained only with a unit scale of image reward. These images are much better than DI++ models with both rewards. However, we can see that solely using image rewards results in vivid colors. This is possible because of the fact that human beings would prefer colorful images with great details. However, only using image rewards results in the objects in the generated images being small, which is possibly not wanted;
    \item The third row is the generated images from models trained only with CFG reward. We can see the resulting images are pretty aesthetic. However, these images have too big objects and worse background details;
    \item The fourth row is the generated images from models trained with unit image reward together with CFG reward. These images show improved image layout, vivid colors, as well as rich details in both foreground and background. We think these models are of the best results. The quantitative HPS score in Table 1 also confirms our observations;
    \item The fifth row is the images from models trained with large image rewards and standard CFG rewards. We find that these images tend to be more like paintings instead of realistic photos. The colors are also much brighter than other models. This shows that a too-large image reward scale may lead to the model generating painting-like images instead of realistic photo-like images. Therefore, researchers are encouraged to carefully choose the image reward scales and the CFG scales to train one-step models with Diff-Instruct++ for their own purposes.
\end{enumerate}
Figure 1 to Figure 6 in the supplementary material shows more comprehensive qualitative results which may bring a deeper understanding of DI++. Due to page limitations, we put more discussions on qualitative findings in Appendix \ref{app:discuss_qualitative}.}

\begin{figure}
\centering
\includegraphics[width=1.0\textwidth]{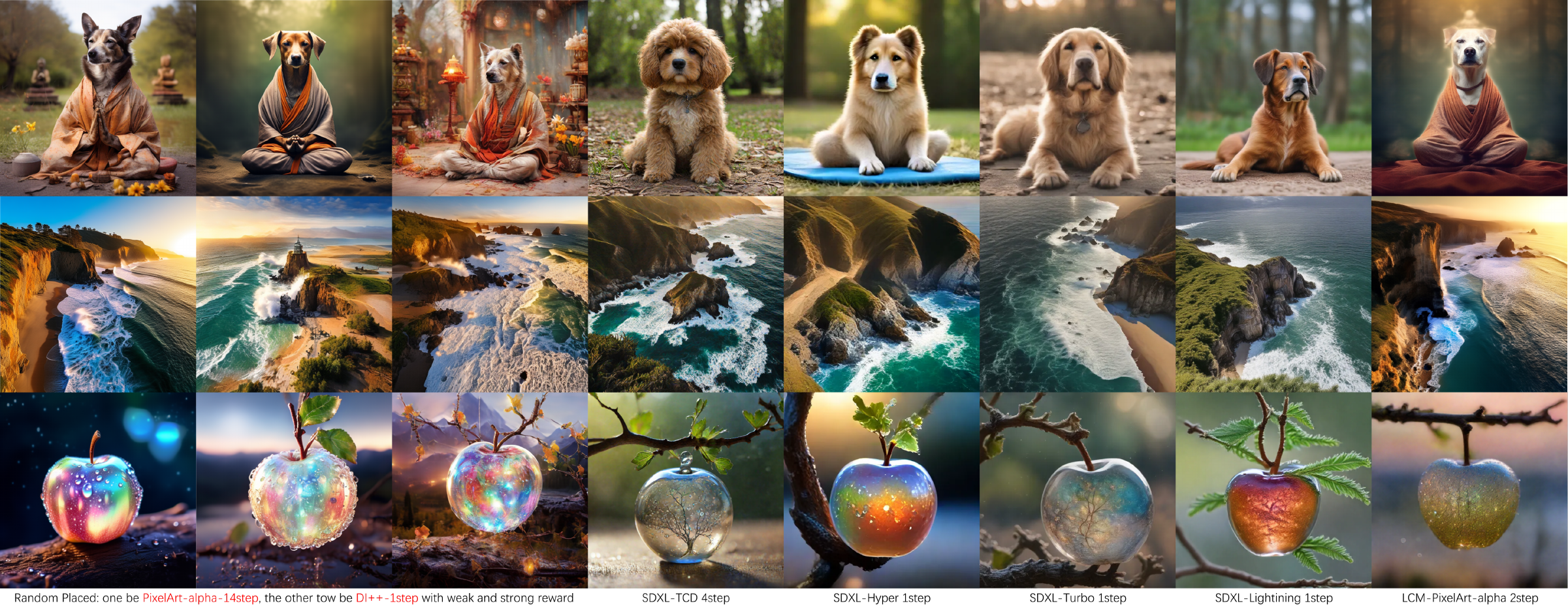}
\caption{Qualitative comparison of our our Diff-Instruct++ aligned models against other few-step text-to-image models in Table \ref{quantitative}. The left three columns are randomly placed, with one generated by PixelArt-$\alpha$ model with 30 steps, one generated by a one-step model aligned with Diff-Instruct++ with a CFG scale of 4.5 and reward scale of 1.0, and another generated by a one-step model aligned with 4.5 CFG and 10.0 reward. Please zoom in to check details, lighting, and aesthetic performances. Could you please tell us which one you like the best? We put the answer for each image and prompts for three rows in Appendix \ref{app:prompts_fewstep}.}
\label{fig:other_fewstep}
\end{figure}

{
\subsection{Limitations and Failure Cases of Diff-Instruct++}\label{sec:limitations}
Despite the alignment training stage, the generator model still makes simple mistakes occasionally. Figure \ref{fig:badcase} shows some bad generation cases that we picked with multiple generation trails. \textbf{(1)} The Aligned Model Still Misunderstands the Input Prompt. As the leftmost three images of Figure \ref{fig:badcase} show, the generated images ignore the concept of \emph{pear earrings}, the \emph{battling in a coffee cup}, and the \emph{car playing} football. However, we find such a mistake happens only occasionally. 
\textbf{(2)} The Generator Sometimes Generates Bad Human Faces and Hands. Please see the fourth and fifth images of Figure \ref{fig:badcase}. The face of the generated lady in the fourth image is not satisfying with blurred eyes and mouth. In the fifth image, the generated Iron Man character has multiple hands. \textbf{(3)} Sometimes The aligned model Still Can not Count Correctly. For instance, in the rightmost image, the prompt asks the model to generate \emph{a birthday cake}, however, the model generates two cakes with one near and another lying far away. Besides, as Figure \ref{fig:qualitative} shows, very strong explicit reward scales lead the image to be very colorful with vivid details. This may cause generated images to look more like paintings instead of realistic photos. Therefore, we suggest researchers carefully choose the explicit reward scales according to their alignment purpose when using Diff-Instruct++.}

\begin{figure}
\centering
\includegraphics[width=1.0\textwidth]{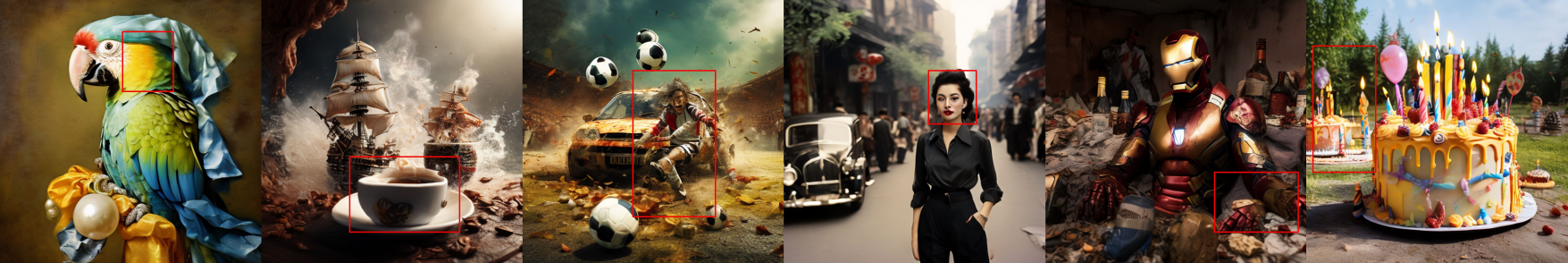}
\caption{Bad generation cases by aligned one-step generator model (4.5 CFG + 1.0 reward).}
\label{fig:badcase}
\end{figure}

\section{Conclusion and Future Works}
In this paper, we have presented the Diff-Instruct++ method, the first attempt to align one-step text-to-image generator models with human preference. By formulating the problem as a maximization of expected human reward functions with an IKL divergence regularization, we have developed practical loss functions and a fast-converging yet image data-free alignment algorithm. We also establish theoretical connections of Diff-Instruct++ with previous methods, pointing out that the commonly used classifier-free guidance is secretly doing Diff-Instruct++. Besides, we also introduce a three-stage workflow to develop one-step text-to-image generator models: the pre-training, the reward modeling, and the alignment stage. We train one-step generator models with different alignment configurations and demonstrate the superior advantage of Diff-Instruct++ with a human reward that improves the sample quality and better prompt alignment. 

While Diff-Instruct++ does not completely eliminate the occurrence of simple mistakes in image generation, our findings strongly suggest that this approach represents a promising direction for aligning one-step generators with human preferences. We think our work can shed light on future research in improving the responsiveness and accuracy of text-to-image generation models, bringing us closer to AGI systems that can more faithfully interpret and execute human intentions in visual content creation. 

\subsubsection*{Acknowledgments}
First, we would like to acknowledge Zhengyang Geng for helpful discussions on experiment settings and the overall writing of the paper. We also appreciate the authors of Diff-Instruct\citep{luo2023diffinstruct}, Score-implicit Matching\citep{luo2024onestep} and Long-short Score-identity Distillation\citep{zhou2024long} for their high-quality codebases on diffusion distillation. Second, we would also like to acknowledge Xiaohongshu Inc. for its support of computing on ablation experiments. Finally, we would like to thank the reviewers and editors of the Diff-Instruct++ paper for their professional opinions and wonderful efforts in organizing the reviewing process.

\newpage
\bibliography{iclr2021_conference}

\begin{thebibliography}{103}
\providecommand{\natexlab}[1]{#1}
\providecommand{\url}[1]{\texttt{#1}}
\expandafter\ifx\csname urlstyle\endcsname\relax
  \providecommand{\doi}[1]{doi: #1}\else
  \providecommand{\doi}{doi: \begingroup \urlstyle{rm}\Url}\fi

\bibitem[Achiam et~al.(2023)Achiam, Adler, Agarwal, Ahmad, Akkaya, Aleman, Almeida, Altenschmidt, Altman, Anadkat, et~al.]{achiam2023gpt}
Josh Achiam, Steven Adler, Sandhini Agarwal, Lama Ahmad, Ilge Akkaya, Florencia~Leoni Aleman, Diogo Almeida, Janko Altenschmidt, Sam Altman, Shyamal Anadkat, et~al.
\newblock Gpt-4 technical report.
\newblock \emph{arXiv preprint arXiv:2303.08774}, 2023.

\bibitem[Black et~al.(2023)Black, Janner, Du, Kostrikov, and Levine]{black2023training}
Kevin Black, Michael Janner, Yilun Du, Ilya Kostrikov, and Sergey Levine.
\newblock Training diffusion models with reinforcement learning.
\newblock \emph{arXiv preprint arXiv:2305.13301}, 2023.

\bibitem[Brooks et~al.(2024)Brooks, Peebles, Holmes, DePue, Guo, Jing, Schnurr, Taylor, Luhman, Luhman, Ng, Wang, and Ramesh]{videoworldsimulators2024}
Tim Brooks, Bill Peebles, Connor Holmes, Will DePue, Yufei Guo, Li~Jing, David Schnurr, Joe Taylor, Troy Luhman, Eric Luhman, Clarence Ng, Ricky Wang, and Aditya Ramesh.
\newblock Video generation models as world simulators.
\newblock 2024.
\newblock URL \url{https://openai.com/research/video-generation-models-as-world-simulators}.

\bibitem[Chen et~al.(2023)Chen, Yu, Ge, Yao, Xie, Wu, Wang, Kwok, Luo, Lu, and Li]{Chen2023PixArtFT}
Junsong Chen, Jincheng Yu, Chongjian Ge, Lewei Yao, Enze Xie, Yue Wu, Zhongdao Wang, James~T. Kwok, Ping Luo, Huchuan Lu, and Zhenguo Li.
\newblock Pixart-$\alpha$: Fast training of diffusion transformer for photorealistic text-to-image synthesis.
\newblock \emph{ArXiv}, abs/2310.00426, 2023.
\newblock URL \url{https://api.semanticscholar.org/CorpusID:263334265}.

\bibitem[Chen et~al.(2019)Chen, Behrmann, Duvenaud, and Jacobsen]{chen2019residual}
Ricky~TQ Chen, Jens Behrmann, David~K Duvenaud, and J{\"o}rn-Henrik Jacobsen.
\newblock Residual flows for invertible generative modeling.
\newblock In \emph{Advances in Neural Information Processing Systems}, pp.\  9916--9926, 2019.

\bibitem[Christiano et~al.(2017)Christiano, Leike, Brown, Martic, Legg, and Amodei]{christiano2017deep}
Paul~F Christiano, Jan Leike, Tom Brown, Miljan Martic, Shane Legg, and Dario Amodei.
\newblock Deep reinforcement learning from human preferences.
\newblock \emph{Advances in neural information processing systems}, 30, 2017.

\bibitem[Clark et~al.(2023)Clark, Vicol, Swersky, and Fleet]{clark2023directly}
Kevin Clark, Paul Vicol, Kevin Swersky, and David~J Fleet.
\newblock Directly fine-tuning diffusion models on differentiable rewards.
\newblock \emph{arXiv preprint arXiv:2309.17400}, 2023.

\bibitem[Couairon et~al.(2022)Couairon, Verbeek, Schwenk, and Cord]{Couairon2022DiffEditDS}
Guillaume Couairon, Jakob Verbeek, Holger Schwenk, and Matthieu Cord.
\newblock Diffedit: Diffusion-based semantic image editing with mask guidance.
\newblock \emph{ArXiv}, abs/2210.11427, 2022.

\bibitem[Dai et~al.(2023)Dai, Hou, Ma, Tsai, Wang, Wang, Zhang, Vandenhende, Wang, Dubey, et~al.]{dai2023emu}
Xiaoliang Dai, Ji~Hou, Chih-Yao Ma, Sam Tsai, Jialiang Wang, Rui Wang, Peizhao Zhang, Simon Vandenhende, Xiaofang Wang, Abhimanyu Dubey, et~al.
\newblock Emu: Enhancing image generation models using photogenic needles in a haystack.
\newblock \emph{arXiv preprint arXiv:2309.15807}, 2023.

\bibitem[Deng et~al.(2024)Deng, Luo, Tan, Bilo{\v{s}}, Chen, Nevmyvaka, and Chen]{dengvariational}
Wei Deng, Weijian Luo, Yixin Tan, Marin Bilo{\v{s}}, Yu~Chen, Yuriy Nevmyvaka, and Ricky~TQ Chen.
\newblock Variational schr$\backslash$" odinger diffusion models.
\newblock \emph{arXiv preprint arXiv:2405.04795}, 2024.

\bibitem[Ding et~al.(2022)Ding, Zheng, Hong, and Tang]{ding2022cogview2}
Ming Ding, Wendi Zheng, Wenyi Hong, and Jie Tang.
\newblock Cogview2: Faster and better text-to-image generation via hierarchical transformers.
\newblock \emph{Advances in Neural Information Processing Systems}, 35:\penalty0 16890--16902, 2022.

\bibitem[Esser et~al.(2021)Esser, Rombach, and Ommer]{esser2021taming}
Patrick Esser, Robin Rombach, and Bjorn Ommer.
\newblock Taming transformers for high-resolution image synthesis.
\newblock In \emph{Proceedings of the IEEE/CVF conference on computer vision and pattern recognition}, pp.\  12873--12883, 2021.

\bibitem[Ethayarajh et~al.(2024)Ethayarajh, Xu, Muennighoff, Jurafsky, and Kiela]{ethayarajh2024kto}
Kawin Ethayarajh, Winnie Xu, Niklas Muennighoff, Dan Jurafsky, and Douwe Kiela.
\newblock Kto: Model alignment as prospect theoretic optimization.
\newblock \emph{arXiv preprint arXiv:2402.01306}, 2024.

\bibitem[Evans et~al.(2024)Evans, Parker, Carr, Zukowski, Taylor, and Pons]{evans2024stable}
Zach Evans, Julian~D Parker, CJ~Carr, Zack Zukowski, Josiah Taylor, and Jordi Pons.
\newblock Stable audio open.
\newblock \emph{arXiv preprint arXiv:2407.14358}, 2024.

\bibitem[Fan et~al.(2024)Fan, Watkins, Du, Liu, Ryu, Boutilier, Abbeel, Ghavamzadeh, Lee, and Lee]{fan2024reinforcement}
Ying Fan, Olivia Watkins, Yuqing Du, Hao Liu, Moonkyung Ryu, Craig Boutilier, Pieter Abbeel, Mohammad Ghavamzadeh, Kangwook Lee, and Kimin Lee.
\newblock Reinforcement learning for fine-tuning text-to-image diffusion models.
\newblock \emph{Advances in Neural Information Processing Systems}, 36, 2024.

\bibitem[Feng et~al.(2023)Feng, Luo, Huang, and Wang]{feng2023lipschitz}
Yasong Feng, Weijian Luo, Yimin Huang, and Tianyu Wang.
\newblock A lipschitz bandits approach for continuous hyperparameter optimization.
\newblock \emph{arXiv preprint arXiv:2302.01539}, 2023.

\bibitem[Geng et~al.(2023)Geng, Pokle, and Kolter]{geng2023onestep}
Zhengyang Geng, Ashwini Pokle, and J~Zico Kolter.
\newblock One-step diffusion distillation via deep equilibrium models.
\newblock In \emph{Thirty-seventh Conference on Neural Information Processing Systems}, 2023.
\newblock URL \url{https://openreview.net/forum?id=b6XvK2de99}.

\bibitem[Geng et~al.(2024)Geng, Pokle, Luo, Lin, and Kolter]{geng2024consistency}
Zhengyang Geng, Ashwini Pokle, William Luo, Justin Lin, and J~Zico Kolter.
\newblock Consistency models made easy.
\newblock \emph{arXiv preprint arXiv:2406.14548}, 2024.

\bibitem[Goodfellow et~al.(2014)Goodfellow, Pouget-Abadie, Mirza, Xu, Warde-Farley, Ozair, Courville, and Bengio]{goodfellow2014generative}
Ian Goodfellow, Jean Pouget-Abadie, Mehdi Mirza, Bing Xu, David Warde-Farley, Sherjil Ozair, Aaron Courville, and Yoshua Bengio.
\newblock Generative adversarial nets.
\newblock In \emph{Advances in neural information processing systems}, pp.\  2672--2680, 2014.

\bibitem[Gu et~al.(2023)Gu, Zhai, Zhang, Liu, and Susskind]{gu2023boot}
Jiatao Gu, Shuangfei Zhai, Yizhe Zhang, Lingjie Liu, and Josh Susskind.
\newblock Boot: Data-free distillation of denoising diffusion models with bootstrapping.
\newblock \emph{arXiv preprint arXiv:2306.05544}, 2023.

\bibitem[Gu et~al.(2022)Gu, Chen, Bao, Wen, Zhang, Chen, Yuan, and Guo]{gu2022vector}
Shuyang Gu, Dong Chen, Jianmin Bao, Fang Wen, Bo~Zhang, Dongdong Chen, Lu~Yuan, and Baining Guo.
\newblock Vector quantized diffusion model for text-to-image synthesis.
\newblock In \emph{Proceedings of the IEEE/CVF conference on computer vision and pattern recognition}, pp.\  10696--10706, 2022.

\bibitem[Heek et~al.(2024)Heek, Hoogeboom, and Salimans]{heek2024multistep}
Jonathan Heek, Emiel Hoogeboom, and Tim Salimans.
\newblock Multistep consistency models.
\newblock \emph{arXiv preprint arXiv:2403.06807}, 2024.

\bibitem[Ho \& Salimans(2022)Ho and Salimans]{ho2022classifier}
Jonathan Ho and Tim Salimans.
\newblock Classifier-free diffusion guidance.
\newblock \emph{arXiv preprint arXiv:2207.12598}, 2022.

\bibitem[Ho et~al.(2022)Ho, Salimans, Gritsenko, Chan, Norouzi, and Fleet]{ho2022video}
Jonathan Ho, Tim Salimans, Alexey Gritsenko, William Chan, Mohammad Norouzi, and David~J Fleet.
\newblock Video diffusion models.
\newblock \emph{arXiv preprint arXiv:2204.03458}, 2022.

\bibitem[Hoogeboom et~al.(2022)Hoogeboom, Satorras, Vignac, and Welling]{hoogeboom2022equivariant}
Emiel Hoogeboom, V{\i}ctor~Garcia Satorras, Cl{\'e}ment Vignac, and Max Welling.
\newblock Equivariant diffusion for molecule generation in 3d.
\newblock In \emph{International Conference on Machine Learning}, pp.\  8867--8887. PMLR, 2022.

\bibitem[Kang et~al.(2023)Kang, Zhu, Zhang, Park, Shechtman, Paris, and Park]{kang2023gigagan}
Minguk Kang, Jun-Yan Zhu, Richard Zhang, Jaesik Park, Eli Shechtman, Sylvain Paris, and Taesung Park.
\newblock Scaling up gans for text-to-image synthesis.
\newblock In \emph{Proceedings of the IEEE Conference on Computer Vision and Pattern Recognition (CVPR)}, 2023.

\bibitem[Karras et~al.(2020)Karras, Laine, Aittala, Hellsten, Lehtinen, and Aila]{karras2020analyzing}
Tero Karras, Samuli Laine, Miika Aittala, Janne Hellsten, Jaakko Lehtinen, and Timo Aila.
\newblock Analyzing and improving the image quality of stylegan.
\newblock In \emph{Proceedings of the IEEE/CVF conference on computer vision and pattern recognition}, pp.\  8110--8119, 2020.

\bibitem[Karras et~al.(2022)Karras, Aittala, Aila, and Laine]{karras2022edm}
Tero Karras, Miika Aittala, Timo Aila, and Samuli Laine.
\newblock Elucidating the design space of diffusion-based generative models.
\newblock In \emph{Proc. NeurIPS}, 2022.

\bibitem[Kim et~al.(2023)Kim, Lai, Liao, Murata, Takida, Uesaka, He, Mitsufuji, and Ermon]{kim2023consistency}
Dongjun Kim, Chieh-Hsin Lai, Wei-Hsiang Liao, Naoki Murata, Yuhta Takida, Toshimitsu Uesaka, Yutong He, Yuki Mitsufuji, and Stefano Ermon.
\newblock Consistency trajectory models: Learning probability flow ode trajectory of diffusion.
\newblock \emph{arXiv preprint arXiv:2310.02279}, 2023.

\bibitem[Kim et~al.(2024)Kim, Lai, Liao, Takida, Murata, Uesaka, Mitsufuji, and Ermon]{kim2024pagoda}
Dongjun Kim, Chieh-Hsin Lai, Wei-Hsiang Liao, Yuhta Takida, Naoki Murata, Toshimitsu Uesaka, Yuki Mitsufuji, and Stefano Ermon.
\newblock Pagoda: Progressive growing of a one-step generator from a low-resolution diffusion teacher.
\newblock \emph{arXiv preprint arXiv:2405.14822}, 2024.

\bibitem[Kim et~al.(2022)Kim, Kim, and Yoon]{kim2021guidedtts}
Heeseung Kim, Sungwon Kim, and Sungroh Yoon.
\newblock Guided-tts: A diffusion model for text-to-speech via classifier guidance.
\newblock In \emph{International Conference on Machine Learning}, pp.\  11119--11133. PMLR, 2022.

\bibitem[Kingma \& Ba(2014)Kingma and Ba]{kingma2014adam}
Diederik~P Kingma and Jimmy Ba.
\newblock Adam: A method for stochastic optimization.
\newblock \emph{arXiv preprint arXiv:1412.6980}, 2014.

\bibitem[Kingma \& Dhariwal(2018)Kingma and Dhariwal]{kingma2018glow}
Durk~P Kingma and Prafulla Dhariwal.
\newblock Glow: Generative flow with invertible 1x1 convolutions.
\newblock In S.~Bengio, H.~Wallach, H.~Larochelle, K.~Grauman, N.~Cesa-Bianchi, and R.~Garnett (eds.), \emph{Advances in Neural Information Processing Systems 31}, pp.\  10215--10224. 2018.

\bibitem[Kirillov et~al.(2023)Kirillov, Mintun, Ravi, Mao, Rolland, Gustafson, Xiao, Whitehead, Berg, Lo, et~al.]{kirillov2023segment}
Alexander Kirillov, Eric Mintun, Nikhila Ravi, Hanzi Mao, Chloe Rolland, Laura Gustafson, Tete Xiao, Spencer Whitehead, Alexander~C Berg, Wan-Yen Lo, et~al.
\newblock Segment anything.
\newblock In \emph{Proceedings of the IEEE/CVF International Conference on Computer Vision}, pp.\  4015--4026, 2023.

\bibitem[Kirstain et~al.(2023)Kirstain, Polyak, Singer, Matiana, Penna, and Levy]{pickscore}
Yuval Kirstain, Adam Polyak, Uriel Singer, Shahbuland Matiana, Joe Penna, and Omer Levy.
\newblock Pick-a-pic: An open dataset of user preferences for text-to-image generation.
\newblock \emph{arXiv preprint arXiv:2305.01569}, 2023.

\bibitem[Kondratyuk et~al.(2023)Kondratyuk, Yu, Gu, Lezama, Huang, Hornung, Adam, Akbari, Alon, Birodkar, et~al.]{kondratyuk2023videopoet}
Dan Kondratyuk, Lijun Yu, Xiuye Gu, Jos{\'e} Lezama, Jonathan Huang, Rachel Hornung, Hartwig Adam, Hassan Akbari, Yair Alon, Vighnesh Birodkar, et~al.
\newblock Videopoet: A large language model for zero-shot video generation.
\newblock \emph{arXiv preprint arXiv:2312.14125}, 2023.

\bibitem[Lee et~al.(2023)Lee, Liu, Ryu, Watkins, Du, Boutilier, Abbeel, Ghavamzadeh, and Gu]{lee2023aligning}
Kimin Lee, Hao Liu, Moonkyung Ryu, Olivia Watkins, Yuqing Du, Craig Boutilier, Pieter Abbeel, Mohammad Ghavamzadeh, and Shixiang~Shane Gu.
\newblock Aligning text-to-image models using human feedback.
\newblock \emph{arXiv preprint arXiv:2302.12192}, 2023.

\bibitem[Liu et~al.(2024{\natexlab{a}})Liu, Li, Wu, and Lee]{liu2024visual}
Haotian Liu, Chunyuan Li, Qingyang Wu, and Yong~Jae Lee.
\newblock Visual instruction tuning.
\newblock \emph{Advances in neural information processing systems}, 36, 2024{\natexlab{a}}.

\bibitem[Liu et~al.(2024{\natexlab{b}})Liu, Xie, Deng, Chen, Tang, Fu, Zha, and Lu]{liu2024scott}
Hongjian Liu, Qingsong Xie, Zhijie Deng, Chen Chen, Shixiang Tang, Fueyang Fu, Zheng-jun Zha, and Haonan Lu.
\newblock Scott: Accelerating diffusion models with stochastic consistency distillation.
\newblock \emph{arXiv preprint arXiv:2403.01505}, 2024{\natexlab{b}}.

\bibitem[Liu et~al.(2021)Liu, Gong, Wu, Zhang, Su, and Liu]{liu2021fusedream}
Xingchao Liu, Chengyue Gong, Lemeng Wu, Shujian Zhang, Hao Su, and Qiang Liu.
\newblock Fusedream: Training-free text-to-image generation with improved clip+ gan space optimization.
\newblock \emph{arXiv preprint arXiv:2112.01573}, 2021.

\bibitem[Liu et~al.(2023)Liu, Zhang, Ma, Peng, et~al.]{liu2023instaflow}
Xingchao Liu, Xiwen Zhang, Jianzhu Ma, Jian Peng, et~al.
\newblock Instaflow: One step is enough for high-quality diffusion-based text-to-image generation.
\newblock In \emph{The Twelfth International Conference on Learning Representations}, 2023.

\bibitem[Luo et~al.(2023{\natexlab{a}})Luo, Tan, Huang, Li, and Zhao]{luo2023latent}
Simian Luo, Yiqin Tan, Longbo Huang, Jian Li, and Hang Zhao.
\newblock Latent consistency models: Synthesizing high-resolution images with few-step inference.
\newblock \emph{arXiv preprint arXiv:2310.04378}, 2023{\natexlab{a}}.

\bibitem[Luo(2023)]{luo2023comprehensive}
Weijian Luo.
\newblock A comprehensive survey on knowledge distillation of diffusion models.
\newblock \emph{arXiv preprint arXiv:2304.04262}, 2023.

\bibitem[Luo \& Zhang(2024)Luo and Zhang]{luodata}
Weijian Luo and Zhihua Zhang.
\newblock Data prediction denoising models: The pupil outdoes the master, 2024.
\newblock URL \url{https://openreview.net/forum?id=wYmcfur889}.

\bibitem[Luo et~al.(2023{\natexlab{b}})Luo, Hu, Zhang, Sun, Li, and Zhang]{luo2023diffinstruct}
Weijian Luo, Tianyang Hu, Shifeng Zhang, Jiacheng Sun, Zhenguo Li, and Zhihua Zhang.
\newblock Diff-instruct: A universal approach for transferring knowledge from pre-trained diffusion models.
\newblock In \emph{Thirty-seventh Conference on Neural Information Processing Systems}, 2023{\natexlab{b}}.
\newblock URL \url{https://openreview.net/forum?id=MLIs5iRq4w}.

\bibitem[Luo et~al.(2023{\natexlab{c}})Luo, Jiang, Hu, Sun, Li, and Zhang]{luo2023training}
Weijian Luo, Hao Jiang, Tianyang Hu, Jiacheng Sun, Zhenguo Li, and Zhihua Zhang.
\newblock Training energy-based models with diffusion contrastive divergences.
\newblock \emph{arXiv preprint arXiv:2307.01668}, 2023{\natexlab{c}}.

\bibitem[Luo et~al.(2024{\natexlab{a}})Luo, Hu, Zhang, Sun, Li, and Zhang]{Luo2023DiffInstructAU}
Weijian Luo, Tianyang Hu, Shifeng Zhang, Jiacheng Sun, Zhenguo Li, and Zhihua Zhang.
\newblock Diff-instruct: A universal approach for transferring knowledge from pre-trained diffusion models.
\newblock \emph{Advances in Neural Information Processing Systems}, 36, 2024{\natexlab{a}}.

\bibitem[Luo et~al.(2024{\natexlab{b}})Luo, Huang, Geng, Kolter, and Qi]{luo2024onestep}
Weijian Luo, Zemin Huang, Zhengyang Geng, J~Zico Kolter, and Guo-Jun Qi.
\newblock One-step diffusion distillation through score implicit matching.
\newblock In \emph{The Thirty-eighth Annual Conference on Neural Information Processing Systems}, 2024{\natexlab{b}}.
\newblock URL \url{https://openreview.net/forum?id=ogk236hsJM}.

\bibitem[Luo et~al.(2024{\natexlab{c}})Luo, Zhang, and Zhang]{luo2024entropy}
Weijian Luo, Boya Zhang, and Zhihua Zhang.
\newblock Entropy-based training methods for scalable neural implicit samplers.
\newblock \emph{Advances in Neural Information Processing Systems}, 36, 2024{\natexlab{c}}.

\bibitem[Meng et~al.(2021)Meng, Song, Song, Wu, Zhu, and Ermon]{meng2021sdedit}
Chenlin Meng, Yang Song, Jiaming Song, Jiajun Wu, Jun-Yan Zhu, and Stefano Ermon.
\newblock Sdedit: Image synthesis and editing with stochastic differential equations.
\newblock \emph{arXiv preprint arXiv:2108.01073}, 2021.

\bibitem[Meng et~al.(2022)Meng, Gao, Kingma, Ermon, Ho, and Salimans]{meng2022distillation}
Chenlin Meng, Ruiqi Gao, Diederik~P Kingma, Stefano Ermon, Jonathan Ho, and Tim Salimans.
\newblock On distillation of guided diffusion models.
\newblock \emph{arXiv preprint arXiv:2210.03142}, 2022.

\bibitem[Nguyen \& Tran(2023)Nguyen and Tran]{nguyen2023swiftbrush}
Thuan~Hoang Nguyen and Anh Tran.
\newblock Swiftbrush: One-step text-to-image diffusion model with variational score distillation.
\newblock \emph{arXiv preprint arXiv:2312.05239}, 2023.

\bibitem[Nichol \& Dhariwal(2021)Nichol and Dhariwal]{nichol2021improved}
Alex Nichol and Prafulla Dhariwal.
\newblock Improved denoising diffusion probabilistic models.
\newblock \emph{arXiv preprint arXiv:2102.09672}, 2021.

\bibitem[Nichol et~al.(2021)Nichol, Dhariwal, Ramesh, Shyam, Mishkin, McGrew, Sutskever, and Chen]{nichol2021glide}
Alex Nichol, Prafulla Dhariwal, Aditya Ramesh, Pranav Shyam, Pamela Mishkin, Bob McGrew, Ilya Sutskever, and Mark Chen.
\newblock Glide: Towards photorealistic image generation and editing with text-guided diffusion models.
\newblock \emph{arXiv preprint arXiv:2112.10741}, 2021.

\bibitem[Oord et~al.(2016)Oord, Dieleman, Zen, Simonyan, Vinyals, Graves, Kalchbrenner, Senior, and Kavukcuoglu]{oord2016wavenet}
Aaron van~den Oord, Sander Dieleman, Heiga Zen, Karen Simonyan, Oriol Vinyals, Alex Graves, Nal Kalchbrenner, Andrew Senior, and Koray Kavukcuoglu.
\newblock Wavenet: A generative model for raw audio.
\newblock \emph{arXiv preprint arXiv:1609.03499}, 2016.

\bibitem[Ouyang et~al.(2022)Ouyang, Wu, Jiang, Almeida, Wainwright, Mishkin, Zhang, Agarwal, Slama, Ray, et~al.]{ouyang2022training}
Long Ouyang, Jeffrey Wu, Xu~Jiang, Diogo Almeida, Carroll Wainwright, Pamela Mishkin, Chong Zhang, Sandhini Agarwal, Katarina Slama, Alex Ray, et~al.
\newblock Training language models to follow instructions with human feedback.
\newblock \emph{Advances in neural information processing systems}, 35:\penalty0 27730--27744, 2022.

\bibitem[Parmar et~al.(2024)Parmar, Park, Narasimhan, and Zhu]{parmar2024one}
Gaurav Parmar, Taesung Park, Srinivasa Narasimhan, and Jun-Yan Zhu.
\newblock One-step image translation with text-to-image models.
\newblock \emph{arXiv preprint arXiv:2403.12036}, 2024.

\bibitem[Peebles \& Xie(2022)Peebles and Xie]{peebles2022scalable}
William Peebles and Saining Xie.
\newblock Scalable diffusion models with transformers.
\newblock \emph{arXiv preprint arXiv:2212.09748}, 2022.

\bibitem[Podell et~al.(2023)Podell, English, Lacey, Blattmann, Dockhorn, M{\"u}ller, Penna, and Rombach]{podell2023sdxl}
Dustin Podell, Zion English, Kyle Lacey, Andreas Blattmann, Tim Dockhorn, Jonas M{\"u}ller, Joe Penna, and Robin Rombach.
\newblock Sdxl: Improving latent diffusion models for high-resolution image synthesis.
\newblock \emph{arXiv preprint arXiv:2307.01952}, 2023.

\bibitem[Pokle et~al.(2022)Pokle, Geng, and Kolter]{pokle2022deep}
Ashwini Pokle, Zhengyang Geng, and J~Zico Kolter.
\newblock Deep equilibrium approaches to diffusion models.
\newblock \emph{Advances in Neural Information Processing Systems}, 35:\penalty0 37975--37990, 2022.

\bibitem[Poole et~al.(2022)Poole, Jain, Barron, and Mildenhall]{poole2022dreamfusion}
Ben Poole, Ajay Jain, Jonathan~T Barron, and Ben Mildenhall.
\newblock Dreamfusion: Text-to-3d using 2d diffusion.
\newblock \emph{arXiv preprint arXiv:2209.14988}, 2022.

\bibitem[Prabhudesai et~al.(2023)Prabhudesai, Goyal, Pathak, and Fragkiadaki]{prabhudesai2023aligning}
Mihir Prabhudesai, Anirudh Goyal, Deepak Pathak, and Katerina Fragkiadaki.
\newblock Aligning text-to-image diffusion models with reward backpropagation.
\newblock \emph{arXiv preprint arXiv:2310.03739}, 2023.

\bibitem[Radford et~al.(2015)Radford, Metz, and Chintala]{radford2015unsupervised}
Alec Radford, Luke Metz, and Soumith Chintala.
\newblock Unsupervised representation learning with deep convolutional generative adversarial networks.
\newblock \emph{arXiv preprint arXiv:1511.06434}, 2015.

\bibitem[Rafailov et~al.(2024)Rafailov, Sharma, Mitchell, Manning, Ermon, and Finn]{rafailov2024direct}
Rafael Rafailov, Archit Sharma, Eric Mitchell, Christopher~D Manning, Stefano Ermon, and Chelsea Finn.
\newblock Direct preference optimization: Your language model is secretly a reward model.
\newblock \emph{Advances in Neural Information Processing Systems}, 36, 2024.

\bibitem[Ramesh et~al.(2021)Ramesh, Pavlov, Goh, Gray, Voss, Radford, Chen, and Sutskever]{ramesh2021zero}
Aditya Ramesh, Mikhail Pavlov, Gabriel Goh, Scott Gray, Chelsea Voss, Alec Radford, Mark Chen, and Ilya Sutskever.
\newblock Zero-shot text-to-image generation.
\newblock In \emph{International Conference on Machine Learning}, pp.\  8821--8831. PMLR, 2021.

\bibitem[Ramesh et~al.(2022)Ramesh, Dhariwal, Nichol, Chu, and Chen]{ramesh2022hierarchical}
Aditya Ramesh, Prafulla Dhariwal, Alex Nichol, Casey Chu, and Mark Chen.
\newblock Hierarchical text-conditional image generation with clip latents.
\newblock \emph{arXiv preprint arXiv:2204.06125}, 2022.

\bibitem[Ren et~al.(2024)Ren, Xia, Lu, Zhang, Wu, Xie, Wang, and Xiao]{ren2024hyper}
Yuxi Ren, Xin Xia, Yanzuo Lu, Jiacheng Zhang, Jie Wu, Pan Xie, Xing Wang, and Xuefeng Xiao.
\newblock Hyper-sd: Trajectory segmented consistency model for efficient image synthesis.
\newblock \emph{arXiv preprint arXiv:2404.13686}, 2024.

\bibitem[Rombach et~al.(2022)Rombach, Blattmann, Lorenz, Esser, and Ommer]{rombach2022high}
Robin Rombach, Andreas Blattmann, Dominik Lorenz, Patrick Esser, and Bj{\"o}rn Ommer.
\newblock High-resolution image synthesis with latent diffusion models.
\newblock In \emph{Proceedings of the IEEE/CVF Conference on Computer Vision and Pattern Recognition}, pp.\  10684--10695, 2022.

\bibitem[Saharia et~al.(2022)Saharia, Chan, Saxena, Li, Whang, Denton, Ghasemipour, Ayan, Mahdavi, Lopes, et~al.]{saharia2022photorealistic}
Chitwan Saharia, William Chan, Saurabh Saxena, Lala Li, Jay Whang, Emily Denton, Seyed Kamyar~Seyed Ghasemipour, Burcu~Karagol Ayan, S~Sara Mahdavi, Rapha~Gontijo Lopes, et~al.
\newblock Photorealistic text-to-image diffusion models with deep language understanding.
\newblock \emph{arXiv preprint arXiv:2205.11487}, 2022.

\bibitem[Salimans \& Ho(2022)Salimans and Ho]{salimans2022progressive}
Tim Salimans and Jonathan Ho.
\newblock Progressive distillation for fast sampling of diffusion models.
\newblock In \emph{International Conference on Learning Representations}, 2022.
\newblock URL \url{https://openreview.net/forum?id=TIdIXIpzhoI}.

\bibitem[Salimans et~al.(2024)Salimans, Mensink, Heek, and Hoogeboom]{salimans2024multistep}
Tim Salimans, Thomas Mensink, Jonathan Heek, and Emiel Hoogeboom.
\newblock Multistep distillation of diffusion models via moment matching.
\newblock \emph{arXiv preprint arXiv:2406.04103}, 2024.

\bibitem[Sauer et~al.(2023{\natexlab{a}})Sauer, Karras, Laine, Geiger, and Aila]{sauer2023stylegan}
Axel Sauer, Tero Karras, Samuli Laine, Andreas Geiger, and Timo Aila.
\newblock Stylegan-t: Unlocking the power of gans for fast large-scale text-to-image synthesis.
\newblock In \emph{International conference on machine learning}, pp.\  30105--30118. PMLR, 2023{\natexlab{a}}.

\bibitem[Sauer et~al.(2023{\natexlab{b}})Sauer, Lorenz, Blattmann, and Rombach]{sauer2023adversarial}
Axel Sauer, Dominik Lorenz, Andreas Blattmann, and Robin Rombach.
\newblock Adversarial diffusion distillation.
\newblock \emph{arXiv preprint arXiv:2311.17042}, 2023{\natexlab{b}}.

\bibitem[Schuhmann(2022)]{laionaes}
Christoph Schuhmann.
\newblock Laion-aesthetics.
\newblock \url{https://laion.ai/blog/laion-aesthetics/}, 2022.
\newblock Accessed: 2023 - 11- 10.

\bibitem[Song \& Dhariwal()Song and Dhariwal]{songimproved}
Yang Song and Prafulla Dhariwal.
\newblock Improved techniques for training consistency models.
\newblock In \emph{The Twelfth International Conference on Learning Representations}.

\bibitem[Song \& Dhariwal(2023)Song and Dhariwal]{song2023improved}
Yang Song and Prafulla Dhariwal.
\newblock Improved techniques for training consistency models.
\newblock \emph{arXiv preprint arXiv:2310.14189}, 2023.

\bibitem[Song \& Ermon(2019)Song and Ermon]{song2019generative}
Yang Song and Stefano Ermon.
\newblock Generative {M}odeling by {E}stimating {G}radients of the {D}ata {D}istribution.
\newblock In \emph{Advances in Neural Information Processing Systems}, pp.\  11918--11930, 2019.

\bibitem[Song et~al.(2020)Song, Sohl-Dickstein, Kingma, Kumar, Ermon, and Poole]{song2021scorebased}
Yang Song, Jascha Sohl-Dickstein, Diederik~P Kingma, Abhishek Kumar, Stefano Ermon, and Ben Poole.
\newblock Score-based generative modeling through stochastic differential equations.
\newblock In \emph{International Conference on Learning Representations}, 2020.

\bibitem[Song et~al.(2023)Song, Dhariwal, Chen, and Sutskever]{song2023consistency}
Yang Song, Prafulla Dhariwal, Mark Chen, and Ilya Sutskever.
\newblock Consistency models.
\newblock \emph{arXiv preprint arXiv:2303.01469}, 2023.

\bibitem[Song et~al.(2024)Song, Sun, and Yin]{song2024sdxs}
Yuda Song, Zehao Sun, and Xuanwu Yin.
\newblock Sdxs: Real-time one-step latent diffusion models with image conditions.
\newblock \emph{arXiv preprint arXiv:2403.16627}, 2024.

\bibitem[Tashiro et~al.(2021)Tashiro, Song, Song, and Ermon]{tashiro2021csdi}
Yusuke Tashiro, Jiaming Song, Yang Song, and Stefano Ermon.
\newblock Csdi: Conditional score-based diffusion models for probabilistic time series imputation.
\newblock In \emph{Advances in Neural Information Processing Systems (NeurIPS)}, 2021.

\bibitem[Tian et~al.(2023)Tian, Mitchell, Yao, Manning, and Finn]{tian2023fine}
Katherine Tian, Eric Mitchell, Huaxiu Yao, Christopher~D Manning, and Chelsea Finn.
\newblock Fine-tuning language models for factuality.
\newblock \emph{arXiv preprint arXiv:2311.08401}, 2023.

\bibitem[Vincent(2011)]{vincent2011connection}
Pascal Vincent.
\newblock A {C}onnection {B}etween {S}core {M}atching and {D}enoising {A}utoencoders.
\newblock \emph{Neural Computation}, 23\penalty0 (7):\penalty0 1661--1674, 2011.

\bibitem[Wallace et~al.(2024)Wallace, Dang, Rafailov, Zhou, Lou, Purushwalkam, Ermon, Xiong, Joty, and Naik]{wallace2024diffusion}
Bram Wallace, Meihua Dang, Rafael Rafailov, Linqi Zhou, Aaron Lou, Senthil Purushwalkam, Stefano Ermon, Caiming Xiong, Shafiq Joty, and Nikhil Naik.
\newblock Diffusion model alignment using direct preference optimization.
\newblock In \emph{Proceedings of the IEEE/CVF Conference on Computer Vision and Pattern Recognition}, pp.\  8228--8238, 2024.

\bibitem[Wang et~al.(2024)Wang, Bai, Luo, Chen, and Sun]{wang2024integrating}
Yifei Wang, Weimin Bai, Weijian Luo, Wenzheng Chen, and He~Sun.
\newblock Integrating amortized inference with diffusion models for learning clean distribution from corrupted images.
\newblock \emph{arXiv preprint arXiv:2407.11162}, 2024.

\bibitem[Wu et~al.(2023)Wu, Hao, Sun, Chen, Zhu, Zhao, and Li]{hpsv2}
Xiaoshi Wu, Yiming Hao, Keqiang Sun, Yixiong Chen, Feng Zhu, Rui Zhao, and Hongsheng Li.
\newblock Human preference score v2: A solid benchmark for evaluating human preferences of text-to-image synthesis.
\newblock \emph{arXiv preprint arXiv:2306.09341}, 2023.

\bibitem[Xiao et~al.(2021)Xiao, Kreis, and Vahdat]{xiao2021tackling}
Zhisheng Xiao, Karsten Kreis, and Arash Vahdat.
\newblock Tackling the generative learning trilemma with denoising diffusion gans.
\newblock In \emph{International Conference on Learning Representations}, 2021.

\bibitem[Xie et~al.(2024)Xie, Xiao, Kingma, Hou, Wu, Murphy, Salimans, Poole, and Gao]{xie2024distillation}
Sirui Xie, Zhisheng Xiao, Diederik~P Kingma, Tingbo Hou, Ying~Nian Wu, Kevin~Patrick Murphy, Tim Salimans, Ben Poole, and Ruiqi Gao.
\newblock Em distillation for one-step diffusion models.
\newblock \emph{arXiv preprint arXiv:2405.16852}, 2024.

\bibitem[Xu et~al.(2023{\natexlab{a}})Xu, Liu, Wu, Tong, Li, Ding, Tang, and Dong]{xu2023imagereward}
Jiazheng Xu, Xiao Liu, Yuchen Wu, Yuxuan Tong, Qinkai Li, Ming Ding, Jie Tang, and Yuxiao Dong.
\newblock Imagereward: Learning and evaluating human preferences for text-to-image generation, 2023{\natexlab{a}}.

\bibitem[Xu et~al.(2023{\natexlab{b}})Xu, Wang, Zhang, Wang, and Shi]{xu2023versatile}
Xingqian Xu, Zhangyang Wang, Gong Zhang, Kai Wang, and Humphrey Shi.
\newblock Versatile diffusion: Text, images and variations all in one diffusion model.
\newblock In \emph{Proceedings of the IEEE/CVF International Conference on Computer Vision}, pp.\  7754--7765, 2023{\natexlab{b}}.

\bibitem[Xu et~al.(2024)Xu, Zhao, Xiao, and Hou]{xu2024ufogen}
Yanwu Xu, Yang Zhao, Zhisheng Xiao, and Tingbo Hou.
\newblock Ufogen: You forward once large scale text-to-image generation via diffusion gans.
\newblock In \emph{Proceedings of the IEEE/CVF Conference on Computer Vision and Pattern Recognition}, pp.\  8196--8206, 2024.

\bibitem[Xue et~al.(2023)Xue, Yi, Luo, Zhang, Sun, Li, and Ma]{xue2023sasolver}
Shuchen Xue, Mingyang Yi, Weijian Luo, Shifeng Zhang, Jiacheng Sun, Zhenguo Li, and Zhi-Ming Ma.
\newblock {SA}-solver: Stochastic adams solver for fast sampling of diffusion models.
\newblock In \emph{Thirty-seventh Conference on Neural Information Processing Systems}, 2023.
\newblock URL \url{https://openreview.net/forum?id=f6a9XVFYIo}.

\bibitem[Yan et~al.(2024)Yan, Liu, Pan, Liew, Liu, and Feng]{yan2024perflow}
Hanshu Yan, Xingchao Liu, Jiachun Pan, Jun~Hao Liew, Qiang Liu, and Jiashi Feng.
\newblock Perflow: Piecewise rectified flow as universal plug-and-play accelerator.
\newblock \emph{arXiv preprint arXiv:2405.07510}, 2024.

\bibitem[Yang et~al.(2024)Yang, Tao, Lyu, Ge, Chen, Shen, Zhu, and Li]{yang2024using}
Kai Yang, Jian Tao, Jiafei Lyu, Chunjiang Ge, Jiaxin Chen, Weihan Shen, Xiaolong Zhu, and Xiu Li.
\newblock Using human feedback to fine-tune diffusion models without any reward model.
\newblock In \emph{Proceedings of the IEEE/CVF Conference on Computer Vision and Pattern Recognition}, pp.\  8941--8951, 2024.

\bibitem[Yin et~al.(2023)Yin, Gharbi, Zhang, Shechtman, Durand, Freeman, and Park]{yin2023one}
Tianwei Yin, Micha{\"e}l Gharbi, Richard Zhang, Eli Shechtman, Fredo Durand, William~T Freeman, and Taesung Park.
\newblock One-step diffusion with distribution matching distillation.
\newblock \emph{arXiv preprint arXiv:2311.18828}, 2023.

\bibitem[Yin et~al.(2024)Yin, Gharbi, Park, Zhang, Shechtman, Durand, and Freeman]{yin2024improved}
Tianwei Yin, Micha{\"e}l Gharbi, Taesung Park, Richard Zhang, Eli Shechtman, Fredo Durand, and William~T Freeman.
\newblock Improved distribution matching distillation for fast image synthesis.
\newblock \emph{arXiv preprint arXiv:2405.14867}, 2024.

\bibitem[Zhang et~al.(2023{\natexlab{a}})Zhang, Luo, and Zhang]{zhang2023enhancing}
Boya Zhang, Weijian Luo, and Zhihua Zhang.
\newblock Enhancing adversarial robustness via score-based optimization.
\newblock In \emph{Thirty-seventh Conference on Neural Information Processing Systems}, 2023{\natexlab{a}}.
\newblock URL \url{https://openreview.net/forum?id=MOAHXRzHhm}.

\bibitem[Zhang et~al.(2023{\natexlab{b}})Zhang, Luo, and Zhang]{zhang2023purify++}
Boya Zhang, Weijian Luo, and Zhihua Zhang.
\newblock Purify++: Improving diffusion-purification with advanced diffusion models and control of randomness.
\newblock \emph{arXiv preprint arXiv:2310.18762}, 2023{\natexlab{b}}.

\bibitem[Zheng \& Yang(2024)Zheng and Yang]{zheng2024diffusion}
Bowen Zheng and Tianming Yang.
\newblock Diffusion models are innate one-step generators.
\newblock \emph{arXiv preprint arXiv:2405.20750}, 2024.

\bibitem[Zheng et~al.(2024)Zheng, Hu, Fan, Wang, Ding, Tao, and Cham]{zheng2024trajectory}
Jianbin Zheng, Minghui Hu, Zhongyi Fan, Chaoyue Wang, Changxing Ding, Dacheng Tao, and Tat-Jen Cham.
\newblock Trajectory consistency distillation.
\newblock \emph{arXiv preprint arXiv:2402.19159}, 2024.

\bibitem[Zhou et~al.(2024{\natexlab{a}})Zhou, Wang, Zheng, and Huang]{zhou2024long}
Mingyuan Zhou, Zhendong Wang, Huangjie Zheng, and Hai Huang.
\newblock Long and short guidance in score identity distillation for one-step text-to-image generation.
\newblock \emph{arXiv preprint arXiv:2406.01561}, 2024{\natexlab{a}}.

\bibitem[Zhou et~al.(2024{\natexlab{b}})Zhou, Zheng, Wang, Yin, and Huang]{zhou2024score}
Mingyuan Zhou, Huangjie Zheng, Zhendong Wang, Mingzhang Yin, and Hai Huang.
\newblock Score identity distillation: Exponentially fast distillation of pretrained diffusion models for one-step generation.
\newblock \emph{arXiv preprint arXiv:2404.04057}, 2024{\natexlab{b}}.

\bibitem[Zhou et~al.(2022)Zhou, Zhang, Chen, Li, Tensmeyer, Yu, Gu, Xu, and Sun]{zhou2022towards}
Yufan Zhou, Ruiyi Zhang, Changyou Chen, Chunyuan Li, Chris Tensmeyer, Tong Yu, Jiuxiang Gu, Jinhui Xu, and Tong Sun.
\newblock Towards language-free training for text-to-image generation.
\newblock In \emph{Proceedings of the IEEE/CVF conference on computer vision and pattern recognition}, pp.\  17907--17917, 2022.

\end{thebibliography}
\bibliographystyle{iclr2021_conference}

\newpage
\appendix

\section*{Broader Impact Statement}
This work is motivated by our aim to increase the positive impact of one-step text-to-image generative models by training them to follow human preferences. By default, one-step generators are either trained over large-scale image-caption pair datasets or distilled from pre-trained diffusion models, which convey only subjective knowledge without human instructions.

Our results indicate that the proposed approach is promising for making one-step generative models more aesthetic, and more preferred by human users. In the longer term, alignment failures could lead to more severe consequences, particularly if these models are deployed in safety-critical situations. For instance, if alignment failures occur, the one-step text-to-image model may generate toxic images with misleading information, and horrible images that can potentially be scary to users. We strongly recommend using our human preference alignment techniques together with AI safety checkers for text-to-image generation to prevent undesirable negative impacts.

\section{Important Materials for Main Content}

\subsection{Pseudo-code of Diff-Instruct++}

\begin{algorithm}[t]
\SetAlgoLined
\KwIn{prompt dataset $\mathcal{C}$, generator $g_{\theta}(\bx_0|\bz,\bm{c})$, prior distribution $p_z$, reward model $r(\bx, \bm{c})$, reward scale $\alpha_{rew}$, CFG scale $\alpha_{cfg}$, reference diffusion model $\bm{s}_{ref}(\bx_t|c,\bm{c})$, TA diffusion $\bm{s}_\phi(\bx_t|t,\bm{c})$, forward diffusion $p(\bx_t|\bx_0)$ \eqref{equ:forwardSDE}, TA diffusion updates rounds $K_{TA}$, time distribution $\pi(t)$, diffusion model weighting $\lambda(t)$, generator IKL loss weighting $w(t)$.}
\While{not converge}{
fix $\theta$, update ${\phi}$ for $K_{TA}$ rounds by
\begin{enumerate}
    \item sample prompt $\bm{c}\sim \mathcal{C}$; sample time $t\sim \pi(t)$; sample $\bz\sim p_z(\bz)$;
    \item generate fake data: $\bx_0 = \operatorname{sg}[g_\theta(\bz,\bm{c})]$; sample noisy data: $\bx_t\sim p_t(\bx_t|\bx_0)$; 
    \item update $\phi$ by minimizing loss: $\mathcal{L}(\phi)=\lambda(t)\|\bm{s}_\phi(\bx_t|t,\bm{c}) - \nabla_{\bx_t}\log p_t(\bx_t|\bx_0)\|_2^2$;
\end{enumerate}
fix $\phi$, update $\theta$ using StaD: 
\begin{enumerate}
    \item sample prompt $\bm{c}\sim \mathcal{C}$; sample time $t\sim \pi(t)$; sample $\bz\sim p_z(\bz)$;
    \item generate fake data: $\bx_0 = g_\theta(\bz,\bm{c})$; sample noisy data: $\bx_t\sim p_t(\bx_t|\bx_0)$; 
    \item compute CFG score: $\Tilde{\bm{s}}_{ref}(\bx_t|t,\bm{c}) = \bm{s}_{ref}(\bx_t|t,\bm{\varnothing}) + \alpha_{cfg}\big[\bm{s}_{ref}(\bx_t|t,\bm{c}) - \bm{s}_{ref}(\bx_t|t,\bm{\varnothing}) \big]$;
    \item compute score difference: $\bm{y}_t \coloneqq \bm{s}_{\phi}(\operatorname{sg}[\bx_t]|t,\bm{c}) - \Tilde{\bm{s}}_{ref}(\operatorname{sg}[\bx_t]|t,\bm{c})$; 
    \item update $\theta$ by minimizing loss: $\mathcal{L}(\theta)= \big\{-\alpha_{rew} r(\bx_0,\bm{c}) + w(t)\bm{y}_t^T \bx_t \big\}$;
\end{enumerate}
}
\Return{$\theta,\phi$.}
\caption{Diff-Instruct++ Pseudo Code.}
\label{alg:dipp_code}
\end{algorithm}

\subsection{Meanings of Hyper-parameters.}\label{app:meanings_of_hyper}
\paragraph{Meanings of Hyper-parameters.} As we can see in Algorithm \ref{alg:dipp} (as well as Algorithm \ref{alg:dipp_code}). Each hyperparameter has its intuitive meaning. The reward scale parameter $\alpha_{rew}$ controls the strength of human preference alignment. The larger the $\alpha_{rew}$ is, the stronger the generator is aligned with human preferences. However, the drawback for a too large $\alpha_{rew}$ might be the loss of diversity and reality. Besides, we empirically find that larger $\alpha_{rew}$ leads to richer generation details and better generation layouts. The CFG scale controls the strength of CFG when computing score functions for the reference diffusion model. As we have shown in Theorem \ref{thm:cfg_rlhf}, the $\alpha_{cfg}$ represents the strength of the implicit reward function \eqref{eqn:cfg_reward}. We empirically find that the best CFG scale for Diff-Instruct++ is the same as the best CFG scale for sampling from the reference diffusion model. The diffusion model weighting $\lambda(t)$ and the generator loss weighting $w(t)$ controls the strengths put on each time level of updating TA diffusion and the student generator. We empirically find that it is decent to set $\lambda(t)$ to be the same as the default training weighting function for the reference diffusion. And it is decent to set the $w(t)=1$ for all time levels in practice. In the following section, we give more discussions on Diff-Instruct++. 

\subsection{Experiment Details for Pre-training and Alignment}\label{app:exp_detail_1}
\paragraph{Experiment Details for Pre-training and Alignment}
We follow the setting of Diff-Instruct \citep{Luo2023DiffInstructAU} to use the same neural network architecture as the reference diffusion model for the one-step generator. The PixelArt-$\alpha$ model is trained using so-called VP diffusion\citep{song2021scorebased}, which first scales the data in the latent space, then adds noise to the scaled latent data. We reformulate the VP diffusion as the form of so-called \emph{data-prediction} proposed in EDM paper \citep{karras2022edm} by re-scaling the noisy data with the inverse of the scale that has been applied to data with VP diffusion. Under the data-prediction formulation, we select a fixed noise $\sigma_{init}$ level to be $\sigma_{init}=2.5$ following the Diff-Instruct and SiD \citep{zhou2024score}. For generation, we first generate a Gaussian vector $\bz\sim p_z=\mathcal{N}(\bm{0}, \sigma_{init}^2\mathbf{I})$. Then we input $\bz$ into the generator to generate the latent. The latent vector can then be decoded by the VAE decoder to turn into an image if needed.

\paragraph{The Training Setup and Costs.}
We train the model with the PyTorch framework. 
In the pre-training stage, we use the official checkpoint of off-the-shelf PixelArt-$\alpha$-512$\times$512 model \footnote{\url{https://huggingface.co/PixelArt-alpha/PixelArt-XL-2-512x512}} as weights of the reference diffusion. We initialize the TA diffusion model with the same weights as the reference diffusion model. We use Diff-Instruct to pre-train the generator. We use the Adam optimizer for both TA diffusion and generation at all stages. For the reference diffusion model, we use a fixed classifier-free guidance scale of 4.5, while for TA diffusion, we do not use classifier-free guidance (i.e., the CFG scale is set to 1.0). We set the Adam optimizer's beta parameters to be $\beta_1 = 0.0$ and $\beta_2=0.999$ for both the pre-training and alignment stages. We use a learning rate of $5e-6$ for both TA diffusion and the student one-step generator. For the one-step generator model, we use the adaptive exponential moving average technique by referring to the implementation of the EDM \citep{karras2022edm}. We pre-train the one-step model on 4 Nvidia A100 GPUs for two days ($4\times 48=192$ GPU hours), with a batch size of 1024. We find that the Diff-Instruct algorithm converges fast, and after the pre-training stage, the generator can generate images with decent quality. 

In the alignment stage, we aim to inspect the one-step generator's behavior with different alignment configurations. Notice that as we have shown in Theorem \ref{thm:cfg_rlhf}, using classifier-free guidance is secretly doing RLHF with Diff-Instruct++, therefore we add both CFG and human reward with different scales to thoroughly study the human preference alignment. More specifically, we align the generator model with five configurations with different CFG scales and reward scales: 
\begin{enumerate}
    \item no CFG and no reward: use a 1.0 CFG scale and a 0.0 reward scale; This is the weakest setting that we regard the model as a baseline with no human preference alignment;
    \item no CFG and weak reward: use a 1.0 CFG scale and 1.0 reward scale;
    \item strong CFG and no reward: 4.5 CFG scale and 0.0 reward scale;
    \item strong CFG and weak reward: 4.5 CFG scale and 1.0 reward scale;
    \item strong CFG and strong reward: 4.5 CFG scale and 10.0 reward scale.
\end{enumerate}

For all alignment training, we initialize the generator with the same weights that we obtained in the pre-training stage. We put the details of how to construct the one-step generator in Appendix \ref{app:t2i}. We also initialize the TA diffusion model with the same weight as the reference diffusion. We use the Image Reward as the human preference reward and use the Diff-Instruct++ algorithm \ref{alg:dipp} (or equivalently the algorithm \ref{alg:dipp_code}) to fine-tune the generator. We also used the Adam optimizer with the parameter $(\beta_1, \beta_2)=(0.0, 0.999)$ for both the generator and the TA diffusion with a batch size of 256. 
For the alignment stage, we use a fixed exponential moving average decay (EMA) rate of 0.95 for all training trials. After the alignment, the generator aligned with both strong CFG and reward model shows significantly improved aesthetic appearance, better generation layout, and richer image details. Figure \ref{fig:t2i} shows a demonstration of the generated images using our aligned one-step generator with a CFG scale $\alpha_{cfg}$ of 4.5 and a reward scale $\alpha_{rew}$ of 1.0. We will analyze these models in detail in Section \ref{sec:analyze}. 

\subsection{More Discussions on Findings of Qualitative Evaluations}\label{app:discuss_qualitative}
There are some other interesting findings when qualitatively evaluate different models. First, we find that the images generated by the aligned model show a better composition when organizing the contents presented in the image. For instance, the main objects of the generated image are smaller and show a more natural layout than other models, with the objects and the background iterating aesthetically. This in turn reveals the human presence: human beings would prefer that the object of an image does not take up all spaces of an image. Second, we find that the aligned model has richer details than the unaligned model. The stronger we align the model, the richer details the model will generate. Sometimes these rich details come as a hint to the readers about the input prompts. Sometimes they just come to improve the aesthetic performance. We think this phenomenon may be caused by the fact that human prefers images with rich details. Another finding is that as the reward scale for alignment becomes stronger, the generated image from the alignment model becomes more colorful and more similar to paintings. Sometimes this leads to a loss of reality to some degree. Therefore, we think that users should choose different aligned one-step models with a trade-off between aesthetic performance and image reality according to the use case. 

\vspace{-3mm}

\section{Theory}

\subsection{Proof of Theorem \ref{thm:rlhf}}\label{app:rlhf}
\begin{proof}
    
Recall that $p_\theta(\cdot)$ is induced by the generator $g_\theta(\cdot)$, therefore the sample is obtained by $\bx = g_\theta(\bz|\bm{c}), \bz \sim p_z$. The term $\bx$ contains parameter through $\bx=g_\theta(\bz|\bm{c}), \bz\sim p_z$. To demonstrate the parameter dependence, we use the notation $p_\theta(\cdot)$. $p_{ref}(\cdot)$ is the reference distribution. The alignment objective writes 

\begin{align}
\mathcal{L}(\theta) &= \mathbb{E}_{\bm{c}, \bz\sim p_z, \atop \bx = g_\theta(\bz|\bm{c})}\bigg\{r(\bx,\bm{c}) + \beta \big[\log p_\theta(\bx|\bm{c}) - \log p_{ref}(\bx|\bm{c})\big] \bigg\} \\
&= \mathbb{E}_{\bm{c}, \bz\sim p_z}\bigg\{r(g_\theta(\bz|\bm{c}),\bm{c}) + \beta \big[\log p_\theta(g_\theta(\bz|\bm{c})|\bm{c}) - \log p_{ref}(g_\theta(\bz|\bm{c})|\bm{c})\big] \bigg\}
\end{align}
Therefore, the $\theta$ gradient of $\mathcal{L}(\theta)$ writes:
\begin{align}\label{eqn:proof_rlhf_1}
    \nonumber
    \frac{\partial}{\partial\theta}\mathcal{L}(\theta) &= \frac{\partial}{\partial\theta}\mathbb{E}_{\bm{c}, \bz\sim p_z}\bigg\{r(g_\theta(\bz|\bm{c}),\bm{c}) + \beta \big[\log p_\theta(g_\theta(\bz|\bm{c})|\bm{c}) - \log p_{ref}(g_\theta(\bz|\bm{c})|\bm{c})\big] \bigg\} \\
    &= \mathbb{E}_{\bm{c}\sim p_c, \bz\sim p_z\atop \bx = g_\theta(\bz|\bm{c})} \nabla_{\bx} \bigg\{ r(\bx,\bm{c}) + \beta \big[\log p_\theta(\bx|\bm{c}) - \log p_{ref}(\bx|\bm{c})\big] \bigg\}\frac{\partial\bx}{\partial\theta} + \mathbb{E}_{\bm{c}\sim p_c,\atop \bx \sim p_\theta(\cdot|\bm{c})} \beta \frac{\partial}{\partial\theta}\log p_\theta(\bx|\bm{c})
\end{align}
We can see, that the first term of equation \eqref{eqn:proof_rlhf_1} is the result of Theorem \ref{equ:rlhf_theta_grad}. Next, we turn to show that the second term of \eqref{eqn:proof_rlhf_1} will vanish. 
\begin{align}\label{eqn:proof_rlhf_2}
    \mathbb{E}_{\bm{c}\sim p_c\atop \bx \sim p_\theta(\cdot|\bm{c})} \frac{\partial}{\partial\theta}\log p_\theta(\bx|\bm{c}) &= \mathbb{E}_{\bm{c}\sim p_c\atop \bx \sim p_\theta(\cdot|\bm{c})} \frac{\partial}{\partial\theta}\log p_\theta(\bx|\bm{c}) \nonumber\\
&= \mathbb{E}_{\bm{c}\sim p_c} \int \frac{1}{p_\theta(\bx|\bm{c})} \bigg\{ \frac{\partial}{\partial\theta} p_\theta(\bx|\bm{c})\bigg\} p_\theta(\bx|\bm{c}) \diff \bx \nonumber\\
&= \mathbb{E}_{\bm{c}\sim p_c} \int \bigg\{ \frac{\partial}{\partial\theta} p_\theta(\bx|\bm{c})\bigg\} \diff \bx \\
&= \mathbb{E}_{\bm{c}\sim p_c} \frac{\partial}{\partial\theta} \int \bigg\{ p_\theta(\bx|\bm{c})\bigg\} \diff \bx \nonumber\\
&= \mathbb{E}_{\bm{c}\sim p_c} \frac{\partial}{\partial\theta} \bm{1} \nonumber\\
&= \bm{0}\\
\end{align}
The equality \eqref{eqn:proof_rlhf_2} holds if function $p_\theta(\bx|\bm{c})$ satisfies the conditions (1). $p_\theta(\bx|\bm{c})$ is LebeStaue integrable for $\bx$ with each $\theta$; (2). For almost all $\bx \in \mathbb{R}^D$, the partial derivative $\partial p_\theta(\bx|\bm{c})/\partial\theta$ exists for all $\theta\in \Theta$. (3) there exists an integrable function $h(.): \mathbb{R}^D \to \mathbb{R}$, such that $p_\theta(\bx|\bm{c}) \leq h(\bx)$ for all $\bx$ in its domain. Then the derivative w.r.t $\theta$ can be exchanged with the integral over $\bx$, i.e. 
\begin{align*}
\int \frac{\partial}{\partial\theta}p_\theta(\bx|\bm{c})\diff \bx = \frac{\partial}{\partial \theta} \int p_\theta(\bx|\bm{c})\diff \bx.
\end{align*}
\end{proof}

\subsection{Proof of Theorem \ref{thm:rlhf_ikl}}\label{app:rlhf_ikl}
\begin{proof}

The proof of Theorem \ref{thm:rlhf_ikl} is a direct generalization of the proof of Theorem \ref{thm:rlhf} as we put in Appendix \ref{app:rlhf}.

\begin{align*}
    \operatorname{Grad}(\theta)= \mathbb{E}_{\bm{c},t, \bz\sim p_z, \bx_0 = g_\theta(\bz|\bm{c})\atop \bx_t|\bx_0\sim p(\bx_t|\bx_0)}\bigg\{-\nabla_{\bx_0} r(\bx_0,\bm{c}) + \beta w(t) \big[ \bm{s}_\theta(\bx_t|t,\bm{c}) - \bm{s}_{ref}(\bx_t|t,\bm{c})\big]\frac{\partial \bx_t}{\partial\theta} \bigg\}
\end{align*}
    
Recall the definition of $p_{\theta}(\cdot|t,\bm{c})$, the sample is obtained by $\bx_0 = g_\theta(\bz|\bm{c}), \bz \sim p_z$, and $\bx_t|\bx_0 \sim p_t(\bx_t|\bx_0)$ according to forward SDE \eqref{equ:forwardSDE}. Since the solution of forward, SDE is uniquely determined by the initial point $\bx_0$ and a trajectory of Wiener process $\bm{w}_{t\in [0,T]}$, we slightly abuse the notation and let $\bx_t = \mathcal{F}(g_\theta(\bz|\bm{c}), \bm{w}, t)$ to represent the solution of $\bx_t$ generated by $\bx_0$ and $\bm{w}$. We let $\bm{w}_{[0,1]}\sim \mathbb{P}_{\bm{w}}$ to demonstrate a trajectory from the Wiener process where $\mathbb{P}_{\bm{w}}$ represents the path measure of Weiner process on $t\in [0,T]$. There are two terms that contain the generator's parameter $\theta$. The term $\bx_t$ contains parameter through $\bx_0=g_\theta(\bz|\bm{c}), \bz\sim p_z$. The marginal density $p_{\theta}(\cdot|t,\bm{c})$ also contains parameter $\theta$ implicitly since $p_{\theta}(\cdot|t,\bm{c})$ is initialized with the generator output distribution $p_{\theta}(\cdot|t=0,\bm{c})$.

The $p_{ref}(\cdot|t,\bm{c})$ is defined through the pre-trained diffusion models with score functions $\bm{s}_{ref}(\cdot|t,\bm{c})$. The alignment objective between $p_{\theta}(\cdot|t,\bm{c})$ and $p_{ref}(\cdot|t,\bm{c})$ is defined with, 
\begin{align}
    \mathcal{L}(\theta)= \mathbb{E}_{\bm{c},t, \bz\sim p_z, \bx_0 = g_\theta(\bz|\bm{c})\atop \bx_t|\bx_0\sim p(\bx_t|\bx_0)}\bigg\{-r(\bx_0,\bm{c}) + \beta w(t) \big[ \log p_\theta(\bx_t|t,\bm{c}) - \log p_{ref}(\bx_t|t,\bm{c})\big] \bigg\}
\end{align}
Therefore, the $\theta$ gradient of $\mathcal{L}(\theta)$ writes:
\begin{align}\label{eqn:proof_rlhf_3}
    \nonumber
    \frac{\partial}{\partial\theta}\mathcal{L}(\theta) &= \frac{\partial}{\partial\theta}\mathbb{E}_{\bm{c}, \bz\sim p_z \atop \bm{w}\sim \mathbb{P}_{\bm{w}}}\bigg\{r(g_\theta(\bz|\bm{c}),\bm{c}) + \beta \big[\log p_\theta(\mathcal{F}(g_\theta(\bz|\bm{c}), \bm{w},t))|t,\bm{c}) - \log p_{ref}(\mathcal{F}(g_\theta(\bz|\bm{c}), \bm{w},t))|t,\bm{c})\big] \bigg\} \\
    &= \mathbb{E}_{\bm{c}\sim p_c, \bz\sim p_z, \bx_0=g_\theta(\bz|\bm{c})\atop \bx_t = \mathcal{F}(\bx_0, \bm{w},t))} \bigg\{ \nabla_{\bx_0} r(\bx_0,\bm{c})\frac{\partial\bx_0}{\partial\theta} + \beta \big[\nabla_{\bx_t}\log p_\theta(\bx_t|\bm{c}) - \nabla_{\bx_t} \log p_{ref}(\bx_t|\bm{c})\big]\frac{\partial\bx_t}{\partial\theta} \bigg\} \\
    \label{eqn:proof_rlhf_4}
    & + \mathbb{E}_{\bm{c}\sim p_c, \bz\sim p_z,\atop \bx_t \sim p_\theta(\bx_t|t,\bm{c})} \beta \frac{\partial}{\partial\theta}\log p_\theta(\bx_t|t,\bm{c})
\end{align}
The first term \eqref{eqn:proof_rlhf_3} is what we want in the Theorem \ref{thm:rlhf_ikl}. We will show that the second term \eqref{eqn:proof_rlhf_4} will vanish under mild conditions. The term \eqref{eqn:proof_rlhf_4} writes
\begin{align}\label{eqn:proof_rlhf_5}
    \mathbb{E}_{\bm{c}\sim p_c, \bz\sim p_z,\atop \bx_t \sim p_\theta(\bx_t|t,\bm{c})} \frac{\partial}{\partial\theta}\log p_\theta(\bx_t|t,\bm{c}) &= \mathbb{E}_{\bm{c}\sim p_c\atop \bx \sim p_\theta(\cdot|\bm{c})} \frac{\partial}{\partial\theta}\log p_\theta(\bx_t|t,\bm{c}) \nonumber\\
    &= \mathbb{E}_{\bm{c}\sim p_c} \int \frac{1}{p_\theta(\bx_t|t,\bm{c})} \bigg\{ \frac{\partial}{\partial\theta} p_\theta(\bx_t|t,\bm{c})\bigg\} p_\theta(\bx_t|t,\bm{c}) \diff \bx \nonumber\\
    &= \mathbb{E}_{\bm{c}\sim p_c} \int \bigg\{ \frac{\partial}{\partial\theta} p_\theta(\bx_t|t,\bm{c})\bigg\} \diff \bx \\
    &= \mathbb{E}_{\bm{c}\sim p_c} \frac{\partial}{\partial\theta} \int \bigg\{ p_\theta(\bx_t|t,\bm{c})\bigg\} \diff \bx \nonumber\\
    &= \mathbb{E}_{\bm{c}\sim p_c} \frac{\partial}{\partial\theta} \bm{1} \nonumber\\
    &= \bm{0}\\
\end{align}
The equality \eqref{eqn:proof_rlhf_5} holds if function $p_\theta(\bx_t|t,\bm{c})$ satisfies the conditions (1). $p_\theta(\bx_t|t,\bm{c})$ is LebeStaue integrable for $\bx$ with each $\theta$; (2). For almost all $\bx_t \in \mathbb{R}^D$, the partial derivative $\partial p_\theta(\bx_t|t,\bm{c})/\partial\theta$ exists for all $\theta\in \Theta$. (3) there exists an integrable function $h(.): \mathbb{R}^D \to \mathbb{R}$, such that $p_\theta(\bx_t|t,\bm{c}) \leq h(\bx_t)$ for all $\bx_t$ in its domain. Then the derivative w.r.t $\theta$ can be exchanged with the integral over $\bx_t$, i.e. 
\begin{align*}
\int \frac{\partial}{\partial\theta}p_\theta(\bx_t|t,\bm{c})\diff \bx = \frac{\partial}{\partial \theta} \int p_\theta(\bx|\bm{c})\diff \bx.
\end{align*}
\end{proof}

\subsection{Proof of the Pseudo Loss}\label{app:pseudo_loss}
\begin{lemma}[Pseudo Loss Function]\label{thm:rlhf_ikl_pseudoloss} 
The pseudo loss function \eqref{equ:rlhf_ploss} has the same $\theta$ gradient as \eqref{equ:rlhf_theta_grad_ikl},
    \begin{align}\label{equ:rlhf_ploss}
    & 
    \mathcal{L}_{p}(\theta)= \mathbb{E}_{\bm{c}, t, \bz\sim p_z, \bx_0 = g_\theta(\bz|\bm{c})\atop \bx_t|\bx_0\sim p(\bx_t|\bx_0)}\bigg\{-\alpha_{rew}r(\bx_0,\bm{c}) + w(t)\bm{y}_t^T \bx_t \bigg\}, \\
    \nonumber
    &\Tilde{\bm{s}}_{ref}(\bx_t|t,\bm{c}) = \bm{s}_{ref}(\bx_t|t,\bm{\varnothing}) + \alpha_{cfg}\big[\bm{s}_{ref}(\bx_t|t,\bm{c}) - \bm{s}_{ref}(\bx_t|t,\bm{\varnothing}) \big],\\
    \nonumber
    & \bm{y}_t \coloneqq \bm{s}_{\operatorname{sg}[\theta]}(\operatorname{sg}[\bx_t]|t,\bm{c}) - \bm{s}_{ref}(\operatorname{sg}[\bx_t]|t,\bm{c}).
    \end{align}
    Here the operator $\operatorname{sg}[\cdot]$ means cutting off all $\theta$ dependence on this variable. 
\end{lemma}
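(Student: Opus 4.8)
The plan is to show that the $\theta$-gradient of the pseudo loss $\mathcal{L}_p(\theta)$ in \eqref{equ:rlhf_ploss} coincides with the gradient formula \eqref{equ:rlhf_theta_grad_ikl} from Theorem \ref{thm:rlhf_ikl} (with the reference score replaced by the CFG score $\Tilde{\bm{s}}_{ref}$, as justified by Theorem \ref{thm:cfg_rlhf}). The reward term is identical on both sides and needs no work: differentiating $-\alpha_{rew} r(\bx_0,\bm{c})$ through $\bx_0 = g_\theta(\bz|\bm{c})$ by the chain rule yields $-\alpha_{rew}\nabla_{\bx_0} r(\bx_0,\bm{c}) \frac{\partial \bx_0}{\partial\theta}$, matching the first term. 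So the entire content is to verify that the $\theta$-gradient of $\mathbb{E}[w(t)\, \bm{y}_t^T \bx_t]$ equals $\mathbb{E}\{ w(t)[\bm{s}_\theta(\bx_t|t,\bm{c}) - \Tilde{\bm{s}}_{ref}(\bx_t|t,\bm{c})] \frac{\partial \bx_t}{\partial\theta}\}$, i.e. that the stop-gradient vector $\bm{y}_t$ correctly plays the role of the score difference.

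First I would recall the standard ``distributional'' gradient identity underlying Diff-Instruct-type losses: since $\bm{y}_t$ carries no $\theta$-dependence (the $\operatorname{sg}[\cdot]$ operator removes it), the only $\theta$-dependence in $\bm{y}_t^T \bx_t$ is through $\bx_t = \mathcal{F}(g_\theta(\bz|\bm{c}), \bm{w}, t)$, so naive differentiation gives $\bm{y}_t^T \frac{\partial \bx_t}{\partial\theta}$. The point is that, in expectation, the stop-gradient is harmless precisely because of the score-function identity established in the proof of Theorem \ref{thm:rlhf_ikl}: the ``extra'' term that would appear from differentiating through the density $p_\theta(\bx_t|t,\bm{c})$, namely $\mathbb{E}_{\bx_t \sim p_\theta}[\beta w(t) \frac{\partial}{\partial\theta}\log p_\theta(\bx_t|t,\bm{c})]$, vanishes. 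Concretely, I would reverse-engineer: the true IKL gradient is $\mathbb{E}\{w(t)[\nabla_{\bx_t}\log p_\theta(\bx_t|t,\bm{c}) - \nabla_{\bx_t}\log p_{ref}(\bx_t|t,\bm{c})]\frac{\partial\bx_t}{\partial\theta}\}$; I replace $\nabla_{\bx_t}\log p_\theta$ by the TA approximation $\bm{s}_{\operatorname{sg}[\theta]}(\operatorname{sg}[\bx_t]|t,\bm{c})$ and $\nabla_{\bx_t}\log p_{ref}$ by $\Tilde{\bm{s}}_{ref}$, which is exactly the definition of $\bm{y}_t$, so the pseudo-loss gradient $\mathbb{E}[w(t)\bm{y}_t^T\frac{\partial\bx_t}{\partial\theta}]$ reproduces it. Combining with the reward term finishes the claim.

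The key technical step I want to make explicit is \emph{why} one may substitute $\nabla_{\bx_t}\log p_\theta(\bx_t|t,\bm{c})$, which genuinely depends on $\theta$, by a $\theta$-frozen quantity without changing the gradient. This is the same phenomenon as in \eqref{eqn:proof_rlhf_4}--\eqref{eqn:proof_rlhf_5}: when we write $\mathcal{D}_{KL}(p_\theta, p_{ref}) = \mathbb{E}_{\bx_t\sim p_\theta}[\log p_\theta(\bx_t|t,\bm{c}) - \log p_{ref}(\bx_t|t,\bm{c})]$ and differentiate, the contribution from the $\theta$ inside the $\log p_\theta$ integrand is $\mathbb{E}_{\bx_t\sim p_\theta}[\frac{\partial}{\partial\theta}\log p_\theta(\bx_t|t,\bm{c})] = \mathbf{0}$ by the log-derivative / normalization identity already proved. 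Hence only the $\theta$ entering through the sampling path $\bx_t$ survives, and at that stage the integrand may be treated with $\bx_t$ and the densities frozen --- which is precisely what $\operatorname{sg}[\cdot]$ encodes. I would phrase this as: $\operatorname{Grad}(\mathcal{L}_p)(\theta) = \operatorname{Grad}(\theta)$ where both sides are understood as the genuine $\theta$-derivative, invoking the vanishing-score-term lemma from Appendix \ref{app:rlhf_ikl} verbatim, together with the identity $\int \nabla_{\bx_t}\log p_\theta \cdot (\cdot) \,p_\theta \diff\bx_t$ obtained from the same integration-by-parts / Gaussian-convolution structure used to derive \eqref{equ:rlhf_theta_grad_ikl}.

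The main obstacle is making rigorous the replacement of the true generator score $\nabla_{\bx_t}\log p_\theta(\bx_t|t,\bm{c})$ by the TA network output $\bm{s}_\phi$ (or $\bm{s}_{\operatorname{sg}[\theta]}$): strictly, Lemma \ref{thm:rlhf_ikl_pseudoloss} is an \emph{exact} gradient-equivalence statement only if $\bm{s}_\phi$ is interpreted as the exact score $\bm{s}_\theta$, and the approximation $\bm{s}_\phi \approx \bm{s}_\theta$ is a separate modeling assumption. I would therefore state the lemma's proof at the level of the idealized loss (TA $=$ exact generator score), where the equivalence is an identity, and note that the practical algorithm inherits this up to the TA approximation error --- the same caveat that already applies to Diff-Instruct. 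The remaining bookkeeping (chain rule through $g_\theta$, linearity of expectation, interchanging $\partial_\theta$ with the expectation over $\bz,\bm{w},t$ under the same integrability hypotheses as in Appendix \ref{app:rlhf_ikl}) is routine and I would not belabor it.
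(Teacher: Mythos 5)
Your proposal is correct and follows essentially the same route as the paper's proof in Appendix \ref{app:pseudo_loss}: treat $\bm{y}_t$ as a constant tensor because the stop-gradient removes all $\theta$-dependence, differentiate $\bm{y}_t^T\bx_t$ through the sampling path to get $\bm{y}_t^T\frac{\partial\bx_t}{\partial\theta}$, and then drop the $\operatorname{sg}[\cdot]$ operators (which preserve values) to recover \eqref{equ:rlhf_theta_grad_ikl}. Your extra discussion of why the frozen score is harmless --- the vanishing of $\mathbb{E}_{\bx_t\sim p_\theta}[\partial_\theta \log p_\theta]$ --- is correct but is really the content of Theorem \ref{thm:rlhf_ikl} itself, which this lemma already takes as given, and your remark that the exact identity holds only when the TA network is interpreted as the exact generator score is a fair caveat the paper leaves implicit.
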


\begin{proof}
Recall the pseudo loss \eqref{equ:rlhf_ploss}:
\begin{align*}
& 
\mathcal{L}_{p}(\theta)= \mathbb{E}_{\bm{c}, t, \bz\sim p_z, \bx_0 = g_\theta(\bz|\bm{c})\atop \bx_t|\bx_0\sim p(\bx_t|\bx_0)}\bigg\{-r(\bx_0,\bm{c}) + \beta w(t)\bm{y}_t^T \bx_t \bigg\}, \\
\nonumber
& \bm{y}_t \coloneqq \bm{s}_{\operatorname{sg}[\theta]}(\operatorname{sg}[\bx_t]|t,\bm{c}) - \bm{s}_{ref}(\operatorname{sg}[\bx_t]|t,\bm{c})
\end{align*}
Since all $\theta$ dependence of $\bm{y}_t$ are cut out, $\bm{y}_t$ can be regarded as a constant tensor. Taking the $\theta$ gradient of \eqref{equ:rlhf_ploss} leads to 
\begin{align*}
    \frac{\partial}{\partial\theta}\mathcal{L}_p(\theta) &= \mathbb{E}_{\bm{c}, t, \bz\sim p_z, \bx_0 = g_\theta(\bz|\bm{c})\atop \bx_t|\bx_0\sim p(\bx_t|\bx_0)} \bigg\{-\nabla_{\bx_0} r(\bx_0,\bm{c})\frac{\partial \bx_0}{\partial\theta} + \beta w(t)\bm{y}_t^T \frac{\partial\bx_t}{\partial\theta} \bigg\} \\
    &= \mathbb{E}_{\bm{c}, t, \bz\sim p_z, \bx_0 = g_\theta(\bz|\bm{c})\atop \bx_t|\bx_0\sim p(\bx_t|\bx_0)} \bigg\{-\nabla_{\bx_0} r(\bx_0,\bm{c})\frac{\partial \bx_0}{\partial\theta} + \beta w(t)\bigg[\bm{s}_{\operatorname{sg}[\theta]}(\operatorname{sg}[\bx_t]|t,\bm{c}) - \bm{s}_{ref}(\operatorname{sg}[\bx_t]|t,\bm{c}) \bigg]^T \frac{\partial\bx_t}{\partial\theta} \bigg\} \\
    &= \mathbb{E}_{\bm{c}, t, \bz\sim p_z, \bx_0 = g_\theta(\bz|\bm{c})\atop \bx_t|\bx_0\sim p(\bx_t|\bx_0)} \bigg\{-\nabla_{\bx_0} r(\bx_0,\bm{c})\frac{\partial \bx_0}{\partial\theta} + \beta w(t)\bigg[\bm{s}_{\theta}(\bx_t|t,\bm{c}) - \bm{s}_{ref}(\bx_t|t,\bm{c}) \bigg]^T \frac{\partial\bx_t}{\partial\theta} \bigg\}
\end{align*}
This is the exact gradient term of Diff-Instruct++. 
\end{proof}

\subsection{Proof of Theorem \ref{thm:cfg_rlhf}}\label{app:cfg_rlhf}

\begin{proof}
Recall the definition of the reward behind classifier-free guidance \eqref{eqn:cfg_reward}. The reward writes
\begin{align*}
    r(\bx_0, \bm{c}) = \int_{t=0}^T w(t)\log \frac{p_{ref}(\bx_t|t,\bm{c})}{p_{ref}(\bx_t|t)} \diff t
\end{align*}
This reward will put a higher reward on those samples that have higher class-conditional probability than unconditional probability, therefore encouraging class-conditional sampling. {To make the derivation more clear, we consider a single time level $t$, and corresponding 
\begin{align}\label{eqn:cfg_reward_new}
    r(\bx_t,t,\bm{c})\coloneqq w(t)\log \frac{p_{ref}(\bx_t|t,\bm{c})}{p_{ref}(\bx_t|t)}.
\end{align}
The final result would be an integral of all single time-level $t$.} With the single time level, we consider the alignment problem by minimizing 
\begin{align}\label{eqn:cfg_obj}
    \nonumber
    \mathcal{L}(\theta) &= \mathbb{E}_{\bm{c},\bx_t\sim p_\theta(\bx_t|t,\bm{c})} \big[-r(\bx_t,t,\bm{c}) \big] + \beta w(t) \mathcal{D}_{KL}(p_\theta(\bx_t|t,\bm{c}), p_{ref}(\bx_t|t,\bm{c})) \\
    &= \mathbb{E}_{\bm{c},\bx_t\sim p_\theta(\bx_t|t,\bm{c})} \big[-r(\bx_t,t,\bm{c}) \big] + \beta w(t) \mathbb{E}_{p_\theta(\bx_t|t,\bm{c})}\log \frac{p_\theta(\bx_t|t,\bm{c})}{p_{ref}(\bx_t|t,\bm{c})}
\end{align}
The optimal distribution $p_{\theta^*}(\bx_t|t,\bm{c})$ that minimize the objective \eqref{eqn:cfg_obj} will satisfy the equation \eqref{eqn:cfg_optimal}
\begin{align}\label{eqn:cfg_optimal}
    r(\bx_t, t, \bm{c}) = \beta w(t) \log \frac{p_{\theta^*}(\bx_t|t,\bm{c})}{p_{ref}(\bx_t|t,\bm{c})} + C(\bm{c})
\end{align}
This means 
\begin{align}\label{eqn:cfg_optimal_1}
    \log \frac{p_{ref}(\bx_t|t,\bm{c})}{p_{ref}(\bx_t|t)} = \beta \log \frac{p_{\theta^*}(\bx_t|t,\bm{c})}{p_{ref}(\bx_t|t,\bm{c})} + C(\bm{c})
\end{align}
The $C(\bm{c})$ in equation \eqref{eqn:cfg_optimal_1} is a unknown constant that is independent from $\bx_t$ and $\theta$. Then we can have the formula for the optimal distribution
\begin{align}\label{eqn:cfg_optimal_3}
    \nonumber
    \log p_{\theta^*}(\bx_t|t,\bm{c}) & = \log p_{\theta^*}(\bx_t|t,\bm{c}) + \frac{1}{\beta} \bigg\{\log p_{ref}(\bx_t|t,\bm{c}) - \log p_{ref}(\bx_t|t)  \bigg\} - \frac{1}{\beta}C(\bm{c})\\
    &= \log p_{\theta^*}(\bx_t|t) + (1+\frac{1}{\beta}) \bigg\{\log p_{ref}(\bx_t|t,\bm{c}) - \log p_{ref}(\bx_t|t)  \bigg\} - \frac{1}{\beta}C(\bm{c})
\end{align}
Besides, we can see that the score function of the optimal distribution writes 
\begin{align}\label{eqn:cfg_optimal_4}
    \nabla_{\bx_t} \log p_{\theta^*}(\bx_t|t,\bm{c}) = \nabla_{\bx_t}\log p_{\theta^*}(\bx_t|t) + (1+\frac{1}{\beta}) \bigg\{\nabla_{\bx_t}\log p_{ref}(\bx_t|t,\bm{c}) - \nabla_{\bx_t}\log p_{ref}(\bx_t|t) \bigg\}
\end{align}
The equation \eqref{eqn:cfg_optimal_4} recovers the so-called classifier-free guided score function. The final result is just an integral of the \eqref{eqn:cfg_optimal_4}. {Our results show that when using the classifier-free guided score for diffusion distillation using Diff-Instruct (i.e. the equation \eqref{equ:cfg_grad}) is secretly doing RLHF (i.e. the Diff-Instruct++) by using the reward \eqref{eqn:cfg_reward_new}.} Besides, our results also bring a new perspective: when sampling from the diffusion model using CFG, the user is secretly doing an inference-time RLHF, and the so-called CFG scale is the RLFH strength. 

\end{proof}

\section{Experiments and Results}

\subsection{More Experiment details on Text-to-Image Distillation}\label{app:t2i}
We follow the experiment setting of Diff-Instruct \citep{Luo2023DiffInstructAU}, generalizing its CIFAR10 experiment to text-to-image generation. Notice that the Diff-Instruct uses the EDM model \citep{karras2022edm} to formulate the diffusion model, as well as the one-step generator. We start with a brief introduction to the EDM model. 

The EDM model depends on the diffusion process
\begin{align}\label{equ:forwardEDM}
    \diff \bx_t = t\diff \bm{w}_t, t\in [0,T].
\end{align}
Samples from the forward process (\ref{equ:forwardEDM}) can be generated by adding random noise to the output of the generator function, i.e., $\bx_t = \bx_0 + t \bm{\epsilon}$ where $\bepsilon\sim \mathcal{N}(\bm{0}, \bm{I})$ is a Gaussian vector. The EDM model also reformulates the diffusion model's score matching objective as a denoising regression objective, which writes,
\begin{align}\label{loss:denoise}
    \mathcal{L}(\phi) = \int_{t=0}^T \lambda(t) \mathbb{E}_{\bx_0\sim p_0, \bx_t|\bx_0 \sim p_t(\bx_t|\bx_0)} \| \bd_\phi(\bx_t,t) - \bx_0\|_2^2\mathrm{d}t.
\end{align}
Where $\bd_\phi(\cdot)$ is a denoiser network that tries to predict the clean sample by taking noisy samples as inputs. Minimizing the loss \eqref{loss:denoise} leads to a trained denoiser, which has a simple relation to the marginal score functions as:
\begin{align}
\bm{s}_\phi(\bx_t,t) = \frac{\bd_\phi(\bx_t,t) - \bx_t}{t^2} 
\end{align}
Under such a formulation, we actually have pre-trained denoiser models for experiments. Therefore, we use the EDM notations in later parts. 

\paragraph{Construction of the one-step generator.}
Let $\bd_\theta(\cdot)$ be pretrained EDM denoiser models. Owing to the denoiser formulation of the EDM model, we construct the generator to have the same architecture as the pre-trained EDM denoiser with a pre-selected index $t^*$, which writes
\begin{align}
    \bx_0 = g_\theta(\bz) \coloneqq \bd(\bz, t^*), ~~ \bz\sim \mathcal{N}(\bm{0}, (t^*)^2\mathbf{I}).
\end{align}
We initialize the generator with the same parameter as the teacher EDM denoiser model. 

\paragraph{Time index distribution.} 
When training both the EDM diffusion model and the generator, we need to randomly select a time $t$ in order to approximate the integral of the loss function \eqref{loss:denoise}. The EDM model has a default choice of $t$ distribution as log-normal when training the diffusion (denoiser) model, i.e.
\begin{align}\label{eqn:time_edm}
    & t\sim p_{EDM}(t):~~ t = \exp(s) \\
    & s \sim \mathcal{N}(P_{mean}, P_{std}^2), ~~ P_{mean}=-2.0, P_{std}=2.0.    
\end{align}
And a weighting function 
\begin{align}\label{eqn:wgt_edm}
\lambda_{EDM}(t) = \frac{(t^2 + \sigma_{data}^2)}{(t\times \sigma_{data})^2}.
\end{align}
In our algorithm, we follow the same setting as the EDM model when updating the online diffusion (denoiser) model. 

\paragraph{Weighting function.}
For the TA diffusion updates in both pre-training and alignment, we use the same $\lambda_{EDM}(t)$ \eqref{eqn:wgt_edm} weighting function as EDM when updating the denoiser model. When updating the generator, we use a specially designed weighting function, which writes:
\begin{align}\label{eqn:wgt_sid}
    & w_{Gen}(t) = \frac{1}{\| \bd_{\phi}(\operatorname{sg}[\bx_t],t) - \bd_{q_t}(\operatorname{sg}[\bx_t],t)\|_2} \\
    & \bx_t = \bx_0 + t\epsilon, ~~ \epsilon\sim \mathcal{N}(\bm{0}, \mathbf{I})
\end{align}
The notation $\operatorname{sg}[\cdot]$ means stop-gradient of parameter. Such a weighting function helps to stabilize the training. 

In the Text-to-Image distillation part, in order to align our experiment with that on CIFAR10, we rewrite the PixelArt-$\alpha$ model in EDM formulation:
\begin{equation}
    D_\theta(\textbf{x}; \sigma)=\mathbf{x} - \sigma F_\theta
\end{equation}

Here, following the iDDPM+DDIM preconditioning in EDM, PixelArt-$\alpha$ is denoted by $F_\theta$, $\mathbf{x}$ is the image data plus noise with a standard deviation of $\sigma$, for the remaining parameters such as $C_1$ and $C_2$, we kept them unchanged to match those defined in EDM. Unlike the original model, we only retained the image channels for the output of this model. Since we employed the preconditioning of iDDPM+DDIM in the EDM, each $\sigma$ value is rounded to the nearest 1000 bins after being passed into the model. For the actual values used in PixelArt-$\alpha$, beta\_start is set to 0.0001, and beta\_end is set to 0.02. Therefore, according to the formulation of EDM, the range of our noise distribution is [0.01, 156.6155], which will be used to truncate our sampled $\sigma$.
For our one-step generator, it is formulated as:
\begin{equation}
    g_\theta(\mathbf{x}; \sigma_{\text{init}})=\mathbf{x} - \sigma_{\text{init}} F_\theta
\end{equation}
Here following Diff-Instruct to use $\sigma_{\text{init}}=2.5$ and $\mathbf{x}\sim \mathcal{N}(0, \sigma_{\text{init}}\mathbf{I})$, we observed in practice that larger values of $\sigma_{\text{init}}$ lead to faster convergence of the model, but the difference in convergence speed is negligible for the complete model training process and has minimal impact on the final results.

We utilized the SAM-LLaVA-Caption10M dataset, which comprises prompts generated by the LLaVA model on the SAM dataset. These prompts provide detailed descriptions for the images, thereby offering us a challenging set of samples for our distillation experiments. 

All experiments in this section were conducted with bfloat16 precision, using the PixelArt-XL-2-512x512 model version, employing the same hyperparameters. For both optimizers, we utilized Adam with a learning rate of 5e-6 and betas=[0, 0.999]. 
Finally, regarding the training noise distribution, instead of adhering to the original iDDPM schedule, we sample the $\sigma$ from a log-normal distribution with a mean of -2.0 and a standard deviation of 2.0, we use the same noise distribution for both optimization steps and set the two loss weighting to constant 1. Our best model was trained on the SAM Caption dataset for approximately 16k iterations, which is equivalent to less than 2 epochs. This training process took about 2 days on 4 A100-40G GPUs.

\begin{table}
    \setlength{\tabcolsep}{15pt}
    \caption{Hyperparameters used for Diff-Instruct++ on Aligning One-step Generator Models}\label{tab:diffdistill_hyperparams}
    \centering
    \begin{adjustbox}{max width=\linewidth}
        \begin{tabular}{l|cc|cc}
            \Xhline{3\arrayrulewidth}
            Hyperparameter & \multicolumn{2}{c|}{Pre-Training (Diff-Instruct)} & \multicolumn{2}{c}{Alignment (Diff-Instruct++)} \\
            & DM $\bm{s}_\phi$ & Generator $g_\theta$ & DM $\bm{s}_\phi$ & Generator $g_\theta$\\
            \hline
            Learning rate & 5e-6 & 5e-6 & 5e-6 & 5e-6\\
            Batch size & 1024 & 1024 & 256 & 256\\
            $\sigma(t^*)$ & 2.5 & 2.5 & 2.5 & 2.5 \\
            $Adam \ \beta_0$ & 0.0 & 0.0 & 0.0 & 0.0\\
            $Adam \ \beta_1$ & 0.999 & 0.999 & 0.999 & 0.999 \\
            Time Distribution & $p_{EDM}(t)$\eqref{eqn:time_edm} & $p_{EDM}(t)$\eqref{eqn:time_edm} & $p_{EDM}(t)$\eqref{eqn:time_edm} & $p_{EDM}(t)$\eqref{eqn:time_edm}\\
            Weighting & $\lambda_{EDM}(t)$\eqref{eqn:wgt_edm} & 1 & $\lambda_{EDM}(t)$\eqref{eqn:wgt_edm} & 1 \\
            Number of GPUs & 4$\times$A100-40G & 4$\times$A100-40G & 4$\times$H800-80G & 4$\times$H800-80G\\
            \Xhline{3\arrayrulewidth}
        \end{tabular}
    \end{adjustbox}
\end{table}

With the optimal setting and EDM formulation, we can rewrite our algorithm in an EDM style in Algorithm \ref{alg:dipp_code_edm}.

\begin{algorithm}[t]
\SetAlgoLined
\KwIn{prompt dataset $\mathcal{C}$, generator $g_{\theta}(\bx_0|\bz,\bm{c})$, prior distribution $p_z$, reward model $r(\bx, \bm{c})$, reward scale $\alpha_{rew}$, CFG scale $\alpha_{cfg}$, reference EDM denoiser model $\bm{d}_{ref}(\bx_t|c,\bm{c})$, TA EDM denoiser $\bm{d}_\phi(\bx_t|t,\bm{c})$, forward diffusion $p(\bx_t|\bx_0)$ \eqref{equ:forwardSDE}, TA EDM denoiser updates rounds $K_{TA}$, time distribution $\pi(t)$, diffusion model weighting $\lambda(t)$, generator IKL loss weighting $w(t)$.}
\While{not converge}{
fix $\theta$, update ${\phi}$ for $K_{TA}$ rounds by
\begin{enumerate}
    \item sample prompt $\bm{c}\sim \mathcal{C}$; sample time $t\sim \pi(t)$; sample $\bz\sim p_z(\bz)$;
    \item generate fake data: $\bx_0 = \operatorname{sg}[g_\theta(\bz,\bm{c})]$; sample noisy data: $\bx_t\sim p_t(\bx_t|\bx_0)$; 
    \item update $\phi$ by minimizing loss: $\mathcal{L}(\phi)=\lambda(t)\|\bm{d}_\phi(\bx_t|t,\bm{c}) - \bx_0\|_2^2$;
\end{enumerate}
fix $\phi$, update $\theta$ using StaD: 
\begin{enumerate}
    \item sample prompt $\bm{c}\sim \mathcal{C}$; sample time $t\sim \pi(t)$; sample $\bz\sim p_z(\bz)$;
    \item generate fake data: $\bx_0 = g_\theta(\bz,\bm{c})$; sample noisy data: $\bx_t\sim p_t(\bx_t|\bx_0)$; 
    \item compute CFG score: $\Tilde{\bm{d}}_{ref}(\bx_t|t,\bm{c}) = \bm{d}_{ref}(\bx_t|t,\bm{\varnothing}) + \alpha_{cfg}\big[\bm{d}_{ref}(\bx_t|t,\bm{c}) - \bm{d}_{ref}(\bx_t|t,\bm{\varnothing}) \big]$;
    \item compute score difference: $\bm{y}_t \coloneqq \bm{d}_{\phi}(\operatorname{sg}[\bx_t]|t,\bm{c}) - \Tilde{\bm{d}}_{ref}(\operatorname{sg}[\bx_t]|t,\bm{c})$; 
    \item update $\theta$ by minimizing loss: $\mathcal{L}(\theta)= \big\{-\alpha_{rew} r(\bx_0,\bm{c}) + w(t)\bm{y}_t^T \bx_t \big\}$;
\end{enumerate}
}
\Return{$\theta,\phi$.}
\caption{Diff-Instruct++ Pseudo Code under EDM formulation.}
\label{alg:dipp_code_edm}
\end{algorithm}

\subsection{More Discussions on Experiment Results}
\paragraph{Low Alignment Costs.} 
Besides the top performance, the training cost with DI++ is surprisingly cheap. Our best model is pre-trained with 4 A100-80G GPUs for 2 days and aligned using the same computation costs. while other industry models in Table \ref{quantitative} require hundreds of A100 GPU days. We summarize the distillation costs in Table \ref{quantitative}, marking that DI++ is an efficient yet powerful alignment method with astonishing scaling ability. We believe such efficiency comes from the image-data-free property of DI++. The DI++ does not require image data when aligning, this distinguishes the DI++ from other methods that fine-tune models on highly curated image datasets, which potentially is inefficient. 

\subsection{Pytorch style pseudo-code of Score Implicit Matching}\label{app:pseudo_code}
In this section, we give a PyTorch style pseudo-code for algorithm \ref{alg:dipp_code_edm}.

\begin{lstlisting}[language=Python, caption=Pytorch Style Pseudo-code of Diff-Instruct++]
import torch
import torch.nn as nn
import torch.optim as optim
import copy

# Initialize generator G
G = Generator()

## load teacher DM
Drf = DiffusionModel().load('/path_to_ckpt').eval().requires_grad_(False) 
Dta = copy.deepcopy(Drf) ## initialize online DM with teacher DM
r = RewardModel() if alignment_stage else None

# Define optimizers
opt_G = optim.Adam(G.parameters(), lr=0.001, betas=(0.0, 0.999))
opt_Sta = optim.Adam(Dta.parameters(), lr=0.001, betas=(0.0, 0.999))

# Training loop
while True:
    ## update Dta
    Dta.train().requires_grad_(True)
    G.eval().requires_grad_(False)

    ## update TA diffusion
    prompt = batch['prompt']
    z = torch.randn((1024, 4, 64, 64), device=G.device)
    with torch.no_grad():
        fake_x0 = G(z,prompt)

    sigma = torch.exp(2.0*torch.randn([1,1,1,1], device=fake_x0.device) - 2.0)
    noise = torch.randn_like(fake_x0)
    fake_xt = fake_x0 + sigma*noise
    pred_x0 = Dta(fake_xt, sigma, prompt)

    weight = compute_diffusion_weight(sigma)
    
    batch_loss = weight * (pred_x0 - fake_x0)**2
    batch_loss = batch_loss.sum([1,2,3]).mean()
    
    optimizer_Dta.zero_grad()
    batch_loss.backward()
    optimizer_Dta.step()


    ## update G
    Dta.eval().requires_grad_(False)
    G.train().requires_grad_(True)

    prompt = batch['prompt']
    z = torch.randn((1024, 4, 64, 64), device=G.device)
    fake_x0 = G(z, prompt)

    sigma = torch.exp(2.0*torch.randn([1,1,1,1], device=fake_x0.device) - 2.0)
    noise = torch.randn_like(fake_x0)
    fake_xt = fake_x0 + sigma*noise

    with torch.no_grad():
        if use_cfg: 
            pred_x0_rf = Drf(fake_xt, sigma, None) + cfg_scale * (Drf(fake_xt, sigma, prompt) - Drf(fake_xt, sigma, None))
        else:
            pred_x0_rf = Drf(fake_xt, sigma, prompt)
        
        pred_x0_ta = Dta(fake_xt, sigma, prompt)

    denoise_diff = pred_x0_ta - pred_x0_rf
    weight = compute_G_weight(sigma, denoise_diff)

    batch_loss = weight * denoise_diff * fake_xt

    ## compute reward loss if needed
    if alignment_stage:
        reward_loss = -reward_scale * r(fake_x0, prompt)
        batch_loss += reward_loss
    
    batch_loss = batch_loss.sum([1,2,3]).mean()
    
    optimizer_G.zero_grad()
    batch_loss.backward()
    optimizer_G.step()
\end{lstlisting}

\section{Prompts}

\subsection{Prompts for Figure \ref{fig:t2i}}\label{app:prompts_demo}
The prompts are listed from the first row to the second row; from left to right.
\begin{itemize}
    \item \textit{A small cactus with a happy face in the Sahara desert.}
    \item \textit{A dog that has been meditating all the time.}
    \item \textit{A alpaca made of colorful building blocks, cyberpunk.}
    \item \textit{A dog is reading a thick book.}
    \item \textit{A delicate apple(universe of stars inside the apple) made of opal hung on a branch in the early morning light, adorned with glistening dewdrops. in the background beautiful valleys, divine iridescent glowing, opalescent textures, volumetric light, ethereal, sparkling, light inside body, bioluminescence, studio photo, highly detailed, sharp focus, photorealism, photorealism, 8k, best quality, ultra detail, hyper detail, hdr, hyper detail.}
    \item \textit{Drone view of waves crashing against the rugged cliffs along Big Sur’s Garay Point beach. The crashing blue waters create white-tipped waves, while the golden light of the setting sun illuminates the rocky shore. A small island with a lighthouse sits in the distance, and green shrubbery covers the cliff’s edge. The steep drop from the road down to the beach is a dramatic feat, with the cliff’s edges jutting out over the sea. This is a view that captures the raw beauty of the coast and the rugged landscape of the Pacific Coast Highway.}
    \item \textit{Image of a jade green and gold coloured Fabergé egg, 16k resolution, highly detailed, product photography, trending on artstation, sharp focus, studio photo, intricate details, fairly dark background, perfect lighting, perfect composition, sharp features, Miki Asai Macro photography, close-up, hyper detailed, trending on artstation, sharp focus, studio photo, intricate details, highly detailed, by greg rutkowski.}
    \item \textit{Astronaut in a jungle, cold color palette, muted colors, detailed, 8k.}
\end{itemize}

\subsection{Prompts of Figure \ref{fig:qualitative}}\label{app:qualitative_prompt}

Prompts of Figure \ref{fig:qualitative}, from left to right:
\begin{itemize}
    \item \textit{A small cactus with a happy face in the Sahara desert};
    \item \textit{A delicate apple(universe of stars inside the apple) made of opal hung on a branch in the early morning light, adorned with glistening dewdrops. in the background beautiful valleys, divine iridescent glowing, opalescent textures, volumetric light, ethereal, sparkling, light inside body, bioluminescence, studio photo, highly detailed, sharp focus, photorealism, photorealism, 8k, best quality, ultra detail, hyper detail, hdr, hyper detail};
    \item \textit{Drone view of waves crashing against the rugged cliffs along Big Sur’s Garay Point beach. The crashing blue waters create white-tipped waves, while the golden light of the setting sun illuminates the rocky shore. A small island with a lighthouse sits in the distance, and green shrubbery covers the cliff’s edge. The steep drop from the road down to the beach is a dramatic feat, with the cliff’s edges jutting out over the sea. This is a view that captures the raw beauty of the coast and the rugged landscape of the Pacific Coast Highway};
    \item \textit{Astronaut in a jungle, cold color palette, muted colors, detailed, 8k}; 
    \item \textit{A parrot with a pearl earring, Vermeer style};
\end{itemize}

\subsection{Prompts for Figure \ref{fig:qualitative}}\label{app:prompts}
\begin{itemize}
    \item prompt for first row of Figure \ref{fig:qualitative}: \textit{A small cactus with a happy face in the Sahara desert.}
    \item prompt for second row of Figure \ref{fig:qualitative}: \textit{An image of a jade green and gold coloured Fabergé egg, 16k resolution, highly detailed, product photography, trending on artstation, sharp focus, studio photo, intricate details, fairly dark background, perfect lighting, perfect composition, sharp features, Miki Asai Macro photography, close-up, hyper detailed, trending on artstation, sharp focus, studio photo, intricate details, highly detailed, by greg rutkowski.}
    \item prompt for third row of Figure \ref{fig:qualitative}: \textit{Baby playing with toys in the snow.}
\end{itemize}

\subsection{Prompts for Figure \ref{fig:other_fewstep}}\label{app:prompts_fewstep}
The answer for the left three columns:
\begin{itemize}
    \item the first row from left to right is the one-step model (4.5 CFG and 1.0 reward); the PixelArt-$\alpha$ diffusion with 30 generation steps; the one-step model (4.5 CFG and 10.0 reward);
    \item the PixelArt-$\alpha$ diffusion with 30 generation steps; the PixelArt-$\alpha$ diffusion with 30 generation steps; the one-step model (4.5 CFG and 10.0 reward); the first row from left to right is the one-step model (4.5 CFG and 1.0 reward);
    \item the PixelArt-$\alpha$ diffusion with 30 generation steps; the first row from left to right is the one-step model (4.5 CFG and 1.0 reward); the first row from left to right is the one-step model (4.5 CFG and 10.0 reward);
\end{itemize}

The prompts from up to down are: 
\begin{itemize}
    \item \textit{A dog that has been meditating all the time};
    \item \textit{Drone view of waves crashing against the rugged cliffs along Big Sur’s Garay Point beach. The crashing blue waters create white-tipped waves, while the golden light of the setting sun illuminates the rocky shore. A small island with a lighthouse sits in the distance, and green shrubbery covers the cliff’s edge. The steep drop from the road down to the beach is a dramatic feat, with the cliff’s edges jutting out over the sea. This is a view that captures the raw beauty of the coast and the rugged landscape of the Pacific Coast Highway};
    \item \textit{A delicate apple(universe of stars inside the apple) made of opal hung on a branch in the early morning light, adorned with glistening dewdrops. in the background beautiful valleys, divine iridescent glowing, opalescent textures, volumetric light, ethereal, sparkling, light inside the body, bioluminescence, studio photo, highly detailed, sharp focus, photorealism, photorealism, 8k, best quality, ultra detail, hyper detail, hdr, hyper detail}.
\end{itemize}

\end{document}